\documentclass{article}

\usepackage{microtype}
\usepackage{graphicx}
\usepackage{comment}
\usepackage{subfigure}
\usepackage{booktabs} 
\usepackage{thm-restate}

\usepackage{xcolor}
\usepackage{hyperref}

\newcommand{\expnum}[2]{#1\mathrm{e}^{-#2}}

\usepackage[preprint]{icml2026}

\usepackage{amsmath}
\usepackage{amssymb}
\usepackage{mathtools}
\usepackage{amsthm}
\usepackage{enumitem}
\usepackage[most]{tcolorbox}

\usepackage[capitalize,noabbrev]{cleveref}

\theoremstyle{plain}
\newtheorem{theorem}{Theorem}[section]
\newtheorem{proposition}[theorem]{Proposition}
\newtheorem{lemma}[theorem]{Lemma}

\newcommand{\R}{\mathbb{R}}

\newcommand{\EE}{\mathbb{E}}

\DeclareMathOperator{\tr}{tr}

\newcommand{\Var}{\mathrm{Var}}
\newcommand{\Cov}{\mathrm{Cov}}
\newcommand{\Denij}{\bigl(1+\tfrac{v_j-v_i}{m\,d_{ij}^{2}}\bigr)^{2}}

\usepackage[textsize=tiny]{todonotes}

\icmltitlerunning{Directional Neural Collapse Explains Few-Shot Transfer in Self-Supervised Learning}

\begin{document}

\twocolumn[
\icmltitle{Directional Neural Collapse Explains Few-Shot Transfer \\ in Self-Supervised Learning}



\icmlsetsymbol{equal}{*}

\begin{icmlauthorlist}
\icmlauthor{Achleshwar Luthra}{equal,yyy}
\icmlauthor{Yash Salunkhe}{equal,yyy}
\icmlauthor{Tomer Galanti}{equal,yyy}
\end{icmlauthorlist}

\icmlaffiliation{yyy}{Department of Computer Science \& Engineering, Texas A\&M University}

\icmlcorrespondingauthor{Tomer Galanti}{galanti@tamu.edu}

\icmlkeywords{Machine Learning, ICML}

\vskip 0.3in
]



\printAffiliationsAndNotice{\icmlEqualContribution} 

\begin{abstract}
Frozen self-supervised representations often transfer well with only a few labels across many semantic tasks. We argue that a single geometric quantity, \emph{directional} CDNV (decision-axis variance), sits at the core of two favorable behaviors: strong few-shot transfer within a task, and low interference across many tasks. We show that both emerge when variability \emph{along} class-separating directions is small. First, we prove sharp non-asymptotic multiclass generalization bounds for downstream classification whose leading term is the directional CDNV. The bounds include finite-shot corrections that cleanly separate intrinsic decision-axis variability from centroid-estimation error. Second, we link decision-axis collapse to multitask geometry: for independent balanced labelings, small directional CDNV across tasks forces the corresponding decision axes to be nearly orthogonal, helping a single representation support many tasks with minimal interference. Empirically, across SSL objectives, directional CDNV collapses during pretraining even when classical CDNV remains large, and our bounds closely track few-shot error at practical shot sizes. Additionally, on synthetic multitask data, we verify that SSL learns representations whose induced decision axes are nearly orthogonal. The code and project page of the paper are available at [\href{https://dlfundamentals.github.io/directional-neural-collapse/}{project page}].
\end{abstract}

\section{Introduction}

\begin{figure}[t]
    \centering
 \includegraphics[width=0.48\linewidth]{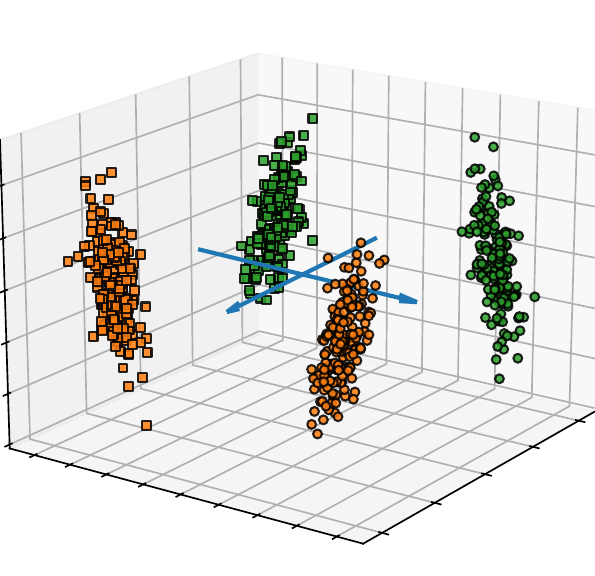} 
\caption{\textbf{Directional collapse and multitask orthogonalization in SSL.} Self-supervised pretraining suppresses within-class variance \emph{along} class-separating directions (small directional CDNV) while leaving substantial variance in orthogonal, task-irrelevant subspaces. When directional CDNV is small for multiple independent labelings, the induced decision axes become nearly orthogonal, enabling a single representation to support many tasks with low interference.}
    \label{fig:multitask}
\end{figure}

Self-supervised learning (SSL)~\cite{balestriero2023cookbookselfsupervisedlearning} has become a standard way to pretrain visual and multimodal representations without labels
.
A striking empirical fact is that frozen SSL features often enable strong few-shot transfer, with only a handful of labeled examples per class, across multiple downstream tasks simultaneously.
Yet we still lack a clean geometric explanation for when and why this behavior occurs.
This paper asks:
\begin{tcolorbox}[colback=blue!5!white, colframe=blue!10!black, arc=1pt]
\centering
\textbf{\em Given a fixed SSL representation, what geometric properties enable effective few-shot adaptation across multiple tasks simultaneously?}
\end{tcolorbox}

A useful point of comparison is supervised training, where representation geometry is better understood. In many supervised classifiers, late-layer features exhibit \emph{neural collapse} (NC)~\cite{doi:10.1073/pnas.2015509117,han2022neural}: within each class, embeddings concentrate near a single mean; across classes, means spread in an approximately simplex-like configuration; and the linear classifier aligns with these directions. This geometry directly \emph{relates} to few-shot success: when within-class dispersion is small relative to between-class separation, simple downstream rules such as nearest-class-centroid (NCC) achieve low error from limited data. Empirically, few-shot transfer and downstream transferability correlate with reduced within-class variability relative to between-class separation, as observed in analyses of few-shot representations and neural-collapse-inspired transferability metrics on the target data~\cite{pmlr-v119-goldblum20a,galanti2022on,10377311,li2024understanding}. 

Empirically, SSL representations do exhibit clustering and separability with respect to semantic labels~\cite{pmlr-v119-chen20j,Caron_2021_ICCV,oquab2024dinov2learningrobustvisual,shaul2023reverse,weng2025clusteringpropertiesselfsupervisedlearning}, even though labels are never used during pretraining. For example, \cite{shaul2023reverse} shows that SSL training induces semantic label-based clustering and increasingly strong nearest-class-centroid (NCC) separability, this effect strengthens during training and in deeper layers and appears simultaneously for multiple labelings in parallel. 
These findings suggest an NC-like phenomenon in SSL, but also highlight a key difference: the geometry that matters for decisions can improve without requiring global within-class collapse in all directions.

SSL, however, is not trained to induce global within-class collapse. Because labels are absent during pretraining, there is no direct pressure to reduce \emph{total} within-class variance. Empirically, SSL embeddings are often strongly \emph{anisotropic}: substantial variance persists in directions that do not affect class decisions (e.g., nuisance or augmentation-induced directions), while the directions that \emph{do} separate classes can be well organized. As a result, global clustering proxies such as the class-distance-normalized-variance (CDNV)~\cite{galanti2022on} can be too coarse and yield pessimistic or misleading predictions of few-shot transfer.

Recent work begins to address this mismatch by measuring variability only \emph{along class-separating directions}~\cite{luthra2025selfsupervisedcontrastivelearningapproximately}. They introduce a \emph{directional} analogue of CDNV that keeps the component of within-class variance that perturbs the decision margin while ignoring variance in orthogonal subspaces. This is the right geometric target for anisotropic SSL representations, and it suggests that few-shot error can be small even when total variance remains large. However, existing bounds based on this quantity are often loose (and sometimes vacuous) at practical shot sizes, and the empirical evidence is limited to a narrow set of settings (e.g., ResNet-50 with SimCLR).

\subsection{Contributions}
We study few-shot transfer from a fixed self-supervised representation through variability \emph{along} class-separating directions. Our main contributions are:
\begin{itemize}[leftmargin=1.25em]
\item \textbf{Sharp few-shot guarantees driven by decision-axis variability.} We prove non-asymptotic multiclass error bounds for nearest-class-centroid (NCC) and linear probing (LP) classification whose leading term is governed by \emph{directional} CDNV (decision-axis variance), rather than classical CDNV which aggregates variance over all directions (as in~\cite{galanti2022on,galanti2022improved,galanti2023generalizationboundsfewshottransfer}). Our analysis makes the shot dependence explicit through finite-sample centroid-estimation terms and a fourth-moment correction for heavy tails, yielding realistic error estimates across a wide range of shot sizes
(Theorems~\ref{thm:eq-wts} and~\ref{thm:ncc-full-optimized}).

\item \textbf{Decision-axis collapse yields accurate, non-vacuous few-shot certificates.} Across diverse SSL encoders, and objectives, we find that decision-axis variability collapses strongly during SSL training even when total within-class variance remains large, revealing a pervasive \emph{anisotropic} geometry where most variability lies in directions that do not affect discrimination.
In this regime, our bound closely tracks observed few-shot error and is substantially more informative than clustering-based proxies and prior directional bounds that can be loose at practical shot sizes (e.g.,~\cite{luthra2025selfsupervisedcontrastivelearningapproximately}).
On the theory side, we show that the leading coefficient in front of $\tilde V_{ij}$ is \emph{optimal}: in the known-centroid limit, pairwise NCC error is a one-dimensional tail event along the separating axis, and Cantelli's inequality (with its tight two-point extremizer) implies that no distribution-free second-moment bound can improve the factor $4$ (App.~\ref{app:opt_const_4}).

\item \textbf{Multitasks geometry: small decision-axis variance forces near-orthogonality of decision directions.} We prove a structural consequence of simultaneously small decision-axis variability across tasks: for two independent balanced binary labelings, small directional CDNV forces the corresponding decision axes to be nearly orthogonal (Prop.~\ref{prop:near_orth_from_dircdnv_multiclass}).
We complement this with a simple factor model showing how a single representation can support many tasks with small decision-axis variance even when classical CDNV is large, since most within-class energy can concentrate in directions orthogonal to all task-relevant axes (Sec.~\ref{sec:factor_model_cdnv_dircdnv}). Empirically, on controlled synthetic data with independent visual factors (e.g., shape, size, color, pattern), we find that SSL encoders map distinct factors to approximately orthogonal directions, consistent with this multitask prediction.
\end{itemize}

\section{Related Work}\label{sec:related}

{\bf Clustering properties in SSL.\enspace} Although SSL is trained without labels, learned representations often align with semantic categories and support simple downstream rules. Empirically, this occurs even in objectives that do not explicitly cluster representations: contrastive and non-contrastive pretraining can yield features with strong $k$-NN and linear-probe performance and increasingly clear semantic structure in deeper layers and later training~\cite{pmlr-v119-chen20j,Caron_2021_ICCV,shaul2023reverse,weng2025clusteringpropertiesselfsupervisedlearning,oquab2024dinov2learningrobustvisual}. These findings motivate geometric explanations of transfer that characterize how class-conditional distributions are arranged in embedding space, rather than focusing only on instance discrimination.

At the same time, SSL representations are often strongly \emph{anisotropic}: substantial within-class variability can persist in nuisance or augmentation-induced directions that have little effect on discrimination. This behavior has been documented empirically and theoretically in analyses of contrastive and non-contrastive objectives and alignment–uniformity trade-offs~\cite{wang2020understanding,wang2021understanding,chen2021intriguing,NEURIPS2021_27debb43,arora2019theoreticalanalysiscontrastiveunsupervised}, and is also implicit in the design of modern SSL methods that explicitly discourage global collapse by preserving variance and decorrelating features~\cite{zbontar2021barlow,bardes2022vicreg}. As a result, global clustering proxies that aggregate variance across all directions can be too coarse to reliably predict few-shot performance.

{\bf Theoretical analyses of SSL.\enspace}
A growing literature studies why self-supervised learning works. Early analyses of contrastive learning related InfoNCE-like objectives to mutual information~\cite{NEURIPS2019_ddf35421}, while later work noted that strict MI constraints can be overly restrictive~\cite{pmlr-v108-mcallester20a,Tschannen2020On}. Another influential line characterizes representations via \emph{alignment} and \emph{uniformity}~\cite{wang2020understanding,wang2021understanding,chen2021intriguing}. These descriptors are informative, but they do not directly specify how \emph{different} semantic classes are arranged.

This gap motivated analyses of when contrastive objectives recover clusters or latent structure~\cite{arora2019theoreticalanalysiscontrastiveunsupervised,tosh2021contrastive,zimmermann2021contrastive,pmlr-v151-ash22a,NEURIPS2021_2dace78f,NEURIPS2021_27debb43,10.5555/3600270.3602219,pmlr-v162-shen22d,wang2022chaos,pmlr-v162-awasthi22b,pmlr-v162-bao22e}, often under strong assumptions such as conditional independence of augmentations given a latent cluster~\cite{arora2019theoreticalanalysiscontrastiveunsupervised,pmlr-v162-saunshi22a,tosh2021contrastive,pmlr-v162-awasthi22b}. Recent work relaxes aspects of this picture via alternative function-class biases~\cite{haochen2023a} or capacity limitations that discourage splitting semantic clusters under InfoNCE~\cite{pmlr-v195-parulekar23a}. A complementary perspective connects SSL objectives to supervised criteria in simplified settings~\cite{balestriero2024the,luthra2025selfsupervisedcontrastivelearningapproximately,luthra2025alignmentsupervisedselfsupervisedcontrastive}.

Our work is closest to analyses that certify \emph{downstream} performance from frozen representations. Most closely, \citet{luthra2025selfsupervisedcontrastivelearningapproximately} introduce \emph{directional} CDNV, which measures within-class variability only along class-separating axes and yields an anisotropy-aware certificate for NCC. We build on this directional viewpoint in three ways: (i) we give sharper, non-asymptotic multiclass guarantees for both NCC and LP whose leading term is governed by directional CDNV, including explicit finite-shot centroid-estimation corrections and a separate fourth-moment term isolating tail effects, with an optimal leading constant under second-moment information; (ii) we provide broad empirical evidence that decision-axis variability collapses across diverse SSL paradigms (contrastive, non-contrastive, masked prediction, distillation, and multimodal pretraining), suggesting that suppressing variance along discriminative directions is an implicit, method-agnostic outcome even when total within-class variance remains large; and (iii) we connect this anisotropic geometry to multitask structure by showing that if directional CDNV is simultaneously small for independent balanced labelings, then the associated decision directions must be nearly orthogonal, formalizing how a single representation can support many tasks while classical CDNV stays large.

Beyond these strands, theory has examined learning dynamics in simplified models~\cite{NEURIPS2022_7b5c9cc0,10.5555/3648699.3649029,pmlr-v139-wen21c,tian2023understanding}, the role of augmentations~\cite{NEURIPS2020_4c2e5eaa,feigin2025theoreticalcharacterizationoptimaldata}, projection heads~\cite{gupta2022understandingimprovingroleprojection,gui2023unravelingprojectionheadscontrastive,xue2024investigating,ouyang2025projection}, sample complexity~\cite{alon2024optimal,DBLP:conf/icml/YuanWQDZZY22}, and links between contrastive and non-contrastive methods~\cite{wei2021theoretical,balestriero2022contrastive,NEURIPS2021_02e656ad,garrido2023on,shwartzziv2023an}.

{\bf Neural collapse and geometric predictors of transfer.\enspace}
Neural collapse (NC) \cite{doi:10.1073/pnas.2015509117,han2022neural} describes a common end-of-training geometry in supervised classifiers: (i) \emph{within-class collapse}, where embeddings concentrate near class means; (ii) \emph{class separation}, where means form a nearly simplex equiangular tight frame (ETF); and (iii) \emph{alignment} between classifier weights and feature geometry.
NC has motivated geometric predictors of generalization and transfer, and neural-collapse-inspired measures on the \emph{target} dataset have been shown to correlate with transferability across pretrained models~\cite{10377311}.
In few-shot regimes, representation analyses likewise emphasize the role of within-class compactness relative to between-class separation for centroid-based decision rules~\cite{pmlr-v119-goldblum20a,galanti2022on}. Our goal is to extend this NC perspective to SSL in a way that respects anisotropy: rather than requiring global collapse, we characterize an SSL analogue that demands ``collapse'' only along decision-relevant directions, and we derive practical few-shot certificates that match this geometry.

\section{Problem Setup}\label{sec:setting}

The primary goal of self-supervised learning (SSL) is to learn representations that are easily adaptable to downstream tasks. We formalize the conditions under which an SSL-trained representation $f:\mathcal{X}\to\mathbb{R}^d$ supports accurate few-shot classification. We say that $f$ is \emph{good} if, after seeing only a few labeled examples per class, it yields low test error on the downstream task. We consider training an embedding function $f\in\mathcal{F}\subset\{\,f\mid f:\mathcal{X}\to\mathbb{R}^d\,\}$ via self-supervised learning, using an unlabeled dataset $X=\bigcup_{c=1}^{C}\{x_{c,i}\}_{i=1}^{n}\subset\mathcal{X}$ (e.g., images). Our goal is to learn a {\em ``meaningful''} function $f$ that maps samples to $d$-dimensional embedding vectors.

For instance, we can call a representation meaningful if the hidden class labels are recoverable from the samples.
Let $f:\mathcal{X}\to\mathbb{R}^d$ be a fixed representation learned by SSL. 
Fix a subset of classes $\mathcal{C}\subseteq [C]$ with $|\mathcal{C}|=C'$. 
For evaluation, draw $c\sim\mathrm{Unif}(\mathcal{C})$, then draw $x\sim D_c$, and set the label $y:=c$. Let $D_{\mathcal{C}} \;:=\; \tfrac{1}{C'}\sum_{c\in\mathcal{C}} D_c$ denote the balanced mixture over the selected classes. 
Given a classifier $h:\mathbb{R}^d\to \mathcal{C}$, its error on this mixture is
\[
\mathrm{err}_{D_{\mathcal{C}}}(h)
\;:=\; \Pr_{c\sim\mathrm{Unif}(\mathcal{C}),\,x\sim D_c} \big(h(f(x))\neq c\big).
\]

For each $c\in\mathcal C$, draw an i.i.d.\ support set $\widehat S_c := \{(x_{c,s},c)\}_{s=1}^{m}\sim D_c^m$, independent across classes and independent of the test draw $(c,x)$. Let $\widehat S := \bigcup_{c\in\mathcal C}\widehat S_c$. A downstream classifier $h_{f,\widehat S}$ is trained on $\widehat S$ and uses the features $f(x)$ at test time.\footnote{Any internal randomness of the training procedure is included in the expectation below.}
The expected $m$-shot error of $f$ is $\mathrm{err}_{m,\mathcal{C}}(f)
\;:=\; \EE_{\widehat S_1,\dots,\widehat S_{C'}}[
\mathrm{err}_{D_{\mathcal{C}}}\big(h_{f,\widehat S}\big)]$.

{\bf Downstream classifiers.\enspace} We study two simple downstream classifiers on top of the frozen encoder $f$: a linear probe (LP) and the nearest class centroid (NCC) rule. Given a support set $\widehat S=\{(x_{c,s},c): c\in\mathcal C,\ s\in[m]\}$ and a test feature $z=f(x)\in\mathbb R^{d}$, define
\begin{small}
\[
\widehat y^{\triangle}(z)\in\mathcal C
\quad\text{and}\quad
\mathrm{err}^{\triangle}_{m,\mathcal C}(f)
~:=~ \frac{1}{|\mathcal C|}\sum_{i\in\mathcal C}\Pr \big(\widehat y^{\triangle}(z_i)\neq i\big),
\]
\end{small}
where $\triangle\in\{\mathrm{LP},\mathrm{NCC}\}$ and $z_i:=f(x_i)$ with $x_i\sim D_i$ independent of $\widehat S$.

The LP is the minimizer of the population $0$–$1$ error over all linear rules $g(z)=\arg\max_{c\in\mathcal C}(w_c^\top z+b_c)$ on top of $f$, estimated from the support set $\widehat S$.
For NCC, let $z_{c,s}:=f(x_{c,s})$, $\mu_c:=\EE_{x\sim D_c}[f(x)]$, and $\widehat\mu_c:=\frac{1}{m}\sum_{s=1}^{m} z_{c,s}$ with $\delta_c:=\widehat\mu_c-\mu_c$.
Given a test feature $z=f(x)\in\mathbb R^{d}$, NCC predicts $\widehat y^{\mathrm{NCC}}(z):=\arg\min_{c\in\mathcal C}\|z-\widehat\mu_c\|_2^2
=\arg\max_{c\in\mathcal C}\bigl\{2\,\widehat\mu_c^\top z-\|\widehat\mu_c\|_2^2\bigr\}$, with an arbitrary but fixed tie-breaking rule.

Define the class-to-class error probability $p^{\triangle}_{i\to j}:=\Pr \big(\widehat y^{\triangle}(z_i)=j\big)$ for $i\neq j$, where the probability is over the draw of $\widehat S$ and the fresh test point $x_i$.
A classwise union bound yields $\mathrm{err}^{\triangle}_{m,\mathcal C}(f)
~\le~ \frac{1}{|\mathcal C|}\sum_{i\in\mathcal C}\ \sum_{\substack{j\in\mathcal C\\ j\neq i}} p^{\triangle}_{i\to j}$. Since NCC is a particular linear rule, it follows that $\mathrm{err}^{\mathrm{LP}}_{m,\mathcal C}(f)\le \mathrm{err}^{\mathrm{NCC}}_{m,\mathcal C}(f)$. To control $p^{\mathrm{NCC}}_{i\to j}$, introduce the \emph{pairwise margin} $\Delta_{i\to j}
:=\|z_i-\widehat\mu_j\|_2^2-\|z_i-\widehat\mu_i\|_2^2$ and $p^{\mathrm{NCC}}_{i\to j}=\Pr(\Delta_{i\to j}\le 0)$. Let $d_{ij}:=\|\mu_i-\mu_j\|_2$ and $v_c:=\EE \left[\|f(x)-\mu_c\|_2^2\,\middle|\,x\sim D_c\right]$ denote the class variances.
A direct expansion with $\EE[\delta_c]=0$ gives $\EE[\Delta_{i\to j}]
= d_{ij}^2+\tfrac{v_j-v_i}{m}$,
so larger between-class separation $d_{ij}$ and larger $m$ increase the expected margin, while variance asymmetry contributes the $\tfrac{v_j-v_i}{m}$ term. For simplicity, we assume throughout that $\EE[\Delta_{i\to j}]>0$.


\section{Theoretical Analysis}

\subsection{Background}

Above we described when a representation is favorable for few-shot adaptation; we now ask \emph{why} a representation yields small error. Classical analyses tie few-shot performance to the clustering geometry of class embeddings. 

{\bf Supervised case.\enspace} Prior results \cite{galanti2022on,galanti2022improved,galanti2023generalizationboundsfewshottransfer} bound these errors by the class-distance-normalized variance (CDNV). Writing $v_i:=\EE_{x\sim D_i}\|f(x)-\mu_i\|_2^2$ and
\( V_{ij} := V_f(D_i,D_j):=\frac{v_i+v_j}{\|\mu_i-\mu_j\|_2^2}\),
one obtains (see, e.g., Prop.~7 in \cite{galanti2023generalizationboundsfewshottransfer})
\begin{small}
\begin{equation}\label{eq:old_galanti}
\begin{aligned}
\mathrm{err}^{\mathrm{LP}}_{m,\mathcal{C}}(f)
~\le~
\mathrm{err}^{\mathrm{NCC}}_{m,\mathcal{C}}(f) ~\lesssim~
\frac{1+\tfrac{1}{m}}{C'} \sum_{\substack{i\in\mathcal{C}\\ j\in\mathcal{C},\, j\neq i}} V_{ij}.
\end{aligned}
\end{equation}
\end{small}
In supervised learning, representations tend to become tightly clustered, CDNV decreases, and \eqref{eq:old_galanti} tightens \cite{doi:10.1073/pnas.2015509117,galanti2022on,galanti2023comparative,zhou2022are}. In self-supervised learning, by contrast, there is no direct pressure to reduce \emph{total} within-class variance, so the average CDNV need not be small even when features are well organized along class-separating directions. 

Under supervised training, late-layer features often exhibit \emph{neural collapse} (NC): (i) within-class variance vanishes ($v_i \downarrow 0$); (ii) class means are centered and have equal norm; (iii) the means are approximately a simplex ETF, i.e., $\mu_i^\top\mu_j=r^2$ if $i=j$ and $\mu_i^\top\mu_j=-\tfrac{r^2}{C-1}$ otherwise (up to rotation/scale, with $r^2=\|\mu_i\|_2^2$ independent of $i$); and (iv) the final linear weights align with the means. Hence inter-class gaps $d_{ij}^2=\|\mu_i-\mu_j\|_2^2$ stay uniformly large while $v_i$ collapses, so $V_{ij}=\tfrac{v_i+v_j}{d_{ij}^2}$ shrinks and the CDNV average in~\eqref{eq:old_galanti} becomes small. This tightens the NCC bound (and thus LP), and explains why LP and NCC are often similar late in supervised training~\cite{doi:10.1073/pnas.2015509117,galanti2022on,galanti2023comparative,zhou2022are}.

{\bf Self-supervised case.\enspace}
In self-supervised pretraining, within-class variability can remain large in nuisance directions (e.g., augmentations) while variance \emph{along} class-separating directions is small. Classical CDNV averages over all directions and can therefore be pessimistic in such anisotropic regimes. Following \citet{luthra2025selfsupervisedcontrastivelearningapproximately}, we use \emph{directional CDNV}, which retains only the within-class variance that can flip a pairwise decision.

Let $z=f(x)$ for $x\sim D_c$, with class mean $\mu_c$ and covariance $\Sigma_c$. For classes $(i,j)$, define $d_{ij}=\|\mu_j-\mu_i\|_2$ and the decision axis $u_{ij}=(\mu_j-\mu_i)/d_{ij}$. The {\em directional CDNV} is $\tilde V_{ij} := \frac{u_{ij}^\top \Sigma_i\, u_{ij}}{d_{ij}^2}$, which measures class-$i$ variability \emph{along} the one-dimensional decision axis relative to the mean gap; variance orthogonal to $u_{ij}$ does not affect the pairwise margin. We also use averages $\tilde V_f=\textnormal{Avg}_{i\neq j}\tilde V_{ij}$, $V_f=\textnormal{Avg}_{i\neq j}V_{ij}$, and $V_f^s=\textnormal{Avg}_{i\neq j}\sqrt{V_{ij}}$.

Our anisotropic bounds replace the coarse dependence on $V_{ij}$ in \eqref{eq:old_galanti} with a leading term governed by $\tilde V_{ij}$, plus finite-shot corrections that depend on the global dispersion and decay with the number of shots $m$. Since linear probes optimize over linear rules, $\textnormal{err}^{\mathrm{LP}}_{m,\mathcal{C}}(f)\le \textnormal{err}^{\mathrm{NCC}}_{m,\mathcal{C}}(f)$. Concretely, \citet{luthra2025selfsupervisedcontrastivelearningapproximately} showed that for any $a\ge 5$,
\begin{small}
\begin{equation*}
\begin{aligned}
\textnormal{err}^{\mathrm{NCC}}_{m,\mathcal{C}}(f)
~\le~ (C'-1)\Bigg[
&\Big(\tfrac12-\tfrac{2}{a}-\tfrac{2^{3/2}}{am}\Big)^{-2}\tilde V_f \\
&+\tfrac{a}{4}\Big(\tfrac{2}{\sqrt m}(V_f^{s}+V_f)+\tfrac{1}{m}V_f\Big)
\Bigg].
\end{aligned}
\end{equation*}
\end{small}
Namely, NCC error decomposes into a decision-axis term and finite-shot leakage terms from estimating centroids (scaling with $V_f^s$ and $V_f$ at rates $m^{-1/2}$ and $m^{-1}$). Choosing $a=16$ yields a bound of $(C'-1)(
8\,\tilde V_f
+\frac{8}{\sqrt m}\,V_f^{s}
+[\frac{8}{\sqrt m}+\frac{4}{m}]\,V_f)$, making explicit that in anisotropic SSL regimes the dominant contribution is governed by $\tilde V_f$, while the remaining terms vanish as $m$ grows.

\subsection{Results}

While the directional CDNV of \cite{luthra2025selfsupervisedcontrastivelearningapproximately} correctly targets decision-axis variability, existing guarantees can be loose or vacuous at practical shot sizes (e.g., $m\in[1,500]$) and often entangle (i) the genuinely discriminative decision-axis term with (ii) finite-shot effects from estimating class centroids and (iii) tail behavior. Empirically, SSL embeddings are frequently \emph{anisotropic}: much of the within-class energy can lie in subspaces nearly orthogonal to the separating directions, motivating decorrelation/whitening objectives such as Barlow Twins and VICReg and alignment--uniformity analyses in contrastive representations \cite{zbontar2021barlow,bardes2022vicreg,wang2020understanding,wang2021understanding,chen2021intriguing}.

To obtain a usable certificate in anisotropic regimes, we analyze the \emph{pairwise margin} $\Delta_{i\to j}$ and derive bounds that (i) isolate the decision-axis second moment with an \emph{optimal} leading constant, (ii) make finite-shot effects from empirical centroids explicit, and (iii) quantify heavy tails via a fourth-moment correction. Concretely, we introduce
$\Theta_{ij}:=\tfrac{M_{4,i}+M_{4,j}}{d_{ij}^{4}}$, where $M_{4,i}:=\EE\|f(x)-\mu_i\|^{4}$, so that tail effects appear at order $m^{-3}$, while the remaining finite-shot leakage is controlled by $V_{ij}$ and $V_{ij}^{2}$ at orders $m^{-1}$ and $m^{-2}$. The result below combines the pairwise and multiclass guarantees, includes a mild variance-imbalance factor $\Denij$, and uses optimized coefficients.

\begin{restatable}{theorem}{optimized}\label{thm:ncc-full-optimized}
Let $C'\ge 2$ and $m\ge 10$ be integers. Fix a feature map $f:\mathcal X\to\mathbb R^{d}$ and class-conditional distributions $D_{1},\dots,D_{C'}$ over $\mathcal X$. Define $E^1_{ij}:=\frac{4}{m}(V_{ij}^2+\frac14 V_{ij})$, $E^2_{ij}:=\frac{V_{ij}}{m}$, $E^3_{ij}:=\frac{\Theta_{ij}+2(m-1)V_{ij}^{2}}{m^{3}}$. Then the average multiclass error of the NCC classifier satisfies
\begin{small}
\begin{equation*}
\begin{aligned}
\mathrm{err}^{\mathrm{NCC}}_{m,\mathcal{C}}(f)
~&\le~ \frac{1}{C'}\sum_{i=1}^{C'}\sum_{j\ne i}
\tfrac{4\,\tilde V_{ij}}{\Denij}\\
~&\qquad+~ \frac{1}{C'}\sum_{i=1}^{C'}\sum_{j\ne i}
\tfrac{\Bigl(\sqrt{E^1_{ij}}+\sqrt{E^2_{ij}}+\sqrt{E^3_{ij}}\Bigr)^{2}}{\Denij}.
\end{aligned}
\end{equation*}
\end{small}
\end{restatable}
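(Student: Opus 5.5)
The plan is to reduce everything to a pairwise statement and then apply Cantelli's inequality to the margin $\Delta_{i\to j}$. By the classwise union bound of Section~\ref{sec:setting}, it suffices to show for each ordered pair $i\neq j$ that
\[
p^{\mathrm{NCC}}_{i\to j}=\Pr(\Delta_{i\to j}\le 0)\le \frac{\Var(\Delta_{i\to j})}{\EE[\Delta_{i\to j}]^2},
\]
and then to bound the variance. Since $\EE[\Delta_{i\to j}] = d_{ij}^2+\tfrac{v_j-v_i}{m} = d_{ij}^2\bigl(1+\tfrac{v_j-v_i}{m d_{ij}^2}\bigr)$, the square of the mean is exactly $d_{ij}^4$ times the imbalance factor $\Denij$. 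The theorem therefore follows once we establish
\[
\Var(\Delta_{i\to j})/d_{ij}^4 \le \Bigl(2\sqrt{\tilde V_{ij}}+\sqrt{E^1_{ij}}+\sqrt{E^2_{ij}}+\sqrt{E^3_{ij}}\Bigr)^2-\text{(cross terms)}.
\]
More precisely, we will show
\[
\Var(\Delta)/d_{ij}^4 \le 4\tilde V_{ij} + \bigl(\sqrt{E^1}+\sqrt{E^2}+\sqrt{E^3}\bigr)^2,
\]
with the $4\tilde V_{ij}$ term separated off.

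\textbf{Step 1: decompose the margin.} Write $z_i=\mu_i+\xi$ and $\widehat\mu_c=\mu_c+\delta_c$. Expanding gives
\[
\Delta = d_{ij}^2 + 2\,d_{ij}\,u_{ij}^\top(\xi\text{-free terms})\ldots
\]
Concretely,
\[
\Delta = \|\mu_i-\mu_j+\xi-\delta_j\|^2-\|\xi-\delta_i\|^2 = d_{ij}^2 - 2d_{ij}u_{ij}^\top(\xi-\delta_j) + \|\xi-\delta_j\|^2-\|\xi-\delta_i\|^2 .
\]
This simplifies to
\[
\Delta - \EE\Delta = A + B_1 + B_2 + B_3,
\]
where $A=-2d_{ij}u_{ij}^\top\xi$ is the decision-axis term, with $\Var(A)=4d_{ij}^2 u^\top\Sigma_i u = 4d_{ij}^4\tilde V_{ij}$. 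The remaining terms are:
\begin{itemize}
\item $B_1=2d_{ij}u_{ij}^\top\delta_j + 2\xi^\top(\delta_i-\delta_j)$, a linear cross term;
\item $B_2$, the centered $\|\delta_j\|^2-\|\delta_i\|^2$ pieces, whose second moment involves fourth moments of empirical means;
\item $B_3$, any remaining centered quadratic pieces.
\end{itemize}

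\textbf{Step 2: orthogonality.} The key observation is that $A$ is uncorrelated with every other term. Since $\xi$ is independent of the $\delta$'s and each $\delta_c$ is mean zero, $\EE[A\,\xi^\top\delta]=0$ and $\EE[A\,u^\top\delta_j]=0$, and $A$ does not appear in $\|\delta\|^2$ terms. Hence
\[
\Var(\Delta)=\Var(A)+\Var(B_1+B_2+B_3),
\]
and Minkowski's inequality gives $\sqrt{\Var(\sum B)}\le\sum\sqrt{\Var(B_k)}$. This produces exactly the structure $4\tilde V_{ij}+(\sqrt{E^1}+\sqrt{E^2}+\sqrt{E^3})^2$.

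\textbf{Step 3: moment computations.} Each remaining variance is then computed directly:
\begin{itemize}
\item $\EE(u^\top\delta_j)^2\le v_j/m$, and $\EE(\xi^\top\delta)^2\le \tr(\Sigma_i\Sigma_c)/m\le v_i v_c/m$. Normalizing by $d^4$ yields terms bounded by $\tfrac1m V_{ij}$ and $\tfrac{4}{m}V_{ij}^2$, which are grouped into $E^1$ and $E^2$.
\item For $\Var\|\delta_c\|^2$, use the standard identity for a mean of $m$ i.i.d.\ centered vectors,
\[
\EE\|\delta_c\|^4=\frac{M_{4,c}}{m^3}+\frac{(m-1)(v_c^2+2\tr\Sigma_c^2)}{m^3},
\]
and $\tr\Sigma_c^2\le v_c^2$. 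After normalization this gives $E^3=\bigl(\Theta_{ij}+2(m-1)V_{ij}^2\bigr)/m^3$, possibly absorbing the $v_c^2$ part through subtracting $(\EE\|\delta\|^2)^2=v_c^2/m^2$.
\end{itemize}
The hypothesis $m\ge10$ is used to absorb slack constants into the stated coefficients.

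\textbf{Main obstacle.} The hard part is the bookkeeping in Step 3: assigning each cross term (for example, the covariance between $\xi^\top\delta_i$ and $\|\delta_i\|^2$, which involves third moments) to $E^1$, $E^2$ or $E^3$ so that the exact coefficients $4$, $\tfrac14$, $1$ and $2(m-1)$ come out. I would first try to reorganize $\Delta$ so that the pieces are pairwise uncorrelated, using independence across classes and odd-moment cancellation. If third-moment cross terms survive, I would bound them by Cauchy–Schwarz, which is precisely what the Minkowski sum-of-square-roots form accommodates.
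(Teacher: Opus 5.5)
\textbf{Assessment: correct architecture (the paper's), but Step~3 is wrong as written.}

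Your overall plan is the paper's own argument:
\begin{itemize}
\item Chebyshev on the pairwise margin, with $(\EE\Delta)^2=d_{ij}^4\Denij$.
\item Exact decorrelation of the axis term $-2d_{ij}u_{ij}^\top\xi$ from everything else, so $4\tilde V_{ij}$ splits off additively.
\item Minkowski on the remainder. This is exactly what the paper's weighted Cauchy--Schwarz with optimized weights $\lambda_T:\lambda_S:\lambda_Q=\sqrt{E^1_{ij}}:\sqrt{E^2_{ij}}:\sqrt{E^3_{ij}}$ amounts to.
\end{itemize}
The gap is in Step~3, which is exactly the bookkeeping you defer. With your split, the pieces do not map onto $E^1_{ij},E^2_{ij}$, and the assignment you sketch is wrong.

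\textbf{Where the counting fails.} The term $2d_{ij}u_{ij}^\top\delta_j$ has normalized variance $4u_{ij}^\top\Sigma_ju_{ij}/(m d_{ij}^2)$. That is up to $\tfrac4m V_{ij}$, not $\tfrac1m V_{ij}$: you lost the factor $(2d_{ij})^2$. Consequences:
\begin{itemize}
\item $\Var(B_1)/d_{ij}^4\le E^1_{ij}+E^2_{ij}$ is false in general. Take $\Sigma_i=0$ and $\Sigma_j=\sigma^2u_{ij}u_{ij}^\top$. Then the left side equals $4V_{ij}/m$, which exceeds $\tfrac4mV_{ij}^2+\tfrac2mV_{ij}$ whenever $V_{ij}<\tfrac12$.
\item Applying Minkowski to the three natural pieces $2d_{ij}u_{ij}^\top\delta_j$, $2\xi^\top(\delta_i-\delta_j)$, $Q-\EE Q$ does not help. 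In terms of $V_{ij}$ this gives $\tfrac1m(2\sqrt{V_{ij}}+2V_{ij})^2$ in place of $(\sqrt{E^1_{ij}}+\sqrt{E^2_{ij}})^2=\tfrac1m(\sqrt{4V_{ij}^2+V_{ij}}+\sqrt{V_{ij}})^2$, which is larger by at least $4V_{ij}^{3/2}/m$.
\item The hypothesis $m\ge10$ cannot absorb this, since both sides scale as $1/m$; the paper's proof never uses it.
\end{itemize}
Also, there is no $B_3$: the remainder is exactly $B_1+(Q-\EE Q)$. Finally, your displayed tail bound is the Chebyshev form; Cantelli would give the sharper $\Var/(\Var+(\EE\Delta)^2)$, and either suffices.

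\textbf{Fix 1 (the paper's route).} Center the test point at the midpoint, $A:=z_i-\tfrac{\mu_i+\mu_j}{2}$, and split your $B_1$ as $-2T+S$ with $T:=A^\top(\delta_j-\delta_i)$ and $S:=(\mu_j-\mu_i)^\top(\delta_i+\delta_j)$. Since $\EE[AA^\top]=\Sigma_i+\tfrac14(\mu_j-\mu_i)(\mu_j-\mu_i)^\top$, you get $4\Var(T)\le \tfrac{d_{ij}^4}{m}(4V_{ij}^2+V_{ij})=d_{ij}^4E^1_{ij}$. This is where the $\tfrac14V_{ij}$ inside $E^1_{ij}$ comes from. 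You also get $\Var(S)\le d_{ij}^4E^2_{ij}$. Minkowski over $-2T,S,Q$ then gives the theorem.

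\textbf{Fix 2 (keep your $B_1$).} Its two parts are uncorrelated, because $\xi$ is centered and independent of the $\delta$'s, so variances add rather than square roots:
\[
\Var(B_1)=\tfrac4m\bigl[\tr(\Sigma_i(\Sigma_i+\Sigma_j))+d_{ij}^2u_{ij}^\top\Sigma_ju_{ij}\bigr]\le \tfrac{4d_{ij}^4}{m}\bigl(V_{ij}^2+V_{ij}\bigr)\le d_{ij}^4\bigl(\sqrt{E^1_{ij}}+\sqrt{E^2_{ij}}\bigr)^2 .
\]
The last step uses $2\sqrt{E^1_{ij}E^2_{ij}}\ge\tfrac2mV_{ij}$.

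\textbf{What you do not need.} In either version you never have to compute the third-moment covariances you worry about (e.g.\ between $u_{ij}^\top\delta_j$ and $\|\delta_j\|^2$), because Minkowski absorbs all cross-covariances. Your fourth-moment identity then gives $\Var(Q)\le d_{ij}^4E^3_{ij}$ exactly as you indicate.
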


The leading term $\frac{4\,\tilde V_{ij}}{\Denij}$ is the \emph{decision-axis} contribution, governed by the within-class variance of class $i$ along the separating direction $u_{ij}$. The remaining term aggregates all \emph{finite-shot} effects through
$\frac{\bigl(\sqrt{E^1_{ij}}+\sqrt{E^2_{ij}}+\sqrt{E^3_{ij}}\bigr)^2}{\Denij}$:
$E^2_{ij}$ captures linear centroid-estimation leakage ($\asymp V_{ij}/m$),
$E^1_{ij}$ captures quadratic leakage ($\asymp V_{ij}^2/m$),
and $E^3_{ij}$ captures tails and higher-order terms
($\asymp \Theta_{ij}/m^{3} + V_{ij}^{2}/m^{2}$, since $\frac{2(m-1)}{m^3}=\mathcal O(m^{-2})$).

In the large-shot limit $m\to\infty$, and under mild variance balance ($v_i\approx v_j$ so $\Denij\to 1$), the correction vanishes and the bound approaches the directional certificate $p^{\mathrm{NCC}}_{i\to j}\lesssim 4\,\tilde V_{ij}$ (and the multiclass bound follows by union bounding and averaging over $i$). The constant $4$ is optimal under second-moment information: in the known-centroid reduction the error is a one-sided tail event of the axis projection with variance $d_{ij}^2\tilde V_{ij}$, and Cantelli's inequality yields the minimax-tight factor $4$ (App.~\ref{app:opt_const_4}). Finally, $\Denij=\bigl(1+\tfrac{v_j-v_i}{m\,d_{ij}^2}\bigr)^2$ is a mild multiplicative perturbation whenever $\tfrac{|v_j-v_i|}{m\,d_{ij}^2}$ is not too large, and it disappears as $m$ grows.

{\bf Optimality of the leading constant.\enspace} The coefficient $4$ in the leading decision-axis term is not an artifact of our proof technique. In the idealized known-centroid limit ($m\to\infty$), the pairwise NCC error $p^{\mathrm{NCC}}_{i\to j}$
reduces to a one-dimensional tail event along the separating axis:
if $u_{ij}:=(\mu_j-\mu_i)/\|\mu_j-\mu_i\|_2$ and
$X:=(z_i-\mu_i)^\top u_{ij}$ for $z_i\sim D_i$, then
$p^{\mathrm{NCC}}_{i\to j}=\Pr \big(X\ge \|\mu_j-\mu_i\|_2/2\big)$ and
$\Var(X)=\|\mu_j-\mu_i\|_2^2\,\tilde V_{ij}$.
Among all mean-zero random variables with this variance, Cantelli's inequality yields the sharp
distribution-free bound
$p^{\mathrm{NCC}}_{i\to j}\le \frac{4\tilde V_{ij}}{1+4\tilde V_{ij}}\le 4\tilde V_{ij}$,
and the constant is minimax-tight via a two-point construction.
Consequently, any universally valid second-moment bound of the form
$p^{\mathrm{NCC}}_{i\to j}\le c\,\tilde V_{ij}+(\text{lower-order terms})$ must have $c\ge 4$
unless additional tail assumptions are imposed (e.g., subgaussianity).
See App.~\ref{app:opt_const_4} for details and the connection to the finite-shot factor $1/\Denij$.

\subsection{Near-orthogonality from small directional CDNV}
\label{sec:factor_model_cdnv_dircdnv}

A single SSL representation can support many downstream tasks even when the \emph{total} within-class spread is large.
The reason is that few-shot error (and our bounds) are controlled by variability \emph{along the decision axis}, not by variability in directions that are irrelevant to the margin.
Directional CDNV measures within-class variance projected onto the class-separating direction, whereas classical CDNV sums variance over all directions and can therefore be much larger.

We make two complementary points.
(i) For two \emph{independent} balanced binary labelings, small directional CDNV for both tasks implies their decision axes are nearly orthogonal.
(ii) There exist natural representation families where directional CDNV is simultaneously small for many tasks while CDNV is large, because most within-class variability lies in directions orthogonal to every task's decision axis.

{\bf Setup: two independent balanced multiclass labelings.\enspace} Let $z=f(x)\in\R^{d}$ with $\EE\|z\|_2^2<\infty$, and let $y^{(1)}\in[K_1],\, y^{(2)}\in[K_2]$ be two labelings of the same examples (where $[K]:=\{1,\dots,K\}$).
Assume both tasks are balanced and independent, i.e., for all $a \in [K_1]$ and $ b\in[K_2]$, we have
\begin{small}
\[
\Pr(y^{(1)}=a)=\frac1{K_1},\quad
\Pr(y^{(2)}=b)=\frac1{K_2},\quad
y^{(1)}\ \perp\ y^{(2)}.
\]
\end{small}
For each task $\ell\in\{1,2\}$ and class index $c\in[K_\ell]$, define $\mu^{(\ell)}_{c}:=\EE[z\mid y^{(\ell)}=c]$, and $\Sigma^{(\ell)}_{c}:=\Cov(z\mid y^{(\ell)}=c)$. For any pair of distinct classes $a\neq a'$ in task $1$, define the pairwise mean gap and (unit) decision axis $d^{(1)}_{aa'}:=\|\mu^{(1)}_{a}-\mu^{(1)}_{a'}\|_2$ and $u^{(1)}_{aa'}:=\frac{\mu^{(1)}_{a}-\mu^{(1)}_{a'}}{d^{(1)}_{aa'}}$, and similarly, for any $b\neq b'$ in task $2$, $d^{(2)}_{bb'}:=\|\mu^{(2)}_{b}-\mu^{(2)}_{b'}\|_2$ and $u^{(2)}_{bb'}:=\frac{\mu^{(2)}_{b}-\mu^{(2)}_{b'}}{d^{(2)}_{bb'}}$ (under the assume that $d^{(1)}_{aa'}>0$ and $d^{(2)}_{bb'}>0$).

For a fixed pair $(a,a')$ in task $1$, define the directional CDNV (maximized over all classes of task $1$) by
\[
\tilde V^{(1)}_{aa',c}
:=
\frac{\big(u^{(1)}_{aa'}\big)^\top \Sigma^{(1)}_{c}\,u^{(1)}_{aa'}}{\big(d^{(1)}_{aa'}\big)^2},
\quad
\tilde V^{(1)}_{aa'}
:=
\max_{c\in[K_1]} \tilde V^{(1)}_{aa',c},
\]
and analogously, for a fixed pair $(b,b')$ in task $2$, we define $\tilde V^{(2)}_{bb',c}$ and $\tilde V^{(2)}_{bb'}$

(For reference, the classical pairwise CDNV for task $1$ and pair $(a,a')$ is
\[
V^{(1)}_{aa'}:=\frac{\tr(\Sigma^{(1)}_{a})+\tr(\Sigma^{(1)}_{a'})}{\big(d^{(1)}_{aa'}\big)^2},
\]
and similarly for task $2$.)

{\bf Small directional CDNV forces near-orthogonality (multiclass version).\enspace}
The next proposition shows that small directional CDNV for two \emph{independent} multiclass tasks yields small overlap between any pairwise decision axes across the two tasks.

\begin{restatable}[Near-orthogonality from small directional CDNV]{proposition}{orthmulti}
\label{prop:near_orth_from_dircdnv_multiclass}
Assume $y^{(1)}\in[K_1]$ and $y^{(2)}\in[K_2]$ are balanced and independent.
Fix any $a\neq a'$ in $[K_1]$ and any $b\neq b'$ in $[K_2]$, and assume
$d^{(1)}_{aa'},d^{(2)}_{bb'}>0$.
Then
\begin{small}
\begin{equation*}
\left|\big(u^{(1)}_{aa'}\big)^\top u^{(2)}_{bb'}\right|
\;\le\;
\min\Bigg\{
\frac{d^{(1)}_{aa'}}{d^{(2)}_{bb'}}\sqrt{2K_2\tilde V^{(1)}_{aa'}},
\;
\frac{d^{(2)}_{bb'}}{d^{(1)}_{aa'}}\sqrt{2K_1\tilde V^{(2)}_{bb'}}
\Bigg\}.
\end{equation*}
\end{small}
\end{restatable}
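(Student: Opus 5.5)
The plan is to reduce everything to one scalar projection and use independence of the labelings to carry the small decision-axis variance of task $1$ over to the class means of task $2$. The two bounds inside the minimum are symmetric, so I will prove only the first one, $|(u^{(1)}_{aa'})^\top u^{(2)}_{bb'}|\le \frac{d^{(1)}_{aa'}}{d^{(2)}_{bb'}}\sqrt{2K_2\tilde V^{(1)}_{aa'}}$; the second follows by exchanging the roles of the two tasks.

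\textbf{Step 1 (scalar reduction).} Set $u:=u^{(1)}_{aa'}$ and $s:=u^\top z$. Write $s=g(y^{(1)})+r$, where $g(c):=\EE[s\mid y^{(1)}=c]=u^\top\mu^{(1)}_c$ and $r:=s-g(y^{(1)})$, so that $\EE[r\mid y^{(1)}]=0$. Conditional on $y^{(1)}=c$, $\EE[r^2\mid y^{(1)}=c]=u^\top\Sigma^{(1)}_c u\le (d^{(1)}_{aa'})^2\,\tilde V^{(1)}_{aa'}$ by the definition of the max-over-classes directional CDNV. Averaging over $c$ gives
\[
\EE[r^2]\le (d^{(1)}_{aa'})^2\,\tilde V^{(1)}_{aa'}.
\]

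\textbf{Step 2 (independence kills the task-1 signal).} For each $b\in[K_2]$,
\[
u^\top\mu^{(2)}_b=\EE[s\mid y^{(2)}=b]=\EE[g(y^{(1)})\mid y^{(2)}=b]+\EE[r\mid y^{(2)}=b].
\]
Because $y^{(1)}\perp y^{(2)}$, the first term equals $\EE[g(y^{(1)})]$, which does not depend on $b$. Hence
\[
d^{(2)}_{bb'}\,u^\top u^{(2)}_{bb'}=u^\top(\mu^{(2)}_b-\mu^{(2)}_{b'})=\EE[r\mid y^{(2)}=b]-\EE[r\mid y^{(2)}=b'].
\]
The task-1 signal has cancelled, and only the residual $r$ remains.

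\textbf{Step 3 (bounding conditional means by the residual energy).} Jensen's inequality and balance ($\Pr(y^{(2)}=b)=1/K_2$) give
\[
\EE[r\mid y^{(2)}=b]^2\le \EE[r^2\mid y^{(2)}=b]=K_2\,\EE[r^2\mathbf 1\{y^{(2)}=b\}].
\]
Using $(x-y)^2\le 2(x^2+y^2)$ and the disjointness of the events $\{y^{(2)}=b\}$ and $\{y^{(2)}=b'\}$, we get
\[
(d^{(2)}_{bb'})^2\,(u^\top u^{(2)}_{bb'})^2\le 2K_2\,\EE[r^2]\le 2K_2\,(d^{(1)}_{aa'})^2\,\tilde V^{(1)}_{aa'}.
\]
Taking square roots and dividing by $d^{(2)}_{bb'}>0$ yields the claimed bound.

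The main obstacle is Step 2, namely seeing that independence lets one strip off the task-1 between-class component $g(y^{(1)})$ exactly. Once that is done, only the within-class residual, which is controlled by directional CDNV, can move the task-2 means along $u$. Everything else is Jensen's inequality plus a counting factor $K_2$ that comes from conditioning on an event of probability $1/K_2$.
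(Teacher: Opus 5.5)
Your proof is correct. It reaches the bound with different bookkeeping from the paper's.

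The paper introduces the joint cell means $\mu_{c,t}=\EE[z\mid y^{(1)}=c,\,y^{(2)}=t]$. Inside each task-1 class $c$ it applies the law of total variance to show $|u^\top(\mu_{c,b}-\mu_{c,b'})|\le d^{(1)}_{aa'}\sqrt{2K_2\tilde V^{(1)}_{aa',c}}$. It then uses independence to write $\mu^{(2)}_b-\mu^{(2)}_{b'}$ as an average of these per-class shifts, and finishes with the triangle inequality.

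You instead subtract the task-1 conditional mean $g(y^{(1)})$ once, globally. Independence cancels it from the task-2 mean difference, and you control what remains by Jensen (equivalently, Cauchy--Schwarz) on the pooled residual $r$. The core is the same in both: independence removes the task-1 between-class signal along $u$, only on-axis within-class variance survives, and the factor $2K_2$ comes from $(x-y)^2\le 2(x^2+y^2)$ plus balance of task 2.

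What your route buys:
\begin{itemize}
\item It avoids cell means altogether.
\item Since $\EE[r^2]=\sum_c\Pr(y^{(1)}=c)\,u^\top\Sigma^{(1)}_c u$, it directly gives the first bound with the class-averaged directional CDNV in place of the maximum over $c$. That is a weakly stronger statement.
\item It never invokes balance of $y^{(1)}$ for that bound. It needs only independence and $\Pr(y^{(2)}=b)=\Pr(y^{(2)}=b')=1/K_2$.
\end{itemize}

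What the paper's route buys is a finer, per-class conclusion: each task-1 class individually has small task-2 shifts along $u$. That is why it naturally carries the maximum, although one more Jensen step on $\sqrt{\cdot}$ in its final averaging would also recover the average.
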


Prop.~\ref{prop:near_orth_from_dircdnv_multiclass} shows that for two \emph{independent} balanced tasks, small directional CDNV implies that \emph{any} pairwise class-separating direction from task $1$ has small overlap with \emph{any} pairwise class-separating direction from task $2$. In other words, when within-class variability is small along the relevant margin direction for each task, the representation cannot reuse the same direction to separate classes for another independent task, except for a small residual overlap controlled by directional CDNV.

A useful consequence is that a single representation can support many independent tasks by allocating nearly orthogonal discriminative directions to different tasks, while still allowing substantial total within-class variance in directions that are orthogonal to these decision axes. This is precisely the regime where directional CDNV can be small for many tasks simultaneously, even if the classical CDNV (which sums variance across all directions) is large.

{\bf Why CDNV can still be large.\enspace}
Fix a task $\ell\in\{1,2\}$ and a pair of classes $c\neq c' \in [K_\ell]$, and write $u:=u^{(\ell)}_{cc'}$, $d:=d^{(\ell)}_{cc'}$, and $\Pi:=I-uu^\top$. For any class $r\in[K_\ell]$, the covariance decomposes as $\Sigma^{(\ell)}_{r}=(uu^\top)\Sigma^{(\ell)}_{r}(uu^\top)+(uu^\top)\Sigma^{(\ell)}_{r}\Pi+\Pi\Sigma^{(\ell)}_{r}(uu^\top)+\Pi\Sigma^{(\ell)}_{r}\Pi$. The directional CDNV for the pair $(c,c')$ only depends on the on-axis variance, namely $\tilde V^{(\ell)}_{cc',r}=\frac{u^\top \Sigma^{(\ell)}_{r}u}{d^2}$ and $\tilde V^{(\ell)}_{cc'}=\max_{r\in[K_\ell]}\tilde V^{(\ell)}_{cc',r}$, whereas the classical pairwise CDNV $V^{(\ell)}_{cc'}=\frac{\tr(\Sigma^{(\ell)}_{c})+\tr(\Sigma^{(\ell)}_{c'})}{d^2}$ counts variance in all directions. In particular, since $\tr(\Sigma^{(\ell)}_{r})=u^\top \Sigma^{(\ell)}_{r}u+\tr(\Pi\Sigma^{(\ell)}_{r}\Pi)$, the off-axis contribution $\tr(\Pi\Sigma^{(\ell)}_{r}\Pi)$ can be made arbitrarily large without changing $\tilde V^{(\ell)}_{cc',r}$ (and hence without changing $\tilde V^{(\ell)}_{cc'}$).

{\bf Many tasks with small dir-CDNV but large CDNV.\enspace} To illustrate the phenomenon transparently, consider $M\le d$ independent balanced binary tasks (a special case of the multiclass setup). Let $v_1,\dots,v_M\in\R^d$ be orthonormal, let $t^{(1)},\dots,t^{(M)}\in\{\pm1\}$ be i.i.d.\ balanced, and define $z=\sum_{\ell=1}^M \frac{\Delta_\ell}{2}\,t^{(\ell)}v_\ell+\eta+\xi$, where $\eta\in\operatorname{span}\{v_1,\dots,v_M\}^\perp$ and $(\eta,\xi)$ are independent of $(t^{(1)},\dots,t^{(M)})$.

For task $\ell$, taking conditional expectations gives $\mu^{(\ell)}_{+}-\mu^{(\ell)}_{-}=\Delta_\ell v_\ell$, so $v_\ell$ is exactly the decision axis and the pairwise mean gap is $\Delta_\ell$. Conditioning on $t^{(\ell)}$ does not affect the distribution of $\eta$, $\xi$, or $\{t^{(j)}:j\neq \ell\}$. Using $v_\ell^\top\eta=0$ and $v_\ell^\top v_j=0$ for $j\neq \ell$, we get $v_\ell^\top \Sigma^{(\ell)}_{s}v_\ell=v_\ell^\top \Cov(\xi)\,v_\ell$ for every $s\in\{\pm1\}$, and therefore $\tilde V^{(\ell)}=\frac{v_\ell^\top \Cov(\xi)\,v_\ell}{\Delta_\ell^2}$. Thus, if $\Cov(\xi)$ is small on $\mathcal U:=\operatorname{span}\{v_1,\dots,v_M\}$, then $\tilde V^{(\ell)}$ is small for every $\ell$.

By contrast, classical CDNV aggregates variance in all directions and therefore also counts the off-axis nuisance energy: $V^{(\ell)}=\frac{\tr(\Sigma^{(\ell)}_{+})+\tr(\Sigma^{(\ell)}_{-})}{\Delta_\ell^2}\ge \frac{2\,\tr(\Cov(\eta))}{\Delta_\ell^2}$. This can be made arbitrarily large by increasing $\tr(\Cov(\eta))$ while keeping all directional quantities fixed.

This example shows that an orthogonal-factor structure is a convenient \emph{sufficient} mechanism for obtaining small directional CDNV across many labelings, but it is not required for Proposition~\ref{prop:near_orth_from_dircdnv_multiclass}. The near-orthogonality conclusion there follows directly from task independence and small variance along the relevant pairwise decision axes.

\section{Experiments}

\begin{figure}[t]
  \centering
  \begin{tabular}{ccccc}
    \includegraphics[width=0.45\linewidth]{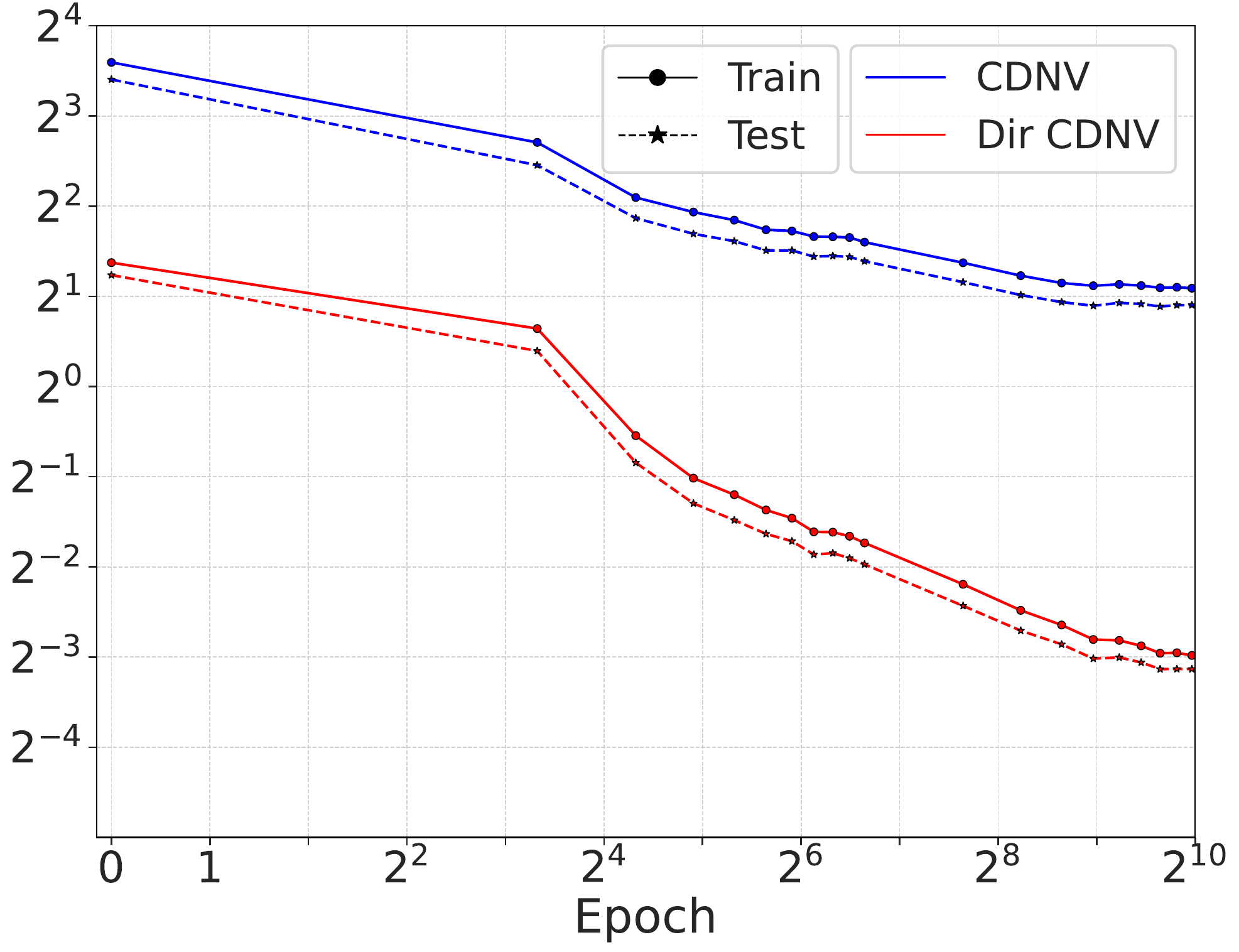} &
    \includegraphics[width=0.45\linewidth]{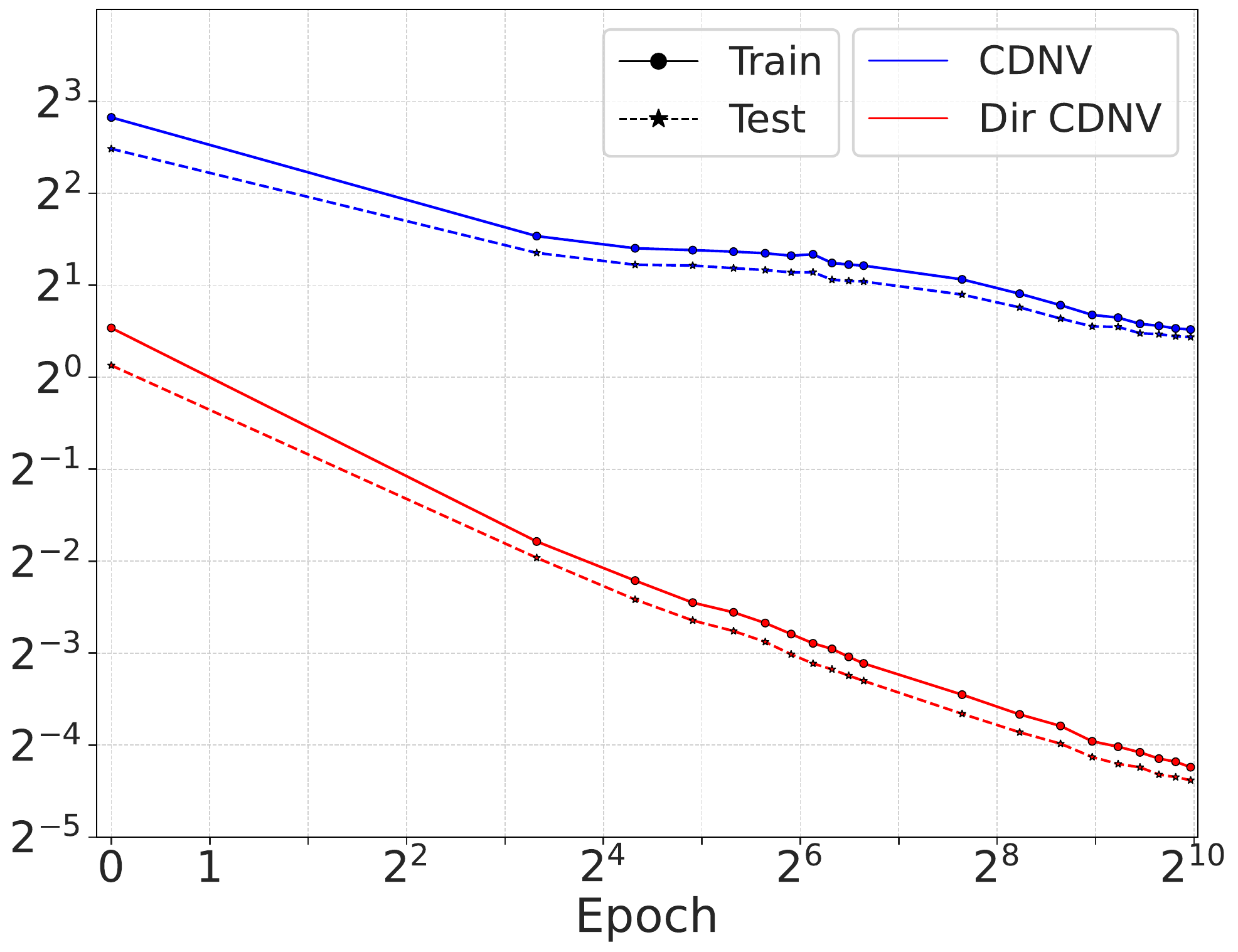} \\
    {\small {\bf MAE (ViT-B/16)}} & {\small {\bf SimCLR (ResNet-50)}} \\
    \includegraphics[width=0.45\linewidth]{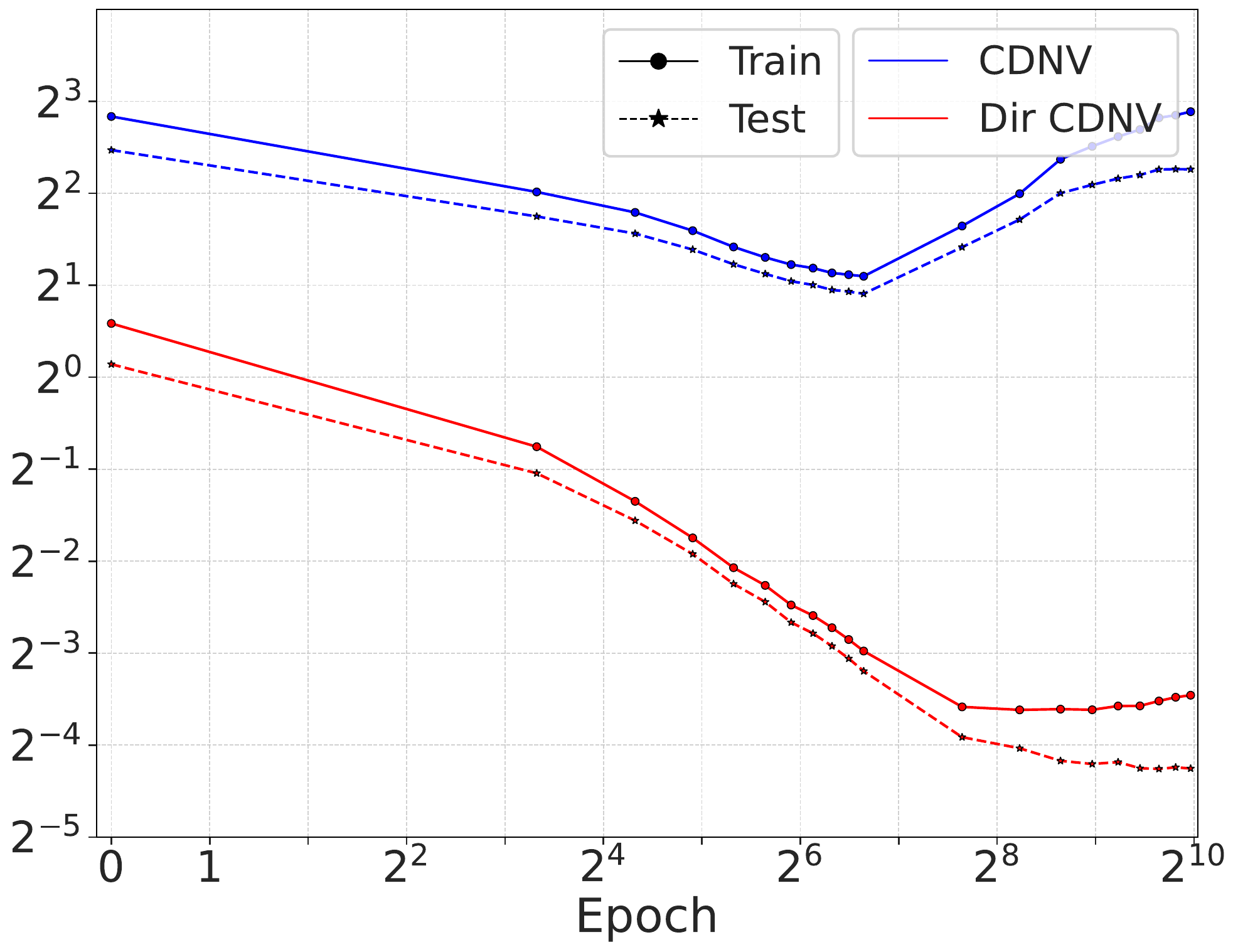}
 &
 \includegraphics[width=0.45\linewidth]{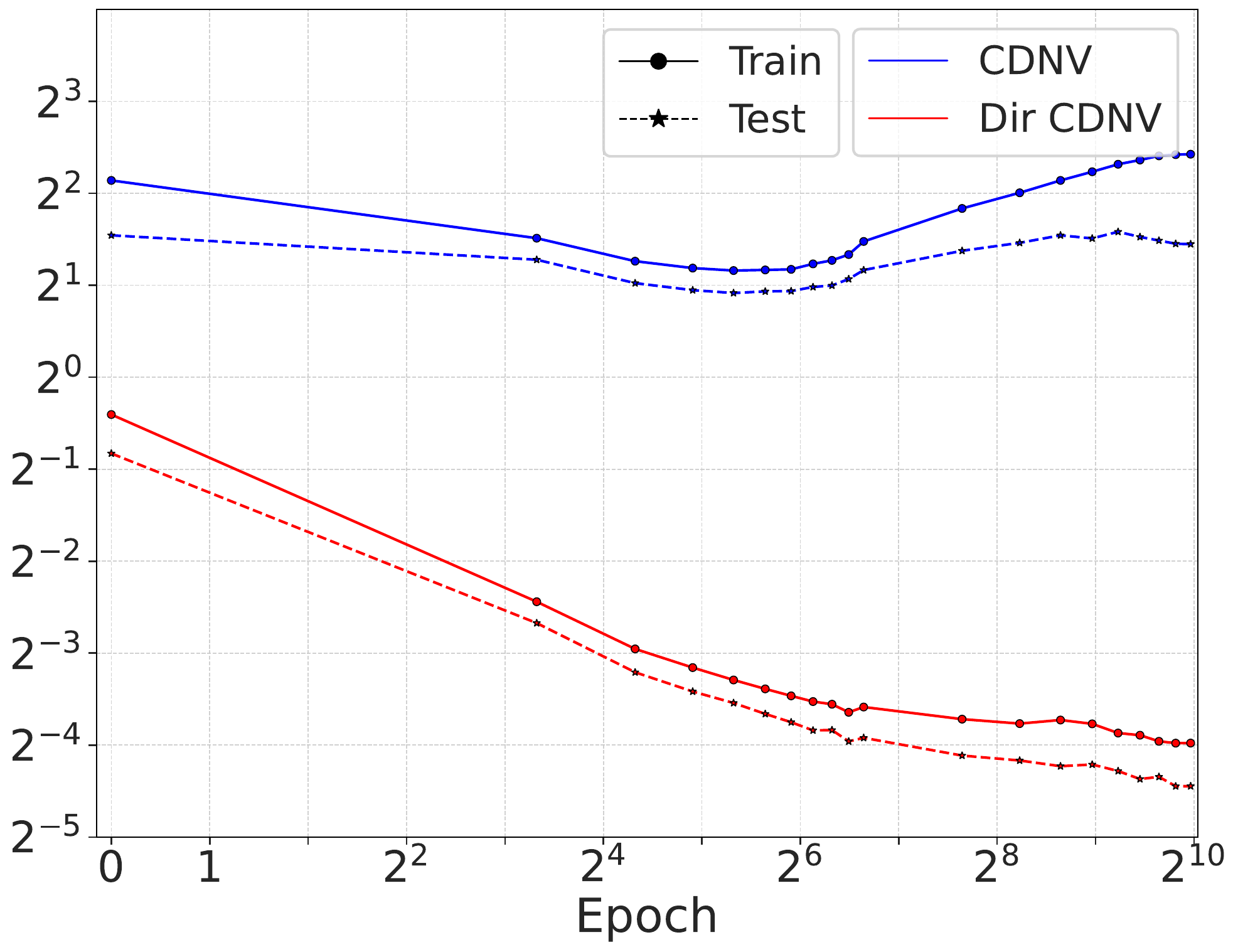} \\
 {\small {\bf DINO-v2 (ViT-B/16)}} & {\small {\bf VICReg (ResNet-50)}} \\

  \end{tabular}
\caption{\textbf{Decision-axis collapse emerges during SSL training.}
We track both CDNV and directional CDNV on the training and test sets. Directional CDNV decreases much more than CDNV, indicating that SSL primarily tightens class geometry along separating directions even when overall within-class variability is large.}
\label{fig:dir}
\end{figure}

\begin{figure}[t]
  \centering
  \begin{tabular}{ccccc}
    \includegraphics[width=0.45\linewidth]{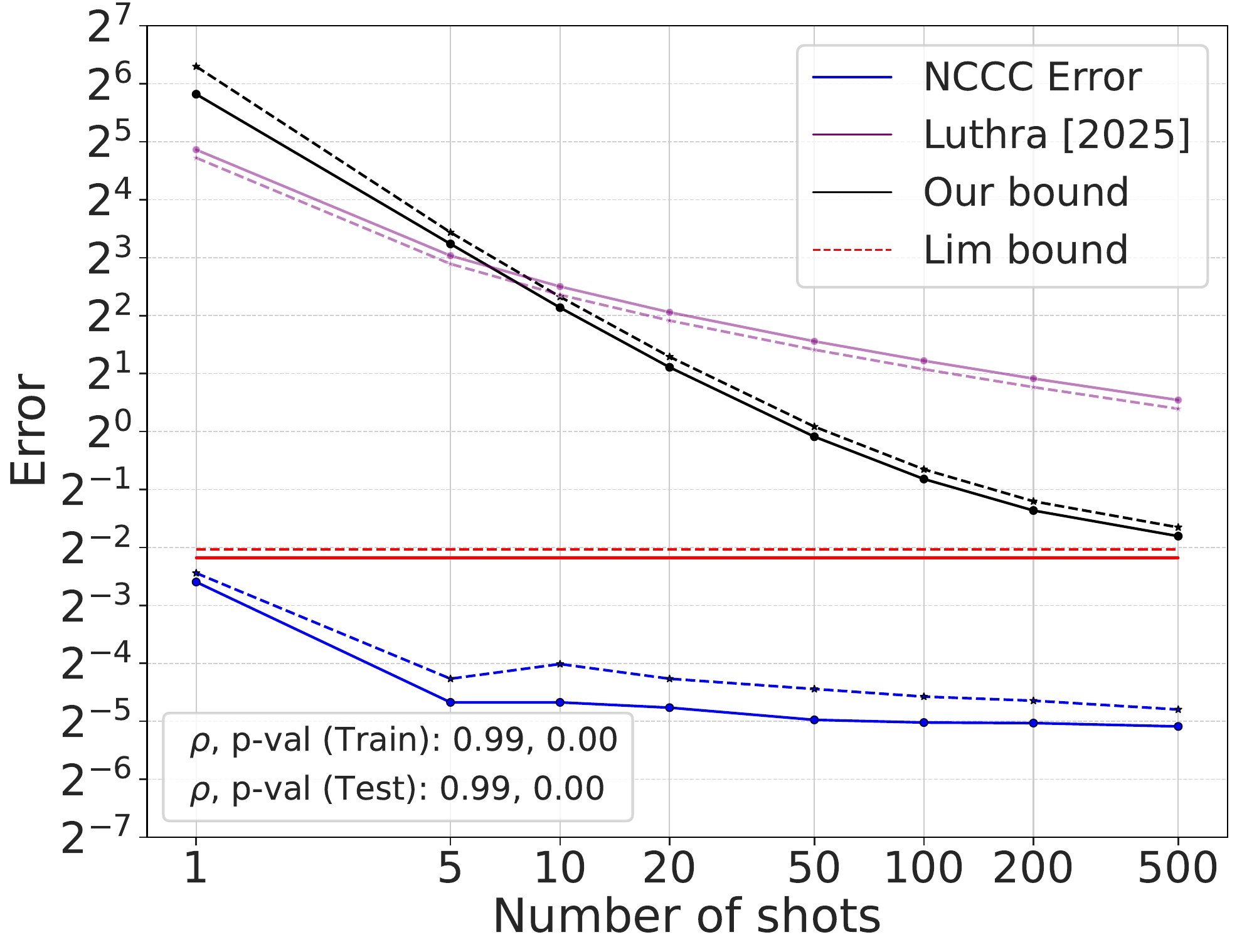} &
    \includegraphics[width=0.45\linewidth]{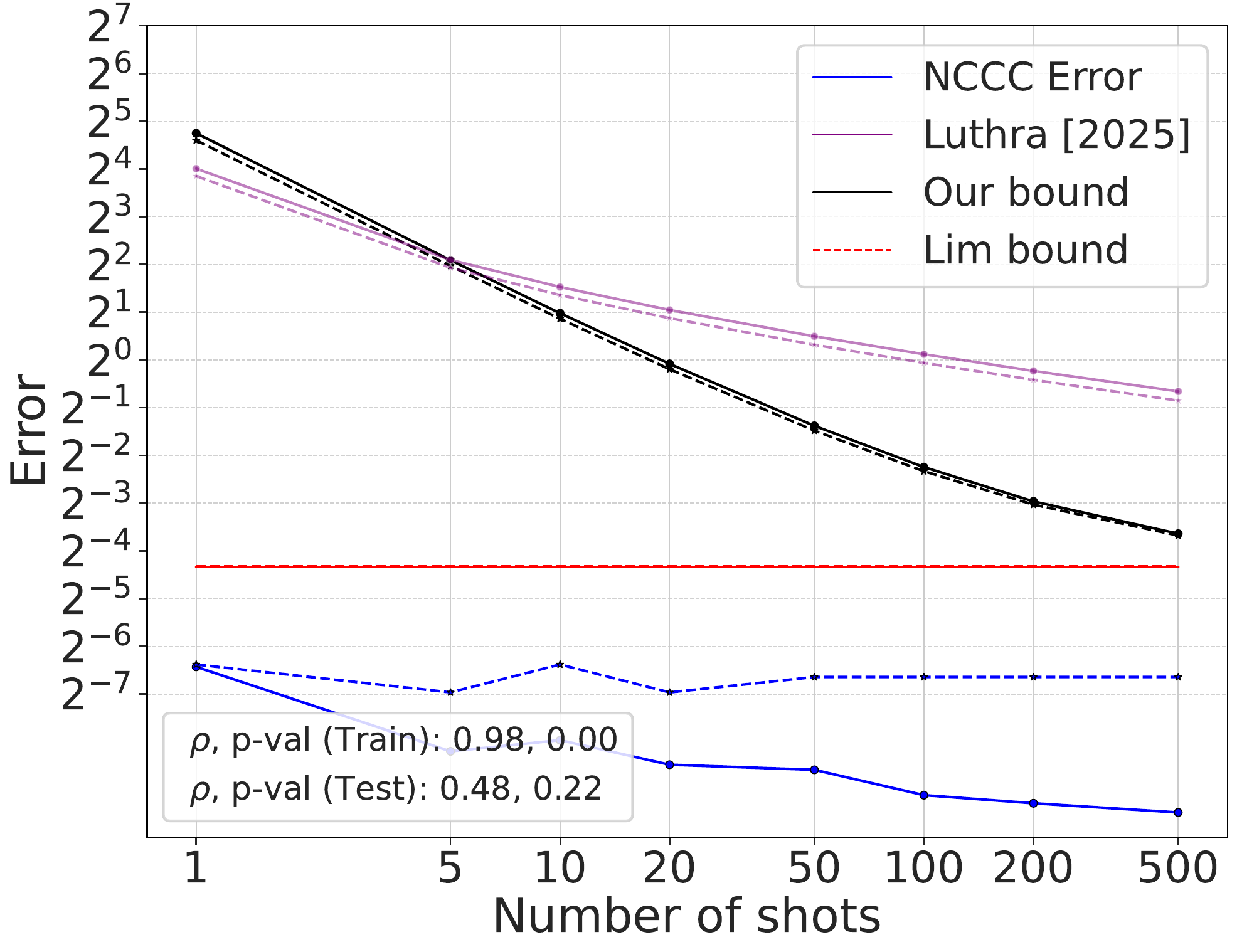} \\
    {\small {\bf MAE (ViT-B/16)}} & {\small {\bf CLIP (ViT-B/16)}} \\
    \includegraphics[width=0.45\linewidth]{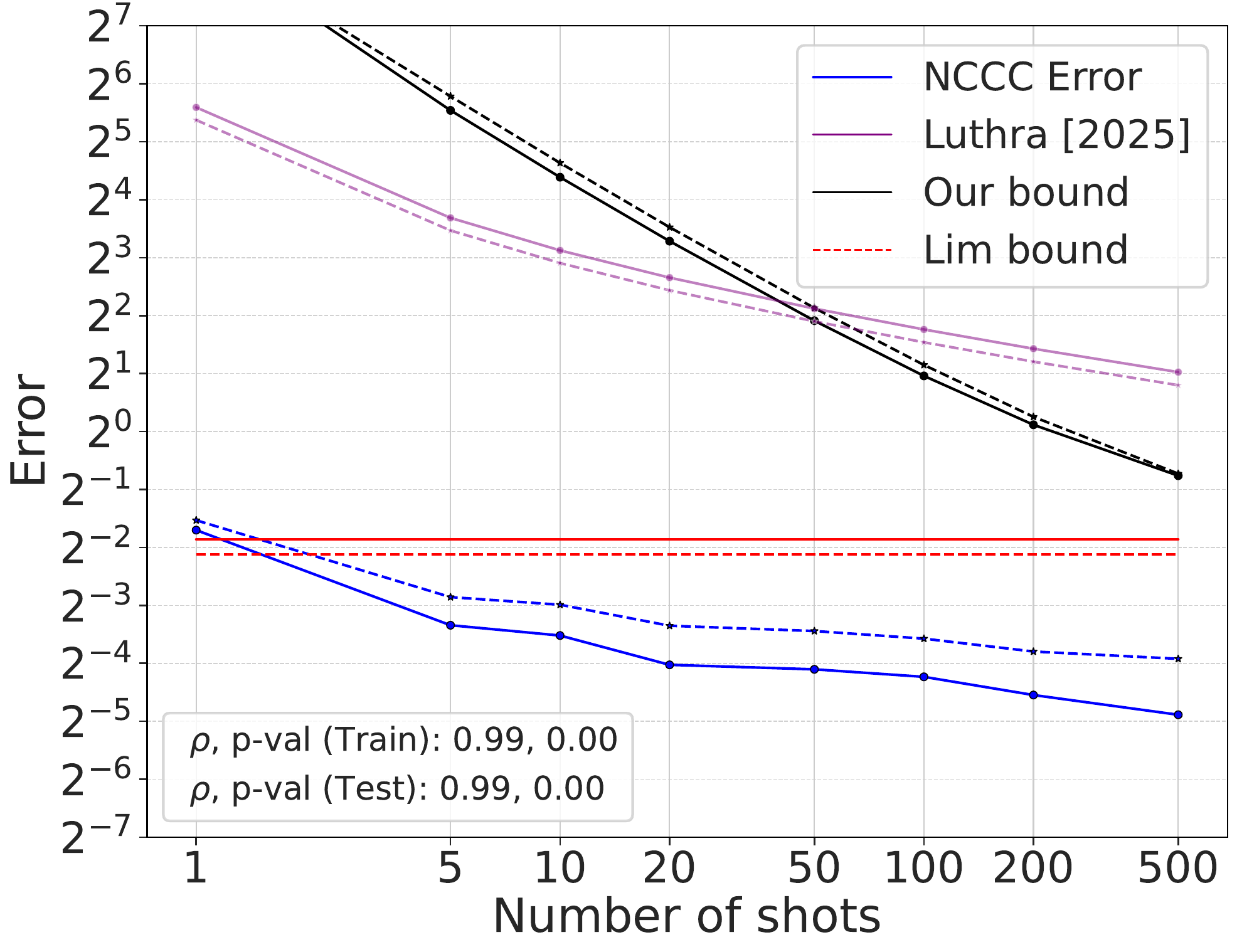}
 &
 \includegraphics[width=0.45\linewidth]{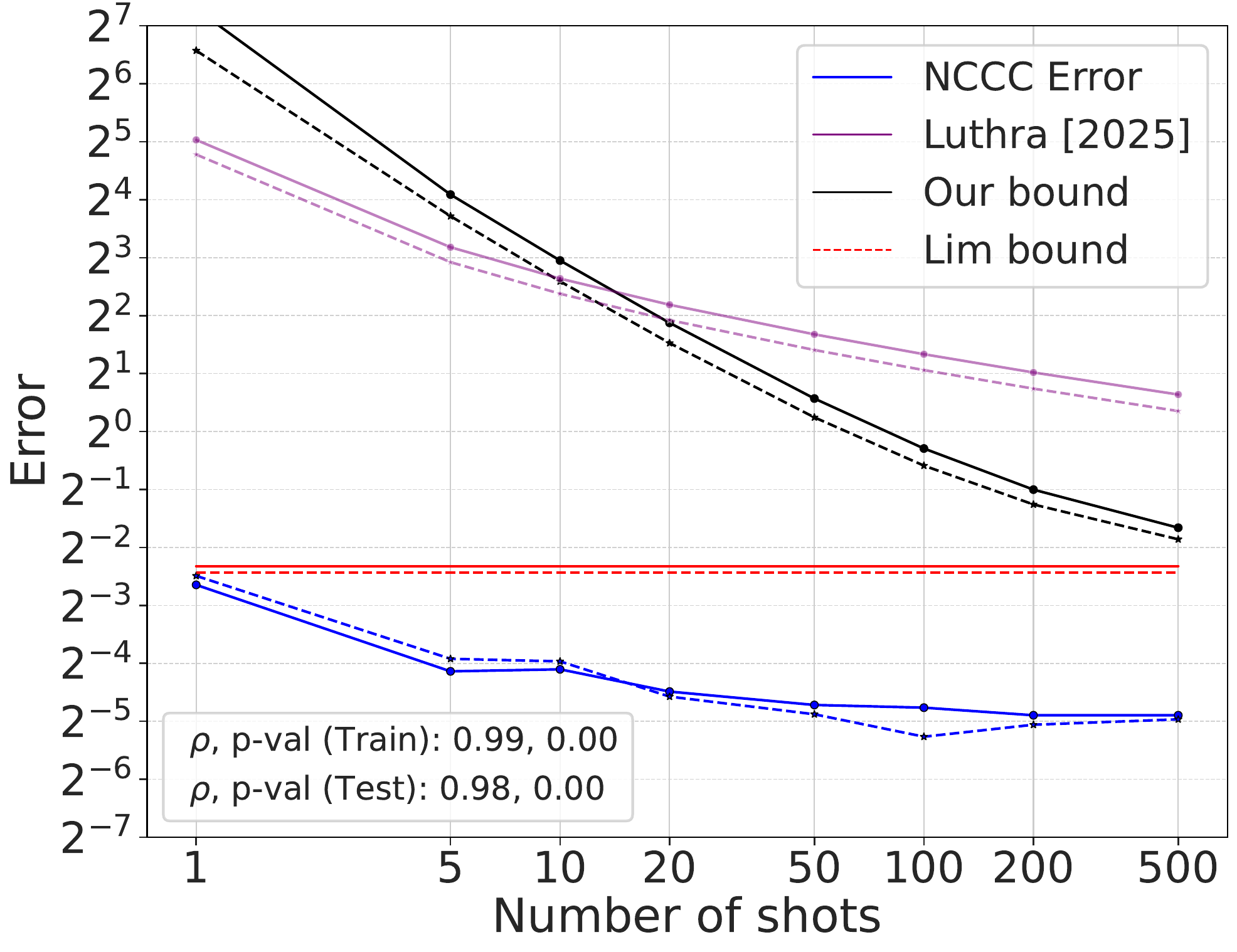} \\
 {\small {\bf I-Jepa (ViT-B/16)}} & {\small {\bf SigLip (ViT-B/16) }} \\
     \includegraphics[width=0.45\linewidth]{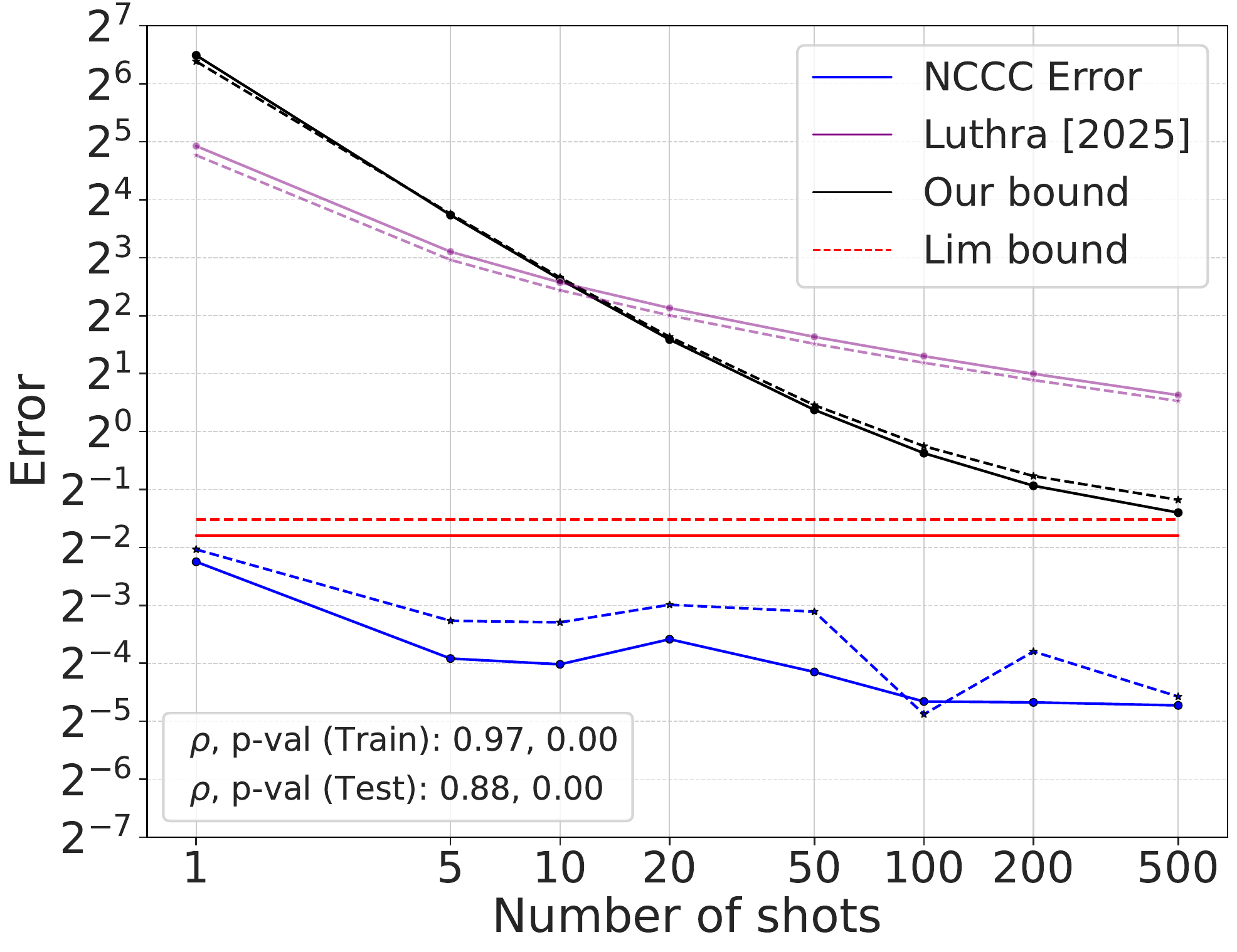} &
    \includegraphics[width=0.45\linewidth]{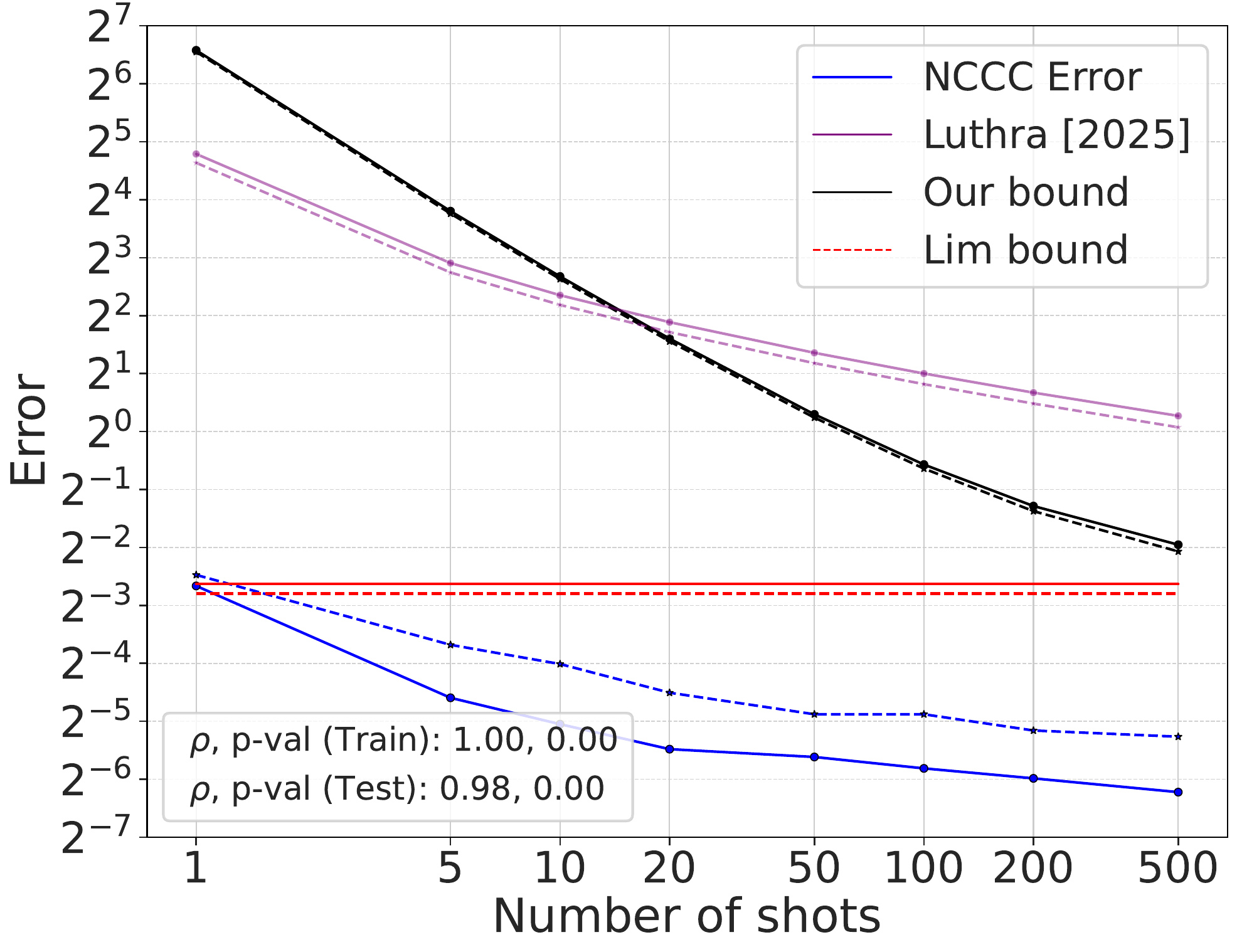} \\
 {\small {\bf SimCLR (ResNet-50)}} & {\small {\bf VICReg (ResNet-50)}} \\

  \end{tabular}
\caption{\textbf{Decision-axis variance yields informative few-shot certificates in SSL.} We plot few-shot NCC test error versus the number of shots per class, $m$, for several pretrained SSL encoders, together with certified upper bounds from our analysis. We compare our finite-$m$ bound to the directional-only $m\to\infty$ limit and to the bound of~\cite{luthra2025selfsupervisedcontrastivelearningapproximately}.}
\label{fig:bounds}
\end{figure}

\begin{figure}[t]
  \centering
  \begin{tabular}{ccccc}
    \includegraphics[width=0.45\linewidth]{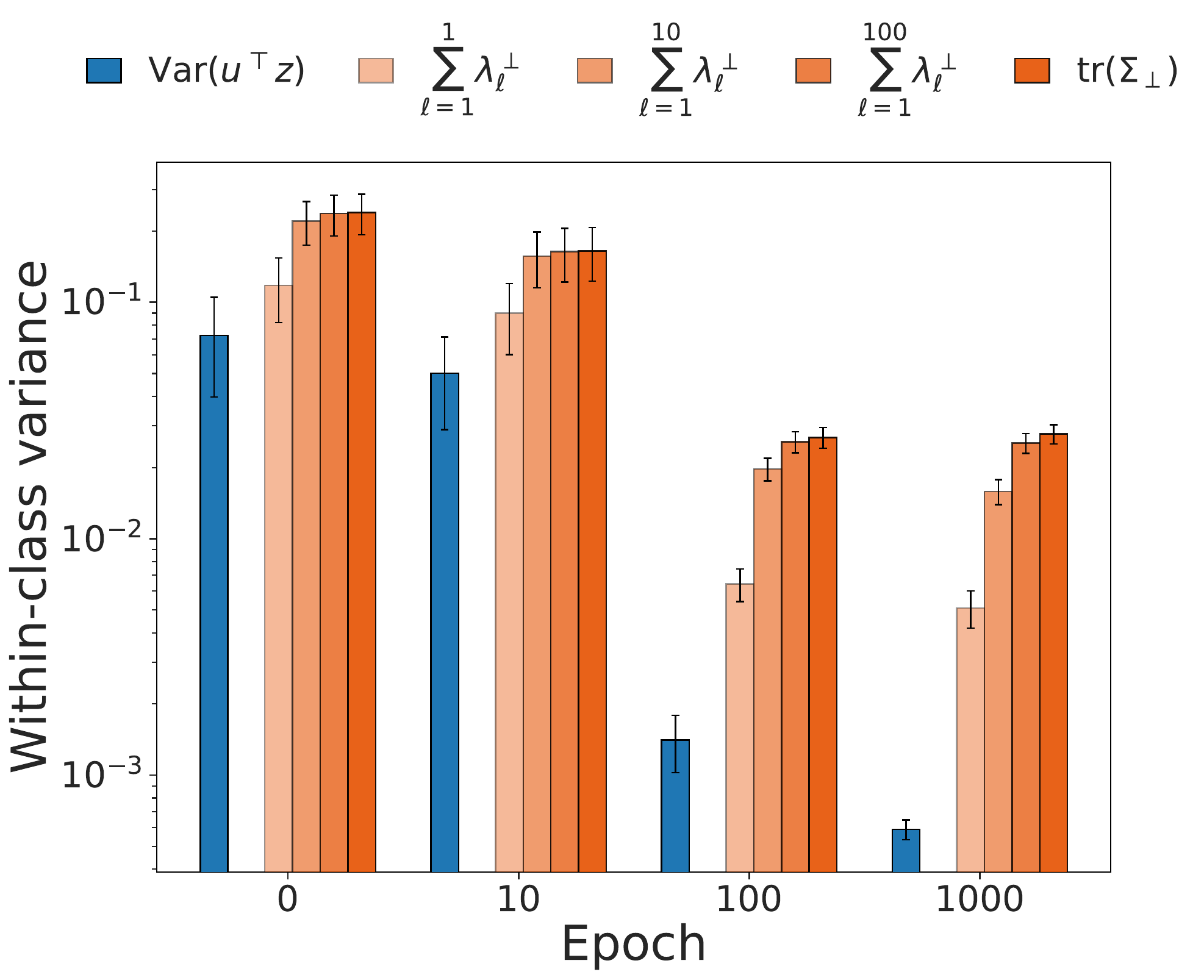} &
\includegraphics[width=0.45\linewidth]{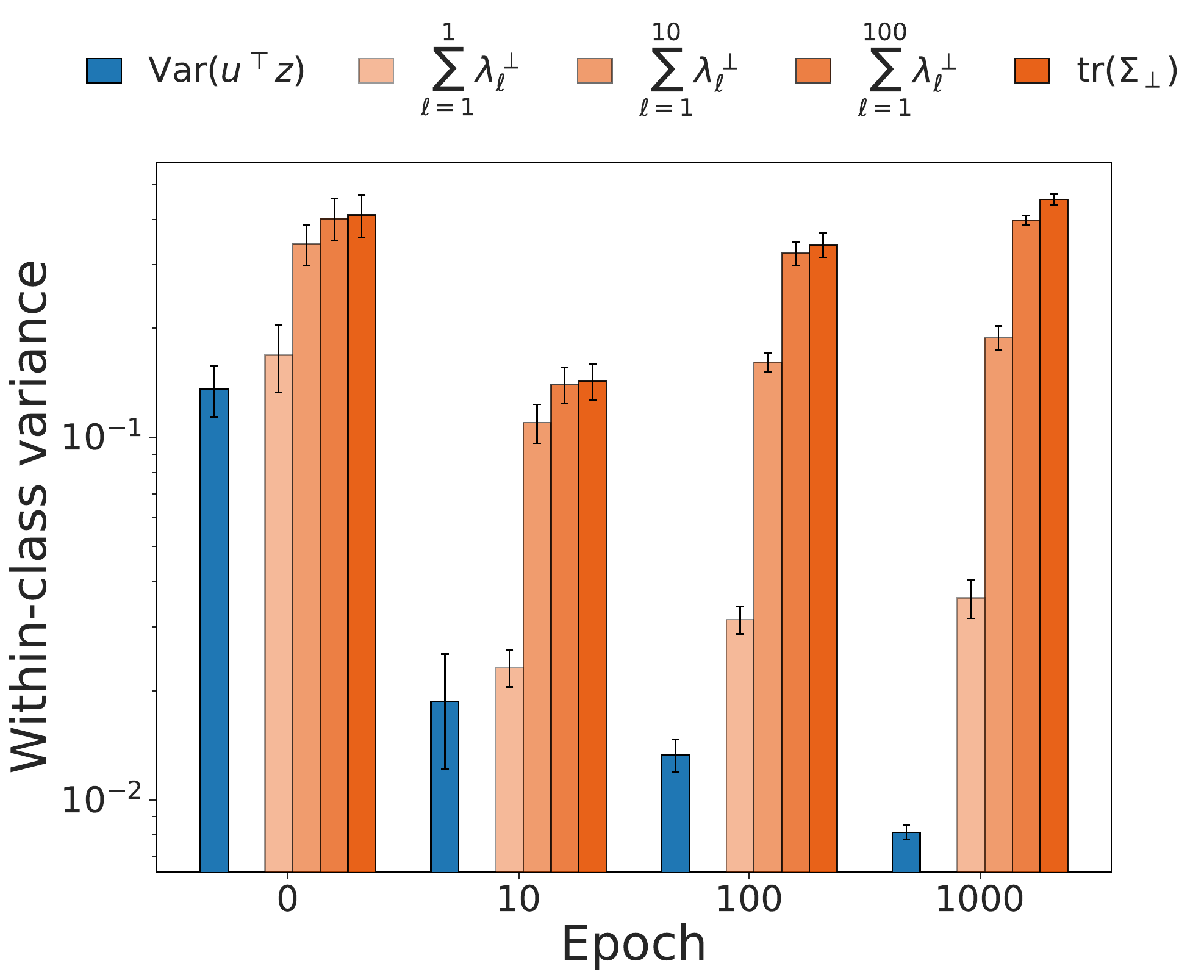} \\
{\small\bf MAE (ViT-B/16)} & {\small\bf SimCLR (ResNet-50)} \\

    \includegraphics[width=0.45\linewidth]{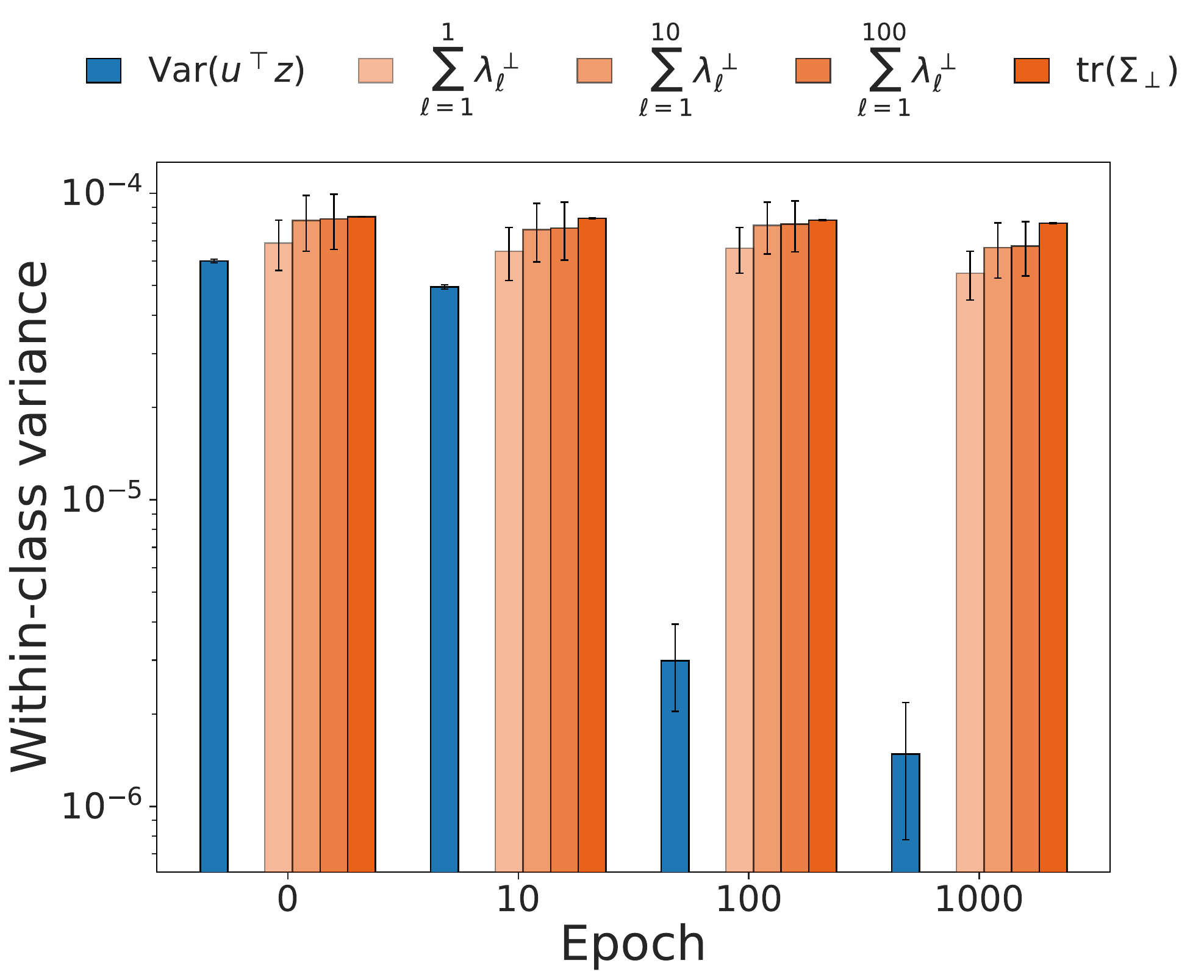}
 &
 \includegraphics[width=0.45\linewidth]{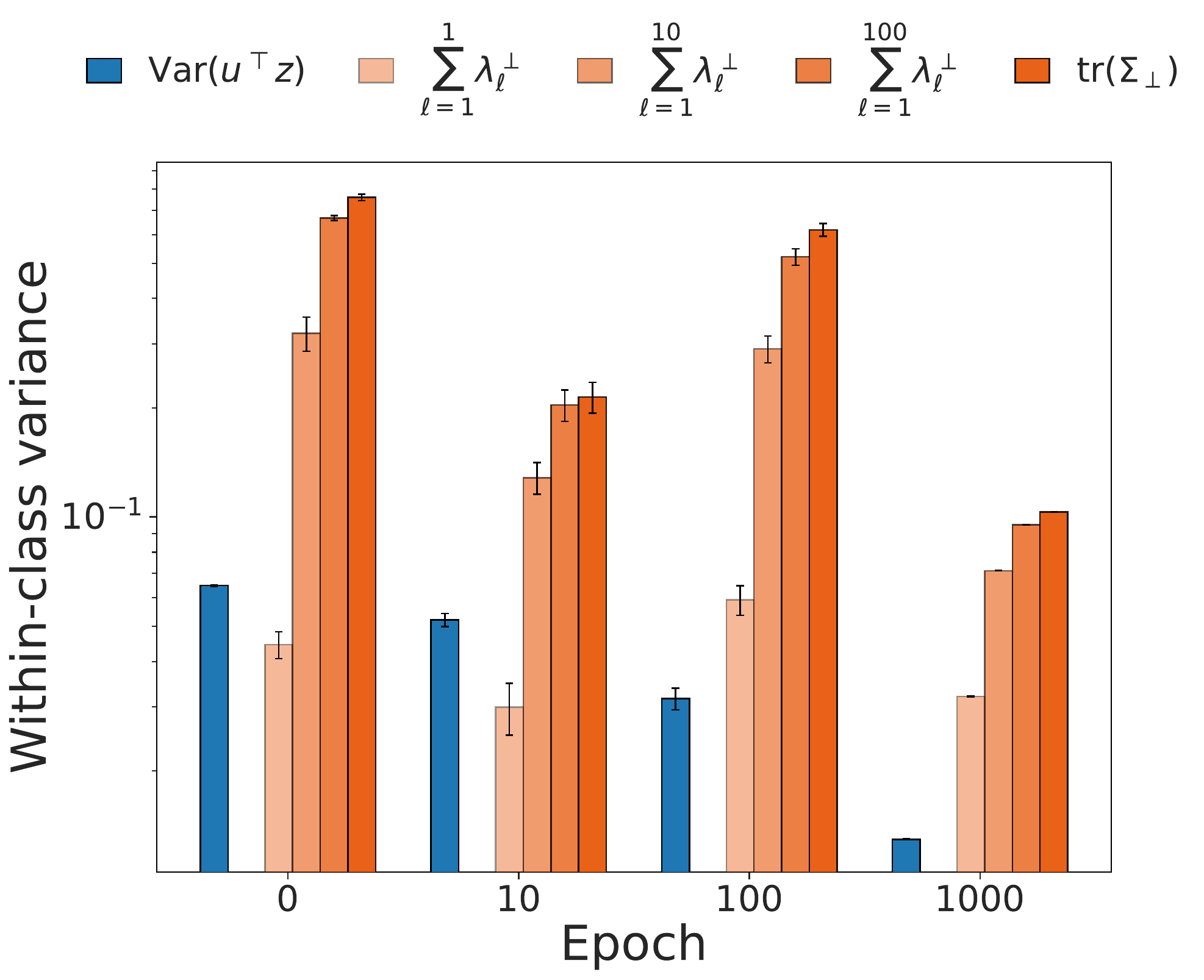} \\
 {\small {\bf DINO-v2 (ViT-B/16)}} & {\small {\bf VicReg (ResNet-50)}} \\

  \end{tabular}
\caption{\textbf{Decision-axis variance collapses while orthogonal variance remains large.} 
Within-class variance decomposed into decision-axis and orthogonal components shows rapid collapse along the task-relevant direction despite persistently large orthogonal variance.
}
\label{fig:variance-decomp}
\end{figure}


\subsection{Settings}

In this section we describe the experimental settings. Please refer to App.~\ref{app:details} for the full training, optimization, and augmentation details.

{\bf Datasets and augmentations. \enspace} We conduct all our experiments on the standard image classification dataset -- mini-ImageNet~\citep{NIPS2016_90e13578} which is a subset of ImageNet-1K~\cite{5206848}. 

{\bf Methods, optimizers and augmentations. \enspace}
We study a range of self-supervised paradigms, including contrastive learning (SimCLR~\citep{pmlr-v119-chen20j}, VICReg~\citep{bardes2022vicreg}), masked modeling (I-JEPA~\citep{DBLP:conf/cvpr/AssranDMBVRLB23}, MAE~\citep{he2022masked}), distillation-based methods (DINO-v2~\citep{oquab2024dinov2learningrobustvisual}), and multimodal pretraining (CLIP~\citep{pmlr-v139-radford21a}, and SigLIP~\citep{Zhai_2023_ICCV}.
For learning-dynamics experiments, we train SimCLR, VICReg, MAE, and DINO-v2 from scratch on mini-ImageNet. Across experiments, we use ResNet-18/ResNet-50 backbones~\citep{He_2016_CVPR} (with width multiplier 2) and a ViT-Base (ViT-B/16) backbone~\citep{dosovitskiy2021imageworth16x16words}, and follow the standard augmentation pipelines associated with each method.

\subsection{Results}

{\bf Directional-CDNV vs. CDNV.\enspace} In Fig.~\ref{fig:dir} we test whether SSL training preferentially minimizes directional CDNV.
To do so, we pretrain MAE, SimCLR, DINO-v2, and VICReg for 1000 epochs and track both standard CDNV and directional CDNV on the training and test sets.
As shown in Fig.~\ref{fig:dir}, directional CDNV drops dramatically over training, from roughly \(2^{-1}\!-\!2^{1}\) down to about \(2^{-3}\!-\!2^{-5}\).
In contrast, standard CDNV decreases only modestly and in some cases even increases transiently. These trends suggest that full within-class variance collapse is not the dominant effect in SSL. Instead, SSL mainly suppresses variability along class-separating directions, and this directional form of collapse appears consistently across a broad range of methods.

{\bf Validating Thm.~\ref{thm:ncc-full-optimized} with varying $m$.\enspace} We use frozen, off-the-shelf vision encoders, pretrained on IM-1K for vision-only SSL methods, on 400M image-text pairs for CLIP, and on WebLI~\citep{chen2023palijointlyscaledmultilinguallanguageimage} for SigLIP. We then evaluate nearest-class-centroid (NCC) classification on mini-ImageNet without finetuning, using binary tasks ($C=2$) formed by randomly selecting two classes. For each shot count $m$, we estimate class centroids from $m$ randomly sampled training examples per class and report test accuracy; we average over both the random choice of the two classes and the random draw of the $m$ shots, repeating this procedure five times.

As shown in Fig.~\ref{fig:bounds}, our finite-$m$ certificates are informative at practical shot counts and closely track the observed few-shot error. The finite-$m$ curve converges (by construction) to the directional-only $m\to\infty$ limit shown in the figure. Importantly, the bound is non-vacuous in this regime: for moderate $m$, our certified error drops below the $0.5$ chance threshold, whereas the bound of~\cite{luthra2025selfsupervisedcontrastivelearningapproximately} remains above $0.5$ here and is therefore vacuous.

{\bf Decision-axis vs orthogonal variance.\enspace} To examine how within-class variance evolves during self-supervised training, we analyze the geometry of learned representations at the level of class pairs. For each model and training epoch, we randomly select 20 class pairs $(i,j)$ and define the corresponding decision axis $u_{ij}$ as the normalized difference between class means. Using this axis and the within-class covariance matrix of representations \(\Sigma_{ij} \;=\; \frac{1}{N_{ij}} \sum_{n:\,y_n\in\{i,j\}} (z_n-\mu_{y_n}) (z_n-\mu_{y_n})^\top\), we decompose it into a component along the decision axis as \(u_{ij}^\top \Sigma_{ij} u_{ij}\). We compute the corresponding orthogonal covariance matrix as \(P_{ij}^{\perp}\,\Sigma_{ij}\,P_{ij}^{\perp}\) where $P_{ij}^{\perp} = I - u_{ij}u_{ij}^\top$. We then perform an eigen-decomposition of the orthogonal covariance matrix and compute the cumulative variance captured by its top $k$ principle directions. 

In Fig.~\ref{fig:variance-decomp}, we report variance along the decision axis, cumulative orthogonal variance for \(k \in \{1, 10, 100\}\), and the total orthogonal variance, averaged across the selected pairs. Across all methods, variance along the decision axis collapses rapdily with training, while substantial variance persists in the orthogonal directions which are irrelevant for task-specific downstream classification.  

\begin{figure}[t]
    \centering
    \begin{tabular}{cc}
         \includegraphics[width=0.49\linewidth]{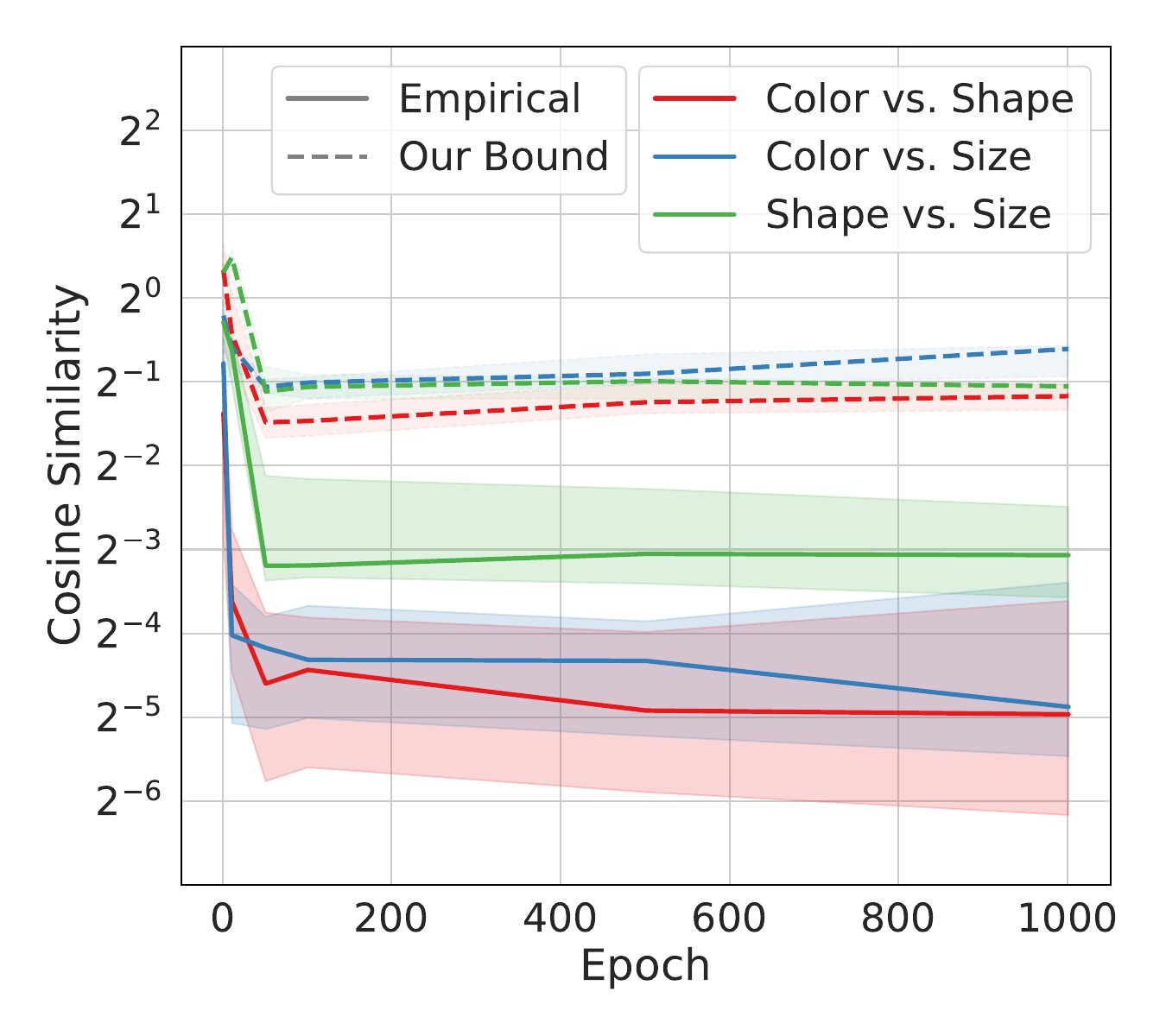} & 
         \includegraphics[width=0.49\linewidth]{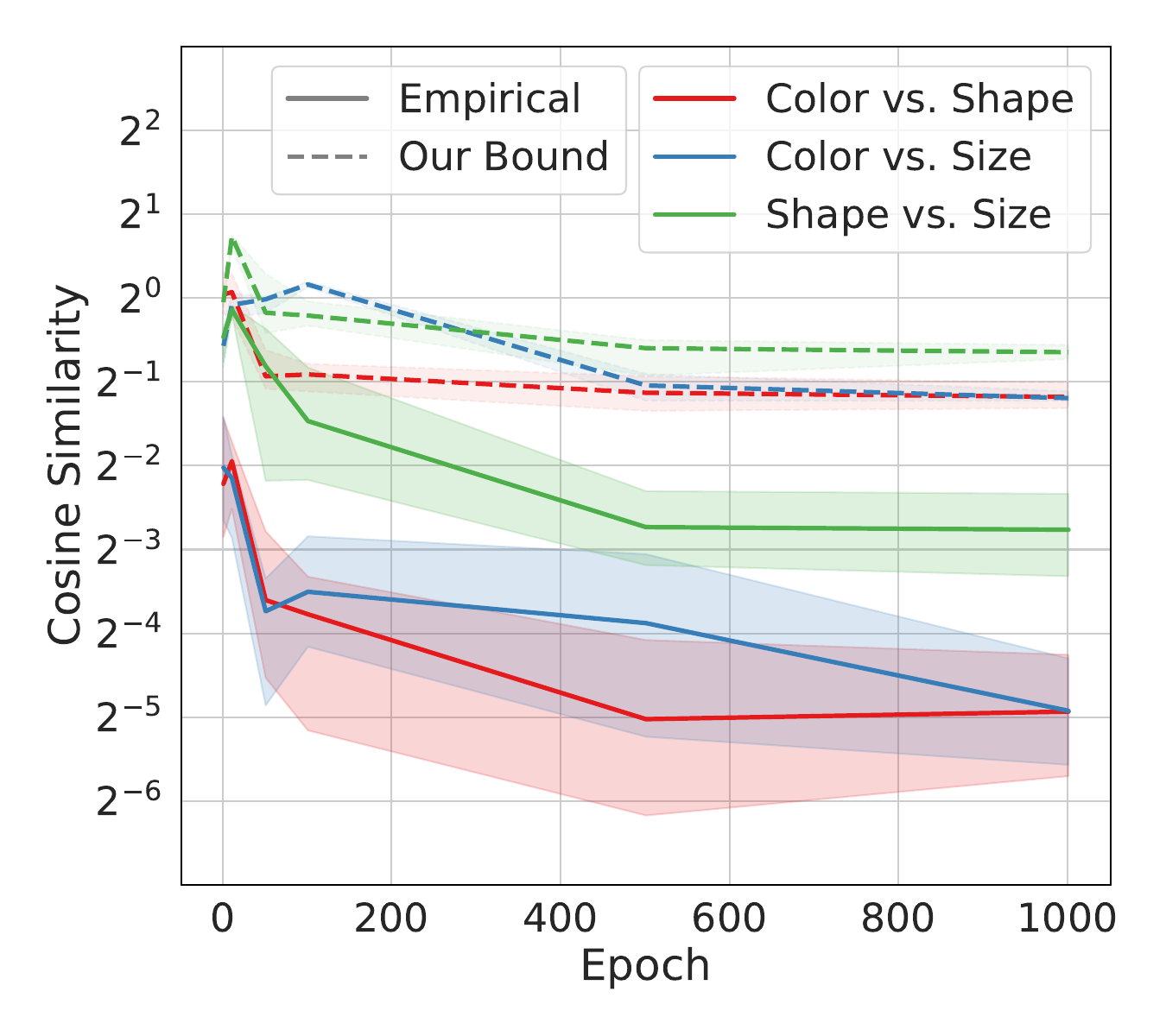} \\
         {\small {\bf (a) SimCLR (ResNet-18)}} & {\small {\bf (b) VicReg (ResNet-50)}}
    \end{tabular}
    \caption{{\bf Multitask decision-axis orthogonalization in SSL.} Median absolute cosine similarity (25–75\% bands) between decision axes from different semantic labelings during training; alignments decay toward zero and are upper-bounded by our theory (dashed).}
    \label{fig:multitask_orth}
\end{figure}

{\bf Multitask orthogonal representation decomposition.\enspace} To study how representations learned via self-supervised learning organize multiple semantic labelings in high-dimensional feature space, we design a controlled synthetic experiment (Fig.~\ref{fig:multitask_orth}). Each sample is a \((64 \times64)\) image generated from a combination of independent factors of variation, including \emph{color} ($L_1$), \emph{background style} ($L_2$), \emph{shape} ($L_3$), and \emph{shape size} ($L_4$), yielding multiple valid labelings over the same set of inputs.
We pre-train a ResNet-18 encoder from scratch on this dataset using self-supervised objectives. At different stages of training, we extract the learned feature representations and compute class means for each labeling $L_n$, denoted by $\mu^{(L_n)}_{y}$. For each labeling, we define \emph{decision axes} as normalized differences between class means: $u^{(L_n)}_{y_i y_j}
= (\mu^{(L_n)}_{y_i} - \mu^{(L_n)}_{y_j})/
\| \mu^{(L_n)}_{y_i} - \mu^{(L_n)}_{y_j} \|$, where $y_i \neq y_j$. These directions characterize the linear decision geometry associated with each labeling. To quantify the interaction between different labelings, we compute the cosine similarity between decision axes drawn from \emph{distinct} labelings, i.e.,
$\lvert \langle u^{(L_a)}_{ij}, u^{(L_b)}_{kl} \rangle \rvert$ for $L_a \neq L_b$, and aggregate statistics across all class pairs.

In Fig.~\ref{fig:multitask_orth}, we report the median absolute cosine similarity together with interquartile (25--75\%) bands over training. At random initialization, decision axes corresponding to different labelings exhibit substantial alignment, indicating entangled representations. As training progresses, these cosine similarities rapidly decrease and stabilize near zero, demonstrating that the learned representation decomposes into \emph{approximately orthogonal semantic subspaces} corresponding to different labelings. Importantly, this orthogonalization occurs simultaneously for multiple independent labelings, indicating that SSL representations do not collapse to a single discriminative direction but instead support multiple, low-interference decision geometries.

\section*{Impact Statement}

This work is theoretical and raises no direct ethical, safety, or environmental concerns. We study conditions under which self-supervised learning methods adapt to downstream tasks, introduce representation properties that help explain when this is possible, and provide empirical evidence supporting these claims. These results inform the high-level design and evaluation of future self-supervised learning algorithms, with very limited direct societal impact.

\bibliography{refs}

@inproceedings{
bardes2022vicreg,
title={{VICR}eg: Variance-Invariance-Covariance Regularization for Self-Supervised Learning},
author={Adrien Bardes and Jean Ponce and Yann LeCun},
booktitle={International Conference on Learning Representations},
year={2022},
url={https://openreview.net/forum?id=xm6YD62D1Ub}
}

@inproceedings{DBLP:conf/icml/YuanWQDZZY22,
  author       = {Zhuoning Yuan and
                  Yuexin Wu and
                  Zi{-}Hao Qiu and
                  Xianzhi Du and
                  Lijun Zhang and
                  Denny Zhou and
                  Tianbao Yang},
  editor       = {Kamalika Chaudhuri and
                  Stefanie Jegelka and
                  Le Song and
                  Csaba Szepesv{\'{a}}ri and
                  Gang Niu and
                  Sivan Sabato},
  title        = {Provable Stochastic Optimization for Global Contrastive Learning:
                  Small Batch Does Not Harm Performance},
  booktitle    = {International Conference on Machine Learning, {ICML} 2022, 17-23 July
                  2022, Baltimore, Maryland, {USA}},
  series       = {Proceedings of Machine Learning Research},
  volume       = {162},
  pages        = {25760--25782},
  publisher    = {{PMLR}},
  year         = {2022},
  url          = {https://proceedings.mlr.press/v162/yuan22b.html},
  bibsource    = {dblp computer science bibliography, https://dblp.org}
}

@InProceedings{pmlr-v119-chen20j,
  title = 	 {A Simple Framework for Contrastive Learning of Visual Representations},
  author =       {Chen, Ting and Kornblith, Simon and Norouzi, Mohammad and Hinton, Geoffrey},
  booktitle = 	 {Proceedings of the 37th International Conference on Machine Learning},
  pages = 	 {1597--1607},
  year = 	 {2020},
  editor = 	 {III, Hal Daumé and Singh, Aarti},
  volume = 	 {119},
  series = 	 {Proceedings of Machine Learning Research},
  month = 	 {13--18 Jul},
  publisher =    {PMLR},
  url = 	 {https://proceedings.mlr.press/v119/chen20j.html},
}

@InProceedings{pmlr-v162-awasthi22b,
  title = 	 {Do More Negative Samples Necessarily Hurt In Contrastive Learning?},
  author =       {Awasthi, Pranjal and Dikkala, Nishanth and Kamath, Pritish},
  booktitle = 	 {Proceedings of the 39th International Conference on Machine Learning},
  pages = 	 {1101--1116},
  year = 	 {2022},
  editor = 	 {Chaudhuri, Kamalika and Jegelka, Stefanie and Song, Le and Szepesvari, Csaba and Niu, Gang and Sabato, Sivan},
  volume = 	 {162},
  series = 	 {Proceedings of Machine Learning Research},
  month = 	 {17--23 Jul},
  publisher =    {PMLR},
  url = 	 {https://proceedings.mlr.press/v162/awasthi22b.html},
}

@InProceedings{pmlr-v162-bao22e,
  title = 	 {On the Surrogate Gap between Contrastive and Supervised Losses},
  author =       {Bao, Han and Nagano, Yoshihiro and Nozawa, Kento},
  booktitle = 	 {Proceedings of the 39th International Conference on Machine Learning},
  pages = 	 {1585--1606},
  year = 	 {2022},
  editor = 	 {Chaudhuri, Kamalika and Jegelka, Stefanie and Song, Le and Szepesvari, Csaba and Niu, Gang and Sabato, Sivan},
  volume = 	 {162},
  series = 	 {Proceedings of Machine Learning Research},
  month = 	 {17--23 Jul},
  publisher =    {PMLR},
  url = 	 {https://proceedings.mlr.press/v162/bao22e.html},
}

@misc{feigin2025theoreticalcharacterizationoptimaldata,
      title={A Theoretical Characterization of Optimal Data Augmentations in Self-Supervised Learning}, 
      author={Shlomo Libo Feigin and Maximilian Fleissner and Debarghya Ghoshdastidar},
      year={2025},
      eprint={2411.01767},
      archivePrefix={arXiv},
      primaryClass={cs.LG},
      url={https://arxiv.org/abs/2411.01767}, 
}

@inproceedings{
tian2023understanding,
title={Understanding the Role of Nonlinearity in Training Dynamics of Contrastive Learning},
author={Yuandong Tian},
booktitle={The Eleventh International Conference on Learning Representations },
year={2023},
url={https://openreview.net/forum?id=s130rTE3U_X}
}

@InProceedings{pmlr-v139-wen21c,
  title = 	 {Toward Understanding the Feature Learning Process of Self-supervised Contrastive Learning},
  author =       {Wen, Zixin and Li, Yuanzhi},
  booktitle = 	 {Proceedings of the 38th International Conference on Machine Learning},
  pages = 	 {11112--11122},
  year = 	 {2021},
  editor = 	 {Meila, Marina and Zhang, Tong},
  volume = 	 {139},
  series = 	 {Proceedings of Machine Learning Research},
  month = 	 {18--24 Jul},
  publisher =    {PMLR},
  url = 	 {https://proceedings.mlr.press/v139/wen21c.html},
}

@inproceedings{NEURIPS2022_7b5c9cc0,
 author = {Tian, Yuandong},
 booktitle = {Advances in Neural Information Processing Systems},
 editor = {S. Koyejo and S. Mohamed and A. Agarwal and D. Belgrave and K. Cho and A. Oh},
 pages = {19511--19522},
 publisher = {Curran Associates, Inc.},
 title = {Understanding Deep Contrastive Learning via Coordinate-wise Optimization},
 url = {https://proceedings.neurips.cc/paper_files/paper/2022/file/7b5c9cc08960df40615c1d858961eb8b-Paper-Conference.pdf},
 volume = {35},
 year = {2022}
}

@article{10.5555/3648699.3649029,
author = {Ji, Wenlong and Deng, Zhun and Nakada, Ryumei and Zou, James and Zhang, Linjun},
title = {The power of contrast for feature learning: a theoretical analysis},
year = {2023},
issue_date = {January 2023},
publisher = {JMLR.org},
volume = {24},
number = {1},
issn = {1532-4435},
journal = {J. Mach. Learn. Res.},
month = jan,
articleno = {330},
numpages = {78}
}

@article{
galanti2023comparative,
title={Comparative Generalization Bounds for Deep Neural Networks},
author={Tomer Galanti and Liane Galanti and Ido Ben-Shaul},
journal={Transactions on Machine Learning Research},
issn={2835-8856},
year={2023},
url={https://openreview.net/forum?id=162TqkUNPO},
note={}
}

@inproceedings{NEURIPS2021_02e656ad,
 author = {Lee, Jason D and Lei, Qi and Saunshi, Nikunj and ZHUO, JIACHENG},
 booktitle = {Advances in Neural Information Processing Systems},
 editor = {M. Ranzato and A. Beygelzimer and Y. Dauphin and P.S. Liang and J. Wortman Vaughan},
 pages = {309--323},
 publisher = {Curran Associates, Inc.},
 title = {Predicting What You Already Know Helps: Provable Self-Supervised Learning},
 url = {https://proceedings.neurips.cc/paper_files/paper/2021/file/02e656adee09f8394b402d9958389b7d-Paper.pdf},
 volume = {34},
 year = {2021}
}

@inproceedings{
garrido2023on,
title={On the duality between contrastive and non-contrastive self-supervised learning},
author={Quentin Garrido and Yubei Chen and Adrien Bardes and Laurent Najman and Yann LeCun},
booktitle={The Eleventh International Conference on Learning Representations },
year={2023},
url={https://openreview.net/forum?id=kDEL91Dufpa}
}

@inproceedings{
haochen2023a,
title={A theoretical study of inductive biases in contrastive learning},
author={Jeff Z. HaoChen and Tengyu Ma},
booktitle={The Eleventh International Conference on Learning Representations },
year={2023},
url={https://openreview.net/forum?id=AuEgNlEAmed}
}

@inproceedings{10.5555/3600270.3602219,
author = {HaoChen, Jeff Z. and Wei, Colin and Kumar, Ananya and Ma, Tengyu},
title = {Beyond separability: analyzing the linear transferability of contrastive representations to related subpopulations},
year = {2022},
isbn = {9781713871088},
publisher = {Curran Associates Inc.},
address = {Red Hook, NY, USA},
booktitle = {Proceedings of the 36th International Conference on Neural Information Processing Systems},
articleno = {1949},
numpages = {14},
location = {New Orleans, LA, USA},
series = {NIPS '22}
}

@InProceedings{pmlr-v162-shen22d,
  title = 	 {Connect, Not Collapse: Explaining Contrastive Learning for Unsupervised Domain Adaptation},
  author =       {Shen, Kendrick and Jones, Robbie M and Kumar, Ananya and Xie, Sang Michael and Haochen, Jeff Z. and Ma, Tengyu and Liang, Percy},
  booktitle = 	 {Proceedings of the 39th International Conference on Machine Learning},
  pages = 	 {19847--19878},
  year = 	 {2022},
  editor = 	 {Chaudhuri, Kamalika and Jegelka, Stefanie and Song, Le and Szepesvari, Csaba and Niu, Gang and Sabato, Sivan},
  volume = 	 {162},
  series = 	 {Proceedings of Machine Learning Research},
  month = 	 {17--23 Jul},
  publisher =    {PMLR},
  url = 	 {https://proceedings.mlr.press/v162/shen22d.html},
}

@inproceedings{NEURIPS2021_27debb43,
 author = {HaoChen, Jeff Z. and Wei, Colin and Gaidon, Adrien and Ma, Tengyu},
 booktitle = {Advances in Neural Information Processing Systems},
 editor = {M. Ranzato and A. Beygelzimer and Y. Dauphin and P.S. Liang and J. Wortman Vaughan},
 pages = {5000--5011},
 publisher = {Curran Associates, Inc.},
 title = {Provable Guarantees for Self-Supervised Deep Learning with Spectral Contrastive Loss},
 url = {https://proceedings.neurips.cc/paper_files/paper/2021/file/27debb435021eb68b3965290b5e24c49-Paper.pdf},
 volume = {34},
 year = {2021}
}

@inproceedings{NEURIPS2021_2dace78f,
 author = {Nozawa, Kento and Sato, Issei},
 booktitle = {Advances in Neural Information Processing Systems},
 editor = {M. Ranzato and A. Beygelzimer and Y. Dauphin and P.S. Liang and J. Wortman Vaughan},
 pages = {5784--5797},
 publisher = {Curran Associates, Inc.},
 title = {Understanding Negative Samples in Instance Discriminative Self-supervised Representation Learning},
 url = {https://proceedings.neurips.cc/paper_files/paper/2021/file/2dace78f80bc92e6d7493423d729448e-Paper.pdf},
 volume = {34},
 year = {2021}
}

@INPROCEEDINGS{5206848,
  author={Deng, Jia and Dong, Wei and Socher, Richard and Li, Li-Jia and Kai Li and Li Fei-Fei},
  booktitle={2009 IEEE Conference on Computer Vision and Pattern Recognition}, 
  title={ImageNet: A large-scale hierarchical image database}, 
  year={2009},
  volume={},
  number={},
  pages={248-255},
  doi={10.1109/CVPR.2009.5206848}}

@inproceedings{NIPS2016_90e13578,
 author = {Vinyals, Oriol and Blundell, Charles and Lillicrap, Timothy and kavukcuoglu, koray and Wierstra, Daan},
 booktitle = {Advances in Neural Information Processing Systems},
 editor = {D. Lee and M. Sugiyama and U. Luxburg and I. Guyon and R. Garnett},
 pages = {},
 publisher = {Curran Associates, Inc.},
 title = {Matching Networks for One Shot Learning},
 url = {https://proceedings.neurips.cc/paper_files/paper/2016/file/90e1357833654983612fb05e3ec9148c-Paper.pdf},
 volume = {29},
 year = {2016}
}

@InProceedings{He_2016_CVPR,
author = {He, Kaiming and Zhang, Xiangyu and Ren, Shaoqing and Sun, Jian},
title = {Deep Residual Learning for Image Recognition},
booktitle = {Proceedings of the IEEE Conference on Computer Vision and Pattern Recognition (CVPR)},
month = {June},
year = {2016}
}

@inproceedings{NEURIPS2019_ddf35421,
 author = {Bachman, Philip and Hjelm, R Devon and Buchwalter, William},
 booktitle = {Advances in Neural Information Processing Systems},
 editor = {H. Wallach and H. Larochelle and A. Beygelzimer and F. d\textquotesingle Alch\'{e}-Buc and E. Fox and R. Garnett},
 pages = {},
 publisher = {Curran Associates, Inc.},
 title = {Learning Representations by Maximizing Mutual Information Across Views},
 url = {https://proceedings.neurips.cc/paper_files/paper/2019/file/ddf354219aac374f1d40b7e760ee5bb7-Paper.pdf},
 volume = {32},
 year = {2019}
}

@inproceedings{NEURIPS2020_4c2e5eaa,
 author = {Tian, Yonglong and Sun, Chen and Poole, Ben and Krishnan, Dilip and Schmid, Cordelia and Isola, Phillip},
 booktitle = {Advances in Neural Information Processing Systems},
 editor = {H. Larochelle and M. Ranzato and R. Hadsell and M.F. Balcan and H. Lin},
 pages = {6827--6839},
 publisher = {Curran Associates, Inc.},
 title = {What Makes for Good Views for Contrastive Learning?},
 url = {https://proceedings.neurips.cc/paper_files/paper/2020/file/4c2e5eaae9152079b9e95845750bb9ab-Paper.pdf},
 volume = {33},
 year = {2020}
}

@misc{arora2019theoreticalanalysiscontrastiveunsupervised,
      title={A Theoretical Analysis of Contrastive Unsupervised Representation Learning}, 
      author={Sanjeev Arora and Hrishikesh Khandeparkar and Mikhail Khodak and Orestis Plevrakis and Nikunj Saunshi},
      year={2019},
      eprint={1902.09229},
      archivePrefix={arXiv},
      primaryClass={cs.LG},
      url={https://arxiv.org/abs/1902.09229}, 
}

@inproceedings{
wang2022chaos,
title={Chaos is a Ladder: A New Theoretical Understanding of Contrastive Learning via Augmentation Overlap},
author={Yifei Wang and Qi Zhang and Yisen Wang and Jiansheng Yang and Zhouchen Lin},
booktitle={International Conference on Learning Representations},
year={2022},
url={https://openreview.net/forum?id=ECvgmYVyeUz}
}

@InProceedings{pmlr-v139-radford21a,
  title = 	 {Learning Transferable Visual Models From Natural Language Supervision},
  author =       {Radford, Alec and Kim, Jong Wook and Hallacy, Chris and Ramesh, Aditya and Goh, Gabriel and Agarwal, Sandhini and Sastry, Girish and Askell, Amanda and Mishkin, Pamela and Clark, Jack and Krueger, Gretchen and Sutskever, Ilya},
  booktitle = 	 {Proceedings of the 38th International Conference on Machine Learning},
  pages = 	 {8748--8763},
  year = 	 {2021},
  editor = 	 {Meila, Marina and Zhang, Tong},
  volume = 	 {139},
  series = 	 {Proceedings of Machine Learning Research},
  month = 	 {18--24 Jul},
  publisher =    {PMLR},
  url = 	 {https://proceedings.mlr.press/v139/radford21a.html},
}

@InProceedings{pmlr-v195-parulekar23a,
  title = 	 {InfoNCE Loss Provably Learns Cluster-Preserving Representations},
  author =       {Parulekar, Advait and Collins, Liam and Shanmugam, Karthikeyan and Mokhtari, Aryan and Shakkottai, Sanjay},
  booktitle = 	 {Proceedings of Thirty Sixth Conference on Learning Theory},
  pages = 	 {1914--1961},
  year = 	 {2023},
  editor = 	 {Neu, Gergely and Rosasco, Lorenzo},
  volume = 	 {195},
  series = 	 {Proceedings of Machine Learning Research},
  month = 	 {12--15 Jul},
  publisher =    {PMLR},
  url = 	 {https://proceedings.mlr.press/v195/parulekar23a.html},
}

@inproceedings{zimmermann2021contrastive,
  title={Contrastive learning inverts the data generating process},
  author={Zimmermann, Roland S and Sharma, Yash and Schneider, Steffen and Bethge, Matthias and Brendel, Wieland},
  booktitle={International Conference on Machine Learning},
  pages={12979--12990},
  year={2021},
  organization={PMLR}
}

@inproceedings{wang2021understanding,
  title={Understanding the behaviour of contrastive loss},
  author={Wang, Feng and Liu, Huaping},
  booktitle={Proceedings of the IEEE/CVF conference on computer vision and pattern recognition},
  pages={2495--2504},
  year={2021}
}

@inproceedings{wang2020understanding,
  title={Understanding contrastive representation learning through alignment and uniformity on the hypersphere},
  author={Wang, Tongzhou and Isola, Phillip},
  booktitle={International Conference on Machine Learning},
  pages={9929--9939},
  year={2020},
  organization={PMLR}
}

@inproceedings{tosh2021contrastive,
  title={Contrastive learning, multi-view redundancy, and linear models},
  author={Tosh, Christopher and Krishnamurthy, Akshay and Hsu, Daniel},
  booktitle={Algorithmic Learning Theory},
  pages={1179--1206},
  year={2021},
  organization={PMLR}
}

@InProceedings{pmlr-v162-saunshi22a,
  title = 	 {Understanding Contrastive Learning Requires Incorporating Inductive Biases},
  author =       {Saunshi, Nikunj and Ash, Jordan and Goel, Surbhi and Misra, Dipendra and Zhang, Cyril and Arora, Sanjeev and Kakade, Sham and Krishnamurthy, Akshay},
  booktitle = 	 {Proceedings of the 39th International Conference on Machine Learning},
  pages = 	 {19250--19286},
  year = 	 {2022},
  editor = 	 {Chaudhuri, Kamalika and Jegelka, Stefanie and Song, Le and Szepesvari, Csaba and Niu, Gang and Sabato, Sivan},
  volume = 	 {162},
  series = 	 {Proceedings of Machine Learning Research},
  month = 	 {17--23 Jul},
  publisher =    {PMLR},
  url = 	 {https://proceedings.mlr.press/v162/saunshi22a.html},
}

@article{chen2021intriguing,
  title={Intriguing properties of contrastive losses},
  author={Chen, Ting and Luo, Calvin and Li, Lala},
  journal={Advances in Neural Information Processing Systems},
  volume={34},
  pages={11834--11845},
  year={2021}
}

@InProceedings{pmlr-v151-ash22a,
  title = 	 {Investigating the Role of Negatives in Contrastive Representation Learning},
  author =       {Ash, Jordan and Goel, Surbhi and Krishnamurthy, Akshay and Misra, Dipendra},
  booktitle = 	 {Proceedings of The 25th International Conference on Artificial Intelligence and Statistics},
  pages = 	 {7187--7209},
  year = 	 {2022},
  editor = 	 {Camps-Valls, Gustau and Ruiz, Francisco J. R. and Valera, Isabel},
  volume = 	 {151},
  series = 	 {Proceedings of Machine Learning Research},
  month = 	 {28--30 Mar},
  publisher =    {PMLR},
  url = 	 {https://proceedings.mlr.press/v151/ash22a.html},
}

@article{
li2024understanding,
title={Understanding and Improving Transfer Learning of Deep Models via Neural Collapse},
author={Xiao Li and Sheng Liu and Jinxin Zhou and Xinyu Lu and Carlos Fernandez-Granda and Zhihui Zhu and Qing Qu},
journal={Transactions on Machine Learning Research},
issn={2835-8856},
year={2024},
url={https://openreview.net/forum?id=o8r84MzTQB},
note={}
}

@InProceedings{pmlr-v119-goldblum20a,
  title = 	 {Unraveling Meta-Learning: Understanding Feature Representations for Few-Shot Tasks},
  author =       {Goldblum, Micah and Reich, Steven and Fowl, Liam and Ni, Renkun and Cherepanova, Valeriia and Goldstein, Tom},
  booktitle = 	 {Proceedings of the 37th International Conference on Machine Learning},
  pages = 	 {3607--3616},
  year = 	 {2020},
  editor = 	 {III, Hal Daumé and Singh, Aarti},
  volume = 	 {119},
  series = 	 {Proceedings of Machine Learning Research},
  month = 	 {13--18 Jul},
  publisher =    {PMLR},
  url = 	 {https://proceedings.mlr.press/v119/goldblum20a.html},
}

@INPROCEEDINGS{10377311,
  author={Wang, Zijian and Luo, Yadan and Zheng, Liang and Huang, Zi and Baktashmotlagh, Mahsa},
  booktitle={2023 IEEE/CVF International Conference on Computer Vision (ICCV)}, 
  title={How Far Pre-trained Models Are from Neural Collapse on the Target Dataset Informs their Transferability}, 
  year={2023},
  volume={},
  number={},
  pages={5526-5535},
  doi={10.1109/ICCV51070.2023.00511}}

@misc{balestriero2023cookbookselfsupervisedlearning,
      title={A Cookbook of Self-Supervised Learning}, 
      author={Randall Balestriero and Mark Ibrahim and Vlad Sobal and Ari Morcos and Shashank Shekhar and Tom Goldstein and Florian Bordes and Adrien Bardes and Gregoire Mialon and Yuandong Tian and Avi Schwarzschild and Andrew Gordon Wilson and Jonas Geiping and Quentin Garrido and Pierre Fernandez and Amir Bar and Hamed Pirsiavash and Yann LeCun and Micah Goldblum},
      year={2023},
      eprint={2304.12210},
      archivePrefix={arXiv},
      primaryClass={cs.LG},
      url={https://arxiv.org/abs/2304.12210}, 
}

@article{
doi:10.1073/pnas.2015509117,
author = {Vardan Papyan  and X. Y. Han  and David L. Donoho },
title = {Prevalence of neural collapse during the terminal phase of deep learning training},
journal = {Proceedings of the National Academy of Sciences},
volume = {117},
number = {40},
pages = {24652-24663},
year = {2020},
doi = {10.1073/pnas.2015509117},
URL = {https://www.pnas.org/doi/abs/10.1073/pnas.2015509117},
eprint = {https://www.pnas.org/doi/pdf/10.1073/pnas.2015509117},
}

@inproceedings{
han2022neural,
title={Neural Collapse Under {MSE} Loss: Proximity to and Dynamics on the Central Path},
author={X.Y. Han and Vardan Papyan and David L. Donoho},
booktitle={International Conference on Learning Representations},
year={2022},
url={https://openreview.net/forum?id=w1UbdvWH_R3}
}

@misc{galanti2023generalizationboundsfewshottransfer,
      title={Generalization Bounds for Few-Shot Transfer Learning with Pretrained Classifiers}, 
      author={Tomer Galanti and András György and Marcus Hutter},
      year={2023},
      eprint={2212.12532},
      archivePrefix={arXiv},
      primaryClass={cs.LG},
      url={https://arxiv.org/abs/2212.12532}, 
}

@inproceedings{
galanti2022improved,
title={Improved Generalization Bounds for Transfer Learning via Neural Collapse},
author={Tomer Galanti and Andr{\'a}s Gy{\"o}rgy and Marcus Hutter},
booktitle={First Workshop on Pre-training: Perspectives, Pitfalls, and Paths Forward at ICML 2022},
year={2022},
url={https://openreview.net/forum?id=VrK7pKwOhT_}
}

@inproceedings{
zhou2022are,
title={Are All Losses Created Equal: A Neural Collapse Perspective},
author={Jinxin Zhou and Chong You and Xiao Li and Kangning Liu and Sheng Liu and Qing Qu and Zhihui Zhu},
booktitle={Advances in Neural Information Processing Systems},
editor={Alice H. Oh and Alekh Agarwal and Danielle Belgrave and Kyunghyun Cho},
year={2022},
url={https://openreview.net/forum?id=8rfYWE3nyXl}
}

@InProceedings{Caron_2021_ICCV,
    author    = {Caron, Mathilde and Touvron, Hugo and Misra, Ishan and J\'egou, Herv\'e and Mairal, Julien and Bojanowski, Piotr and Joulin, Armand},
    title     = {Emerging Properties in Self-Supervised Vision Transformers},
    booktitle = {Proceedings of the IEEE/CVF International Conference on Computer Vision (ICCV)},
    month     = {October},
    year      = {2021},
    pages     = {9650-9660}
}

@inproceedings{
galanti2022on,
title={On the Role of Neural Collapse in Transfer Learning},
author={Tomer Galanti and Andr{\'a}s Gy{\"o}rgy and Marcus Hutter},
booktitle={International Conference on Learning Representations},
year={2022},
url={https://openreview.net/forum?id=SwIp410B6aQ}
}

@inproceedings{
shaul2023reverse,
title={Reverse Engineering Self-Supervised Learning},
author={Ido Ben-Shaul and Ravid Shwartz-Ziv and Tomer Galanti and Shai Dekel and Yann LeCun},
booktitle={Thirty-seventh Conference on Neural Information Processing Systems},
year={2023},
url={https://openreview.net/forum?id=NsVEjx6YPd}
}

@inproceedings{
weng2025clusteringpropertiesselfsupervisedlearning,
title={Clustering Properties of Self-Supervised Learning},
author={Xi Weng and Jianing An and Xudong Ma and Binhang Qi and Jie Luo and Xi Yang and Jin Song Dong and Lei Huang},
booktitle={Forty-second International Conference on Machine Learning},
year={2025},
url={https://openreview.net/forum?id=j7tKsPQotr}
}

@article{zhou2021ibot,
  title={iBOT: Image BERT Pre-Training with Online Tokenizer},
  author={Zhou, Jinghao and Wei, Chen and Wang, Huiyu and Shen, Wei and Xie, Cihang and Yuille, Alan and Kong, Tao},
  journal={International Conference on Learning Representations (ICLR)},
  year={2022}
}

@misc{luthra2025alignmentsupervisedselfsupervisedcontrastive,
      title={On the Alignment Between Supervised and Self-Supervised Contrastive Learning}, 
      author={Achleshwar Luthra and Priyadarsi Mishra and Tomer Galanti},
      year={2025},
      eprint={2510.08852},
      archivePrefix={arXiv},
      primaryClass={cs.LG},
      url={https://arxiv.org/abs/2510.08852}, 
}

@inproceedings{DBLP:conf/cvpr/AssranDMBVRLB23,
  author={Mahmoud Assran and Quentin Duval and Ishan Misra and Piotr Bojanowski and Pascal Vincent and Michael G. Rabbat and Yann LeCun and Nicolas Ballas},
  title={Self-Supervised Learning from Images with a Joint-Embedding Predictive Architecture},
  year={2023},
  cdate={1672531200000},
  pages={15619-15629},
  url={https://doi.org/10.1109/CVPR52729.2023.01499},
  booktitle={CVPR},
}

@misc{oquab2024dinov2learningrobustvisual,
      title={DINOv2: Learning Robust Visual Features without Supervision}, 
      author={Maxime Oquab and Timothée Darcet and Théo Moutakanni and Huy Vo and Marc Szafraniec and Vasil Khalidov and Pierre Fernandez and Daniel Haziza and Francisco Massa and Alaaeldin El-Nouby and Mahmoud Assran and Nicolas Ballas and Wojciech Galuba and Russell Howes and Po-Yao Huang and Shang-Wen Li and Ishan Misra and Michael Rabbat and Vasu Sharma and Gabriel Synnaeve and Hu Xu and Hervé Jegou and Julien Mairal and Patrick Labatut and Armand Joulin and Piotr Bojanowski},
      year={2024},
      eprint={2304.07193},
      archivePrefix={arXiv},
      primaryClass={cs.CV},
      url={https://arxiv.org/abs/2304.07193}, 
}

@inproceedings{zbontar2021barlow,
  title={Barlow twins: Self-supervised learning via redundancy reduction},
  author={Zbontar, Jure and Jing, Li and Misra, Ishan and LeCun, Yann and Deny, St{\'e}phane},
  booktitle={International conference on machine learning},
  pages={12310--12320},
  year={2021},
  organization={PMLR}
}

@inproceedings{he2022masked,
  title={Masked autoencoders are scalable vision learners},
  author={He, Kaiming and Chen, Xinlei and Xie, Saining and Li, Yanghao and Doll{\'a}r, Piotr and Girshick, Ross},
  booktitle={Proceedings of the IEEE/CVF conference on computer vision and pattern recognition},
  pages={16000--16009},
  year={2022}
}

@InProceedings{pmlr-v108-mcallester20a,
  title = 	 {Formal Limitations on the Measurement of Mutual Information},
  author =       {McAllester, David and Stratos, Karl},
  booktitle = 	 {Proceedings of the Twenty Third International Conference on Artificial Intelligence and Statistics},
  pages = 	 {875--884},
  year = 	 {2020},
  editor = 	 {Chiappa, Silvia and Calandra, Roberto},
  volume = 	 {108},
  series = 	 {Proceedings of Machine Learning Research},
  month = 	 {26--28 Aug},
  publisher =    {PMLR},
  url = 	 {https://proceedings.mlr.press/v108/mcallester20a.html},
}

@inproceedings{
balestriero2024the,
title={The Birth of Self Supervised Learning: A Supervised Theory},
author={Randall Balestriero and Yann LeCun},
booktitle={NeurIPS 2024 Workshop: Self-Supervised Learning - Theory and Practice},
year={2024},
url={https://openreview.net/forum?id=NhYAjAAdQT}
}

@inproceedings{10.1007/978-3-031-19809-0_38,
author = {Yeh, Chun-Hsiao and Hong, Cheng-Yao and Hsu, Yen-Chi and Liu, Tyng-Luh and Chen, Yubei and LeCun, Yann},
title = {Decoupled Contrastive Learning},
year = {2022},
isbn = {978-3-031-19808-3},
publisher = {Springer-Verlag},
address = {Berlin, Heidelberg},
url = {https://doi.org/10.1007/978-3-031-19809-0_38},
doi = {10.1007/978-3-031-19809-0_38},
booktitle = {Computer Vision – ECCV 2022: 17th European Conference, Tel Aviv, Israel, October 23–27, 2022, Proceedings, Part XXVI},
pages = {668–684},
numpages = {17},
location = {Tel Aviv, Israel}
}

@inproceedings{
Tschannen2020On,
title={On Mutual Information Maximization for Representation Learning},
author={Michael Tschannen and Josip Djolonga and Paul K. Rubenstein and Sylvain Gelly and Mario Lucic},
booktitle={International Conference on Learning Representations},
year={2020},
url={https://openreview.net/forum?id=rkxoh24FPH}
}

@inproceedings{
xue2024investigating,
title={Investigating the Benefits of Projection Head for Representation Learning},
author={Yihao Xue and Eric Gan and Jiayi Ni and Siddharth Joshi and Baharan Mirzasoleiman},
booktitle={The Twelfth International Conference on Learning Representations},
year={2024},
url={https://openreview.net/forum?id=GgEAdqYPNA}
}

@inproceedings{
ouyang2025projection,
title={Projection Head is Secretly an Information Bottleneck},
author={Zhuo Ouyang and Kaiwen Hu and Qi Zhang and Yifei Wang and Yisen Wang},
booktitle={The Thirteenth International Conference on Learning Representations},
year={2025},
url={https://openreview.net/forum?id=L0evcuybH5}
}

@misc{gui2023unravelingprojectionheadscontrastive,
      title={Unraveling Projection Heads in Contrastive Learning: Insights from Expansion and Shrinkage}, 
      author={Yu Gui and Cong Ma and Yiqiao Zhong},
      year={2023},
      eprint={2306.03335},
      archivePrefix={arXiv},
      primaryClass={stat.ML},
      url={https://arxiv.org/abs/2306.03335}, 
}

@inproceedings{
balestriero2022contrastive,
title={Contrastive and Non-Contrastive Self-Supervised Learning Recover Global and Local Spectral Embedding Methods},
author={Randall Balestriero and Yann LeCun},
booktitle={Advances in Neural Information Processing Systems},
editor={Alice H. Oh and Alekh Agarwal and Danielle Belgrave and Kyunghyun Cho},
year={2022},
url={https://openreview.net/forum?id=jQgsZDspz5h}
}

@inproceedings{
wei2021theoretical,
title={Theoretical Analysis of Self-Training with Deep Networks on Unlabeled Data},
author={Colin Wei and Kendrick Shen and Yining Chen and Tengyu Ma},
booktitle={International Conference on Learning Representations},
year={2021},
url={https://openreview.net/forum?id=rC8sJ4i6kaH}
}

@inproceedings{
alon2024optimal,
title={Optimal Sample Complexity of Contrastive Learning},
author={Noga Alon and Dmitrii Avdiukhin and Dor Elboim and Orr Fischer and Grigory Yaroslavtsev},
booktitle={The Twelfth International Conference on Learning Representations},
year={2024},
url={https://openreview.net/forum?id=NU9AYHJvYe}
}

@misc{gupta2022understandingimprovingroleprojection,
      title={Understanding and Improving the Role of Projection Head in Self-Supervised Learning}, 
      author={Kartik Gupta and Thalaiyasingam Ajanthan and Anton van den Hengel and Stephen Gould},
      year={2022},
      eprint={2212.11491},
      archivePrefix={arXiv},
      primaryClass={cs.LG},
      url={https://arxiv.org/abs/2212.11491}, 
}

@inproceedings{
shwartzziv2023an,
title={An Information Theory Perspective on Variance-Invariance-Covariance Regularization},
author={Ravid Shwartz-Ziv and Randall Balestriero and Kenji Kawaguchi and Tim G. J. Rudner and Yann LeCun},
booktitle={Thirty-seventh Conference on Neural Information Processing Systems},
year={2023},
url={https://openreview.net/forum?id=KipjqOPaZ0}
}

@article{you2017large,
  title={Large batch training of convolutional networks},
  author={You, Yang and Gitman, Igor and Ginsburg, Boris},
  journal={arXiv preprint arXiv:1708.03888},
  year={2017}
}

@article{loshchilov2016sgdr,
  title={Sgdr: Stochastic gradient descent with warm restarts},
  author={Loshchilov, Ilya and Hutter, Frank},
  journal={arXiv preprint arXiv:1608.03983},
  year={2016}
}

@article{goyal2017accurate,
  title={Accurate, large minibatch sgd: Training imagenet in 1 hour},
  author={Goyal, Priya and Doll{\'a}r, Piotr and Girshick, Ross and Noordhuis, Pieter and Wesolowski, Lukasz and Kyrola, Aapo and Tulloch, Andrew and Jia, Yangqing and He, Kaiming},
  journal={arXiv preprint arXiv:1706.02677},
  year={2017}
}

@misc{luthra2025selfsupervisedcontrastivelearningapproximately,
      title={Self-Supervised Contrastive Learning is Approximately Supervised Contrastive Learning}, 
      author={Achleshwar Luthra and Tianbao Yang and Tomer Galanti},
      year={2025},
      eprint={2506.04411},
      archivePrefix={arXiv},
      primaryClass={cs.LG},
      url={https://arxiv.org/abs/2506.04411}, 
}

@InProceedings{Zhai_2023_ICCV,
    author    = {Zhai, Xiaohua and Mustafa, Basil and Kolesnikov, Alexander and Beyer, Lucas},
    title     = {Sigmoid Loss for Language Image Pre-Training},
    booktitle = {Proceedings of the IEEE/CVF International Conference on Computer Vision (ICCV)},
    month     = {October},
    year      = {2023},
    pages     = {11975-11986}
}

@misc{dosovitskiy2021imageworth16x16words,
      title={An Image is Worth 16x16 Words: Transformers for Image Recognition at Scale}, 
      author={Alexey Dosovitskiy and Lucas Beyer and Alexander Kolesnikov and Dirk Weissenborn and Xiaohua Zhai and Thomas Unterthiner and Mostafa Dehghani and Matthias Minderer and Georg Heigold and Sylvain Gelly and Jakob Uszkoreit and Neil Houlsby},
      year={2021},
      eprint={2010.11929},
      archivePrefix={arXiv},
      primaryClass={cs.CV},
      url={https://arxiv.org/abs/2010.11929}, 
}

@inproceedings{
loshchilov2018decoupled,
title={Decoupled Weight Decay Regularization},
author={Ilya Loshchilov and Frank Hutter},
booktitle={International Conference on Learning Representations},
year={2019},
url={https://openreview.net/forum?id=Bkg6RiCqY7},
}

@misc{chen2023palijointlyscaledmultilinguallanguageimage,
      title={PaLI: A Jointly-Scaled Multilingual Language-Image Model}, 
      author={Xi Chen and Xiao Wang and Soravit Changpinyo and AJ Piergiovanni and Piotr Padlewski and Daniel Salz and Sebastian Goodman and Adam Grycner and Basil Mustafa and Lucas Beyer and Alexander Kolesnikov and Joan Puigcerver and Nan Ding and Keran Rong and Hassan Akbari and Gaurav Mishra and Linting Xue and Ashish Thapliyal and James Bradbury and Weicheng Kuo and Mojtaba Seyedhosseini and Chao Jia and Burcu Karagol Ayan and Carlos Riquelme and Andreas Steiner and Anelia Angelova and Xiaohua Zhai and Neil Houlsby and Radu Soricut},
      year={2023},
      eprint={2209.06794},
      archivePrefix={arXiv},
      primaryClass={cs.CV},
      url={https://arxiv.org/abs/2209.06794}, 
}
\bibliographystyle{icml2026}

\newpage
\onecolumn
\appendix

\section{Additional Experimental Details}\label{app:details}

In this section we describe the experimental settings. Please refer to App.~\ref{app:details} for the complete details.

{\bf Datasets and augmentations. \enspace} We conduct all our experiments on the standard image classification dataset -- mini-ImageNet~\citep{NIPS2016_90e13578} which is a subset of ImageNet-1K~\cite{5206848}. It contains images at their original resolution (\(224 \times 224\)) and has 50000 train, 10000 validation, and 5000 test images, with a total 100 of 1000 ImageNet classes. 

{\bf Methods and optimizers. \enspace} We consider several families of self-supervised learning methods, including contrastive learning (SimCLR, VICReg), masked modeling (I-JEPA, MAE), distillation-based approaches (DINO-v2), and multimodal learning (CLIP, SigLIP). For experiments focused on analyzing learning dynamics, we train SimCLR, VICReg, MAE, and DINO-v2 models from scratch on mini-ImageNet. 

For SimCLR, we use DCL loss~\citep{10.1007/978-3-031-19809-0_38} in place of the standard InfoNCE loss and optimize the model using LARS~\citep{you2017large}. The momentum is set to $0.9$ and the weight decay to $\expnum{1}{6}$. The learning rate is scaled linearly with the batch size as $0.3 \cdot \lfloor B / 256 \rfloor$~\citep{pmlr-v119-chen20j}., and we use a batch size of $1024$ throughout training. We apply a linear warm-up~\citep{goyal2017accurate} for the first $10$ epochs, followed by a cosine learning rate schedule without restarts~\citep{loshchilov2016sgdr} for the remainder of training.

For VicReg, we use a ResNet-50 backbone \citep{He_2016_CVPR} with a 2-layer projection head (hidden and output dimensions 2048). The VICReg loss \citep{bardes2022vicreg} combines three objectives: invariance loss for view similarity, variance loss to prevent collapse, and covariance loss for dimension decorrelation, with coefficients $\lambda = 25.0$, $\mu = 25.0$, and $\nu = 1.0$. The model is optimized using AdamW \citep{loshchilov2018decoupled} with weight decay 0.05. The learning rate is scaled as $0.0005 \cdot (B \times W)/256$ where $B$ is the batch size per GPU, with 10-epoch linear warm-up followed by cosine decay \citep{loshchilov2016sgdr} to 0.0. 

For MAE~\citep{he2022masked}, we use mean-squared error between reconstructed and input image patches as our objective function. The target pixel values are normalized per-patch as it encourages the model to learn richer, contrast-invariant semantic features. We use a masking ratio of $0.75$, meaning that $75\%$ of image patches are masked and only the remaining patches are passed to the encoder. The decoder is a lightweight Vision Transformer with $8$ transformer blocks, $16$ attention heads, and an embedding dimension of $512$. We follow the same learning rate scheduler and scaler as SimCLR, except the base learning rate is set to $\expnum{1.5}{4}$ (instead of $0.3$). Additionally, weight-decay for all layers is set to $0.05$ except LayerNorm and bias terms. We use AdamW~\citep{loshchilov2018decoupled} optimizer following standard MAE setup.

For DINO, we use the DINOv2 framework with a Vision Transformer Base (ViT-B/16) architecture combining three loss objectives: DINO loss \citep{Caron_2021_ICCV} for global consistency, iBOT patch loss \citep{zhou2021ibot} for local feature learning with masked patches, and KoLeo loss for feature uniformity. The model is optimized using AdamW \citep{loshchilov2018decoupled} with weight decay 0.04. The learning rate is scaled as $0.001 \cdot \lfloor (B \times W)/256 \rfloor$ (batch size 256), with 10-epoch linear warm-up followed by cosine decay \citep{loshchilov2016sgdr} to $1 \times 10^{-5}$. The teacher network uses EMA with cosine momentum schedule (0.992 to 1.0) and temperature warm-up (0.04 to 0.07 over 30\% of training). Data augmentation employs 2 global and 8 local crops with block masking (ratio 0.6) on global crops for iBOT.

{\bf SimCLR Augmentations.\enspace} As described in SimCLR~\citep{pmlr-v119-chen20j}, we use the following pipeline: random resized cropping to \(224 \times 224\), random horizontal flipping, color jittering (brightness, contrast, saturation: $0.8$; hue: $0.2$), random grayscale conversion (\(p=0.2\)), and Gaussian blur (applied with probability $0.1$ using a \(3 \times 3\) kernel and \(\sigma = 1.5\)).

{\bf VicReg Augmentations.\enspace} Following VICReg~\cite{bardes2022vicreg}, we adopt the SimCLR-style image augmentation protocol and symmetrized across views. Two random crops are sampled from each image and resized to $224 \times 224$. Each view undergoes random horizontal flipping, color jittering (brightness: $0.4$, contrast: $0.4$, saturation: $0.2$, hue: $0.1$; applied with probability $0.8$), random grayscale conversion ($p = 0.2$), Gaussian blur ($p = 0.5$, kernel size $23$), and solarization ($p = 0.1$). Finally, images are normalized per channel using the ImageNet statistics.

{\bf MAE Augmentations. \enspace} For MAE, we do not apply strong appearance-based augmentations such as color jittering, grayscale conversion, or Gaussian blurring, as these can adversely affect the pixel reconstruction objective. Instead, we use a RandomResizedCrop with {\em bicubic} interpolation and a scale range of $(0.2, 1.0)$, followed by normalization using ImageNet-1K statistics.

{\bf DINO-v2 Augmentations. \enspace} Following DINOv2~\cite{oquab2024dinov2learningrobustvisual}, we use a multi-crop image augmentation strategy inherited from DINO. For each image, two global views are sampled using random resized crops and resized to $224 \times 224$, along with several local views resized to $96 \times 96$. Each view is independently augmented using random horizontal flipping, color jittering of brightness, contrast, saturation, and hue (applied with probability $0.8$), and random grayscale conversion ($p = 0.2$). Gaussian blur is applied to the global views with different probabilities (strong blur on one global crop and weaker blur on the other), and solarization is applied to one of the global crops with probability $0.2$.\\

{\bf Backbone architectures. \enspace} We use multiple backbone architectures to demonstrate that our observations are not specific to a particular model family. Specifically, we consider ResNet-18 and ResNet-50~\citep{He_2016_CVPR} encoders with a width multiplier of $2$, as well as a Vision Transformer (ViT-Base)~\citep{dosovitskiy2021imageworth16x16words}. 

The ViT-Base (ViT-B/16) architecture consists of $12$ transformer layers, each with $12$ attention heads and a hidden dimension of $768$. For input images of size $224 \times 224$, we use a patch size of $16 \times 16$, yielding $196$ patch tokens along with a single [CLS] token. The MLP blocks have a hidden dimension of $3072$, and layer normalization is applied prior to each attention and MLP block.

For SimCLR and VicReg, the backbone encoders are followed by a projection head with a standard two-layer MLP architecture composed of: \texttt{Linear($2048 \rightarrow 2048$) $\rightarrow$ ReLU $\rightarrow$ Linear($2048 \rightarrow 128$)}.

\section{Additional Experiments}

{\bf Hierarchical clustering.\enspace} We analyze the CDNV for self-supervised learning models using a hierarchical class structure. We generated 10 semantic superclasses for Mini-ImageNet's 100 classes using GPT 4o to group related categories, such as \textit{birds} (house finch, robin, toucan), \textit{dogs} (golden retriever, boxer, dalmatian), and \textit{marine life} (jellyfish, king crab, coral reef). This allows us to analyze whether SSL models learn representations that respect semantic groupings and how CDNV evolves during training both within and across superclasses.

\begin{figure}[t]
  \centering
  \begin{tabular}{cc}
    \includegraphics[width=0.45\linewidth]{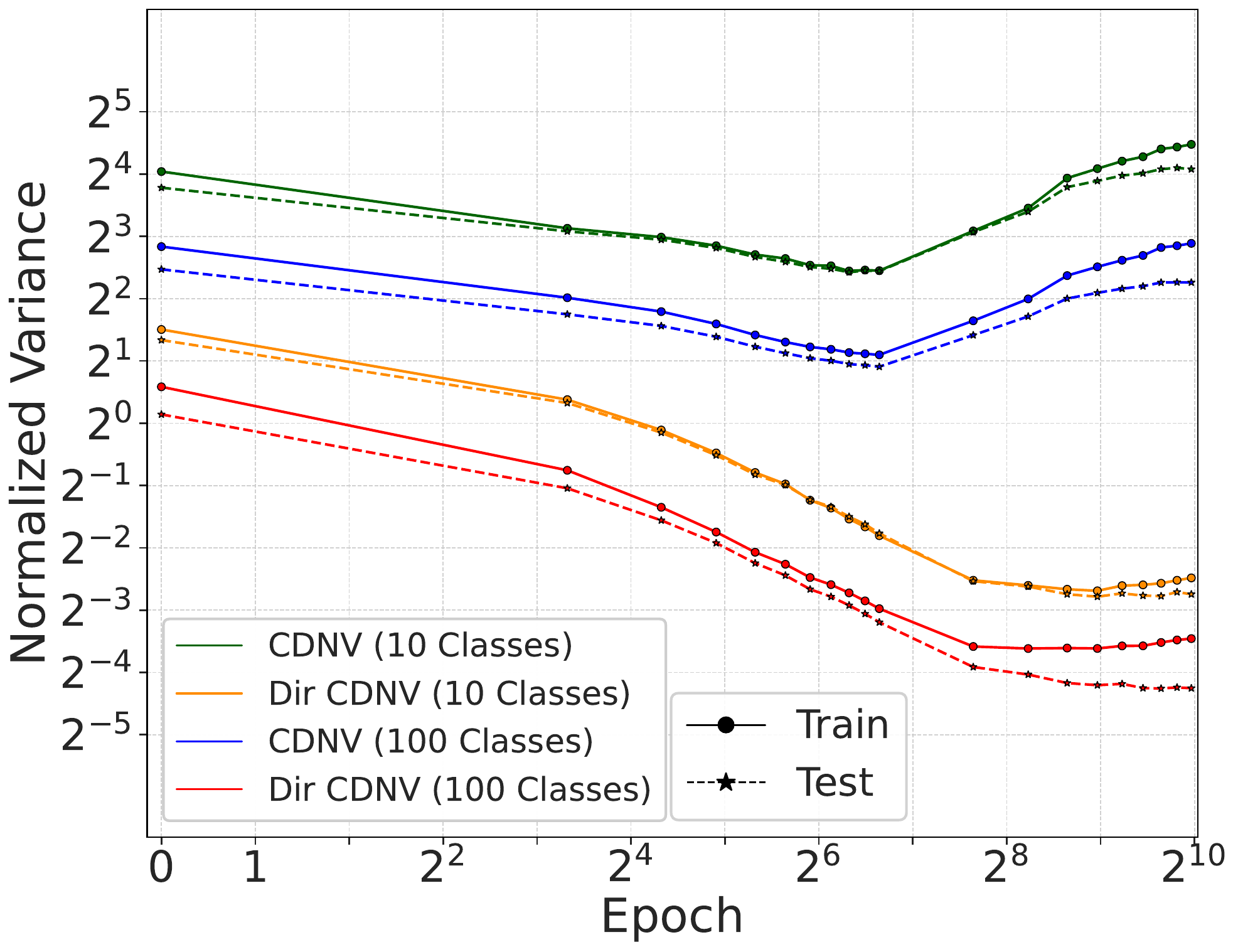} &
    \includegraphics[width=0.45\linewidth]{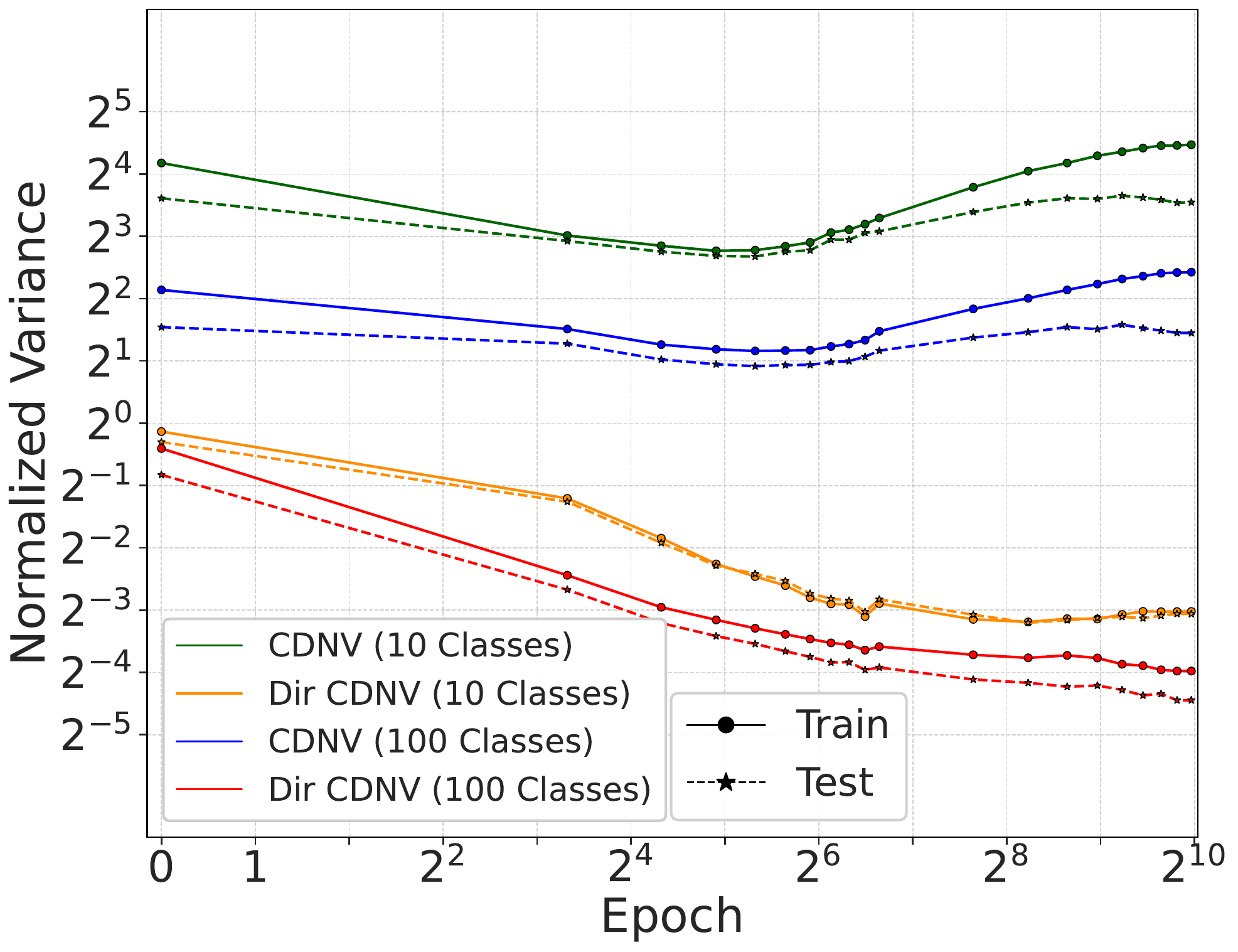} \\
    {\small \textbf{DINO-v2}} & {\small \textbf{VICReg}} \\
  \end{tabular}
  \caption{\textbf{Directional collapse manifests across semantic hierarchies.}
  We compare CDNV and directional CDNV at two levels of granularity: 10 semantic superclasses and 100 fine-grained classes, on both training and test data.
  Directional CDNV decreases much more than CDNV at both granularities, demonstrating that SSL models primarily tighten class geometry along separating directions.}
\end{figure}

\section{Proofs}

Assume all samples are i.i.d.\ within each class and independent across classes.
Let $z_c=f(x_c)$ for $x_c\sim D_c$ have mean $\mu_c:=\EE[z_c]$ and covariance $\Sigma_c:=\Cov(z_c)$ for $c\in\{i,j\}$.
For $m\ge 1$ samples per class, draw support inputs $\{x_{c,s}\}_{s=1}^m$ i.i.d.\ from $D_c$ for $c\in\{i,j\}$,
set $z_{c,s}:=f(x_{c,s})$, and define $\widehat\mu_c:=\frac1m\sum_{s=1}^m z_{c,s}$.
Define the sample-mean errors $\delta_c:=\widehat\mu_c-\mu_c$ for $c\in\{i,j\}$.
Draw a test input $x_i\sim D_i$ independently of the support inputs and set $z_i:=f(x_i)$.
Fix $i\neq j$ with $d_{ij}:=\|\mu_j-\mu_i\|_2>0$ and define $u_{ij}:=\frac{\mu_j-\mu_i}{d_{ij}}$.

\begin{proposition}[Pairwise NCC error with tunable coefficients]\label{thm:ncc-generic-detailed}
Assume $M_{4,c}:=\EE\|z_c-\mu_c\|^4<\infty$ for $c\in\{i,j\}$, and define
\[
v_c:=\tr(\Sigma_c),\qquad
\tilde V_{ij}:=\frac{u_{ij}^\top\Sigma_i u_{ij}}{d_{ij}^2},\qquad
V_{ij}:=\frac{v_i+v_j}{d_{ij}^2},\qquad
\Theta_{ij}:=\frac{M_{4,i}+M_{4,j}}{d_{ij}^4}.
\]
Let $\Delta:=\|z_i-\widehat\mu_j\|_2^2-\|z_i-\widehat\mu_i\|_2^2$ and suppose
$\EE[\Delta]=d_{ij}^2+\tfrac{v_j-v_i}{m}>0$.
For any weights $\lambda_T,\lambda_S,\lambda_Q>0$, write $\kappa:=\lambda_T+\lambda_S+\lambda_Q$ and
\[
a_T:=\frac{4\kappa}{m\,\lambda_T},\qquad
a_S:=\frac{\kappa}{m\,\lambda_S},\qquad
a_Q:=\frac{\kappa}{m^{3}\,\lambda_Q}.
\]
Then
\begin{equation}\label{eq:ncc-generic-detailed}
\Pr(\mathrm{NCC\ predicts\ }j\mid i)
~=~
\Pr(\Delta\le 0)
~\le~
\frac{
\displaystyle
4\,\tilde V_{ij}
+\;a_T\,V_{ij}^{2}
+\;\Big(\tfrac{a_T}{4}+a_S\Big) V_{ij}
+\;a_Q\Big(\Theta_{ij}+2(m-1)V_{ij}^{2}\Big)
}{\Denij}~.
\end{equation}
\end{proposition}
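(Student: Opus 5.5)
The plan is to apply Cantelli's inequality directly to the pairwise margin $\Delta$ and then bound $\Var(\Delta)$ term by term. Since $\EE[\Delta]=d_{ij}^2+\tfrac{v_j-v_i}{m}>0$, Cantelli gives
\[
\Pr(\Delta\le 0)\le \frac{\Var(\Delta)}{\Var(\Delta)+\EE[\Delta]^2}\le \frac{\Var(\Delta)}{\EE[\Delta]^2}=\frac{\Var(\Delta)/d_{ij}^4}{\Denij}.
\]
So the whole statement reduces to showing
\[
\frac{\Var(\Delta)}{d_{ij}^4}\le 4\tilde V_{ij}+a_TV_{ij}^2+\bigl(\tfrac{a_T}{4}+a_S\bigr)V_{ij}+a_Q\bigl(\Theta_{ij}+2(m-1)V_{ij}^2\bigr).
\]

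\textbf{Decomposition of the margin.} Write $w:=z_i-\mu_i$, $d:=d_{ij}$ and $u:=u_{ij}$, so that $\mu_j=\mu_i+du$. Expanding the two squared norms gives
\[
\Delta=d^2-2d\,u^\top w+2d\,u^\top\delta_j-2w^\top(\delta_j-\delta_i)+\|\delta_j\|^2-\|\delta_i\|^2 .
\]
I will call the terms $T:=-2d\,u^\top w$ (decision-axis), $S:=2d\,u^\top\delta_j$, $X:=-2w^\top(\delta_j-\delta_i)$ (cross) and $Q:=\|\delta_j\|^2-\|\delta_i\|^2$ (quadratic). The test point $w$ is independent of the support errors $(\delta_i,\delta_j)$, and $\EE w=\EE\delta_c=0$. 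Hence $T$ is uncorrelated with $S$, with $Q$, and with $X$; for the last, $\EE[u^\top w\,w^\top\delta]=\EE[u^\top ww^\top]\,\EE\delta=0$. Therefore
\[
\Var(\Delta)=\Var(T)+\Var(R),\qquad R:=S+X+Q,
\]
and $\Var(T)=4d^2u^\top\Sigma_iu=4d^4\tilde V_{ij}$. This produces the leading term with exactly the constant $4$, which is not multiplied by any $\kappa/\lambda$ factor.

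\textbf{Bounding the remainder.} The terms $S$, $X$ and $Q$ may be correlated with one another (for example through third moments). I will therefore use weighted Cauchy--Schwarz,
\[
\Var(R)\le\kappa\Bigl(\frac{\Var(X)}{\lambda_T}+\frac{\Var(S)}{\lambda_S}+\frac{\Var(Q)}{\lambda_Q}\Bigr),
\]
which is the source of the $a$-coefficients, and then compute each variance.
\begin{itemize}
\item $\Var(S)=4d^2u^\top\Sigma_ju/m\le 4d^2v_j/m$. This should be absorbed into $a_S V_{ij}d^4$, possibly after moving a $\tfrac14$ share into the $a_T$ group.
\item $\Var(X)=\tfrac{4}{m}\tr\bigl(\Sigma_i(\Sigma_i+\Sigma_j)\bigr)\le \tfrac{4}{m}(v_i+v_j)^2=\tfrac{4}{m}d^4V_{ij}^2$, which gives the $a_TV_{ij}^2$ term. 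The residual $\tfrac{a_T}{4}V_{ij}$ should come from grouping part of $S$ with $X$, or from a lower-order piece, and I will adjust the grouping until the coefficients match.
\item For $Q$, expand $\|\delta_c\|^2=m^{-2}\sum_{s,t}(z_{c,s}-\mu_c)^\top(z_{c,t}-\mu_c)$. This gives $\Var\|\delta_c\|^2=m^{-3}\bigl(M_{4,c}-v_c^2\bigr)+2(m-1)m^{-3}\tr(\Sigma_c^2)$. Using independence across classes, $\Var(Q)\le m^{-3}\bigl(M_{4,i}+M_{4,j}+2(m-1)(v_i^2+v_j^2)\bigr)\le m^{-3}d^4\bigl(\Theta_{ij}+2(m-1)V_{ij}^2\bigr)$.
\end{itemize}

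\textbf{Main obstacle.} I expect the hardest part to be the exact bookkeeping in the remainder bound. The difficulty is to route the $V_{ij}$-order pieces so that they land precisely on the coefficients $a_T V_{ij}^2+(\tfrac{a_T}{4}+a_S)V_{ij}$, and to get the fourth-moment computation for $\Var(Q)$ right, including the off-diagonal pairs that produce $\tr(\Sigma_c^2)$. The conceptual steps, namely Cantelli together with the orthogonality of $T$ to every support-dependent term, are straightforward.
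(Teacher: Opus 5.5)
You have the paper's skeleton: Chebyshev (your Cantelli step relaxes to the same ratio), exact orthogonality of the axis term giving $4d^4\tilde V_{ij}$, weighted Cauchy--Schwarz on the remainder, and the same moment formulas. Your $\Var(X)$, $\Var(S)$ and $\Var(Q)$ are all correct. The gap is the step you defer, and it is more than bookkeeping: with your split the argument does not reach the stated bound. Your $S=2d\,u^\top\delta_j$ sits alone in the $\lambda_S$ slot and contributes $\frac{\kappa}{\lambda_S}\cdot\frac{4d^2\,u^\top\Sigma_j u}{m}$. This can be as large as $4a_S\,d^4V_{ij}$, four times the allowance. Your $X$ produces only $a_TV_{ij}^2$, so it generates no $\frac{a_T}{4}V_{ij}$ slack to absorb the excess. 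For instance, take $\Sigma_i=0$, $\Sigma_j=v_j uu^\top$ and equal weights. Your support terms give $\frac{12}{m}\frac{v_j}{d^2}$, while the statement allows only $\frac{12}{m}\frac{v_j^2}{d^4}+\frac{6}{m}\frac{v_j}{d^2}$, which is smaller whenever $v_j<d^2/2$. As $\lambda_T/\lambda_S\to\infty$, the ratio of the two linear-support contributions tends to $4$. So your route proves a different, sometimes weaker, inequality than the one stated.

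Your remark about grouping part of $S$ with $X$ points the right way. The missing idea, which is what the paper does, is to recentre the test point at the midpoint. Move $d\,u^\top(\delta_j-\delta_i)$ out of your $S$ and into your cross term, writing $X+S=X'+S'$ with $X':=-2(w-\tfrac d2u)^\top(\delta_j-\delta_i)$ and $S':=d\,u^\top(\delta_i+\delta_j)$. Since $\EE\bigl[(w-\tfrac d2u)(w-\tfrac d2u)^\top\bigr]=\Sigma_i+\tfrac{d^2}{4}uu^\top$, you get $\Var(X')=\tfrac4m\tr\bigl((\Sigma_i+\tfrac{d^2}{4}uu^\top)(\Sigma_i+\Sigma_j)\bigr)\le\tfrac4m d^4\bigl(V_{ij}^2+\tfrac14V_{ij}\bigr)$ and $\Var(S')=\tfrac{d^2}{m}u^\top(\Sigma_i+\Sigma_j)u\le\tfrac{d^4}{m}V_{ij}$. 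Your three-way weighted Cauchy--Schwarz applied to $(X',S',Q)$ then lands exactly on $a_TV_{ij}^2+(\tfrac{a_T}{4}+a_S)V_{ij}+a_Q(\Theta_{ij}+2(m-1)V_{ij}^2)$; the $\tfrac{a_T}{4}V_{ij}$ term is precisely the contribution of $\tfrac{d^2}{4}uu^\top$. Alternatively, you can keep your own split if you correct one claim. $X$ is linear in the mean-zero $w$, which is independent of the support, so $X$ is uncorrelated with $S$ and $Q$ as well as with $T$; only $S$ and $Q$ can correlate, through third moments. Then $\Var(\Delta)=\Var(T)+\Var(X)+\Var(S+Q)$ exactly. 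Use $\Var(S+Q)\le(1+\epsilon)\Var(S)+(1+\epsilon^{-1})\Var(Q)$ with $\epsilon=\lambda_Q/(\lambda_T+\lambda_S)$. Combined with $\tfrac{4}{\lambda_T+\lambda_S}\le\tfrac1{\lambda_T}+\tfrac1{\lambda_S}$ and $\kappa\ge\lambda_T$, this gives a bound no larger than the stated one for every choice of weights.
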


\begin{proof}
Set
\[
A:=z_i-\tfrac{\mu_i+\mu_j}{2},\qquad
X:=(z_i-\mu_i)^\top u_{ij},\qquad
T:=A^\top(\delta_j-\delta_i),
\]
\[
S:=(\mu_j-\mu_i)^\top(\delta_j+\delta_i),\qquad
Q:=\|\delta_j\|_2^2-\|\delta_i\|_2^2.
\]
A direct expansion gives
\[
\Delta=d_{ij}^2-2d_{ij}X-2T+S+Q,
\]
hence $\EE[\Delta]=d_{ij}^2+\tfrac{v_j-v_i}{m}$.

\paragraph{Orthogonality.}
We claim that $\Cov(X,T)=\Cov(X,S)=\Cov(X,Q)=0$.
Indeed, $(\delta_i,\delta_j)$ is independent of $z_i$ and has mean $0$, so
\[
\EE[XT]=\EE \left[X\,A^\top(\delta_j-\delta_i)\right]
=\EE[X\,A^\top]\;\EE[\delta_j-\delta_i]=0,
\]
and since $\EE[T]=0$, this implies $\Cov(X,T)=0$.
Similarly, using independence of $X$ and $(\delta_i,\delta_j)$,
\[
\EE[XS]=\EE[X]\;\EE \left[(\mu_j-\mu_i)^\top(\delta_j+\delta_i)\right]=0,
\]
and since $\EE[S]=0$, we get $\Cov(X,S)=0$.
Finally, $X$ is a function of $z_i$ while $Q=\|\delta_j\|_2^2-\|\delta_i\|_2^2$ is a function of $(\delta_i,\delta_j)$,
and $(\delta_i,\delta_j)$ is independent of $z_i$, so $X$ is independent of $Q$ and thus $\Cov(X,Q)=0$.

Let $U_1:=-2d_{ij}X$, $W_1:=-2T$, $W_2:=S$, $W_3:=Q$ and $V:=W_1+W_2+W_3$.
By bilinearity of covariance and the above orthogonality relations, $\Cov(U_1,V)=0$, hence
\[
\Var(\Delta)=\Var(U_1)+\Var(V),\qquad
\Var(U_1)=4d_{ij}^2\Var(X)=4d_{ij}^2\cdot u_{ij}^\top\Sigma_i u_{ij}
=4d_{ij}^4\tilde V_{ij}.
\]

\paragraph{Weighted variance control for $V$.}
For any $\lambda_T,\lambda_S,\lambda_Q>0$ and $\kappa:=\lambda_T+\lambda_S+\lambda_Q$, the weighted
Cauchy--Schwarz inequality yields
\[
\Var(V)=\Var(W_1+W_2+W_3)
\le \kappa \left(\frac{\Var(W_1)}{\lambda_T}+\frac{\Var(W_2)}{\lambda_S}+\frac{\Var(W_3)}{\lambda_Q}\right).
\]
Since $\Var(W_1)=4\Var(T)$, $\Var(W_2)=\Var(S)$, $\Var(W_3)=\Var(Q)$, we have
\[
\Var(V)\le
\kappa \left(\frac{4\,\Var(T)}{\lambda_T}+\frac{\Var(S)}{\lambda_S}+\frac{\Var(Q)}{\lambda_Q}\right).
\]

\paragraph{Component variances.}
Standard moment calculus and PSD bounds give
\[
\Var(T)=\frac{1}{m}\,\tr \Big((\Sigma_i+\Sigma_j)\big(\Sigma_i+\tfrac14 (\mu_j-\mu_i)(\mu_j-\mu_i)^\top\big)\Big),
\quad
\Var(S)=\frac{1}{m}\,(\mu_j-\mu_i)^\top(\Sigma_i+\Sigma_j)(\mu_j-\mu_i),
\]
\[
\Var(Q)=\sum_{c\in\{i,j\}} \left[\frac{1}{m^3}\big(M_{4,c}-v_c^2\big)+\frac{2(m-1)}{m^3}\tr(\Sigma_c^2)\right].
\]
Using $\tr(AB)\le\sqrt{\tr(A^2)\tr(B^2)}$, $\tr(\Sigma_c^2)\le v_c^2$, and
\[
(\mu_j-\mu_i)^\top(\Sigma_i+\Sigma_j)(\mu_j-\mu_i)\le d_{ij}^2(v_i+v_j),
\]
we obtain
\[
\Var(T)\le \frac{1}{m} \left(v_i^2+v_iv_j+\tfrac14d_{ij}^2(v_i+v_j)\right),\quad
\Var(S)\le \frac{1}{m}d_{ij}^2(v_i+v_j),\quad
\Var(Q)\le \frac{d_{ij}^4}{m^3} \left(\Theta_{ij}+2(m-1)V_{ij}^2\right).
\]
Also $v_i^2+v_iv_j\le (v_i+v_j)^2=d_{ij}^4V_{ij}^2$ and $d_{ij}^2(v_i+v_j)=d_{ij}^4V_{ij}$.

\paragraph{Assemble.}
Combining the pieces,
\[
\Var(\Delta)\le d_{ij}^4 \left[
4\,\tilde V_{ij}
+\frac{4\kappa}{m\,\lambda_T}V_{ij}^2
+\left(\frac{\kappa}{m\,\lambda_T}+\frac{\kappa}{m\,\lambda_S}\right)V_{ij}
+\frac{\kappa}{m^3\,\lambda_Q}\Big(\Theta_{ij}+2(m-1)V_{ij}^2\Big)
\right].
\]
Finally, Chebyshev’s inequality gives
\[
\Pr(\Delta\le 0)\le \frac{\Var(\Delta)}{(\EE\Delta)^2}
=\frac{\Var(\Delta)}{\bigl(d_{ij}^2+\tfrac{v_j-v_i}{m}\bigr)^2},
\]
and abbreviating $a_T,a_S,a_Q$ yields \eqref{eq:ncc-generic-detailed}.
\end{proof}

\begin{restatable}{theorem}{wts}\label{thm:eq-wts}
Let $C'\geq 2$ and $m\geq 1$ be integers. Fix a feature map \(f:\mathcal X\to\mathbb R^{d}\) and class-conditional distributions \(D_{1},\dots,D_{C'}\) over \(\mathcal X\). We have:
\begin{small}
\begin{equation*}
\begin{aligned}
\mathrm{err}^{\mathrm{NCC}}_{m,\mathcal{C}}(f)
~&\le~ \frac{1}{C'}\sum_{i=1}^{C'}\sum_{j\ne i}
\frac{4\,\tilde V_{ij} + \tfrac{12}{m} V^2_{ij} + \tfrac{6}{m} V_{ij}}
{\bigl(1+\tfrac{v_j-v_i}{m\,d_{ij}^{2}}\bigr)^{2}}
\\
&\quad + \frac{1}{C'}\sum_{i=1}^{C'}\sum_{j\ne i}
\frac{\tfrac{3}{m^3}\bigl(\Theta_{ij} + 2(m-1)V^2_{ij}\bigr)}
{\bigl(1+\tfrac{v_j-v_i}{m\,d_{ij}^{2}}\bigr)^{2}}~.
\end{aligned}
\end{equation*}
\end{small}
\end{restatable}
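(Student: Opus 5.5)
The plan is to specialize Proposition~\ref{thm:ncc-generic-detailed} to equal weights and then pass from pairwise to multiclass error by a union bound.

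\textbf{Pairwise step.} Apply Proposition~\ref{thm:ncc-generic-detailed} to each ordered pair $(i,j)$ with $i\neq j$, taking $\lambda_T=\lambda_S=\lambda_Q=1$, so that $\kappa=3$. The coefficients then become
\[
a_T=\tfrac{12}{m},\qquad a_S=\tfrac{3}{m},\qquad a_Q=\tfrac{3}{m^3}.
\]
Substituting these gives:
\begin{itemize}
\item $a_T V_{ij}^2=\tfrac{12}{m}V_{ij}^2$;
\item $(\tfrac{a_T}{4}+a_S)V_{ij}=(\tfrac{3}{m}+\tfrac{3}{m})V_{ij}=\tfrac{6}{m}V_{ij}$;
\item $a_Q\bigl(\Theta_{ij}+2(m-1)V_{ij}^2\bigr)=\tfrac{3}{m^3}\bigl(\Theta_{ij}+2(m-1)V_{ij}^2\bigr)$.
\end{itemize}
All terms are divided by the common factor $\Denij$. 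This reproduces, term by term, the numerators in the statement of Theorem~\ref{thm:eq-wts}, so $p^{\mathrm{NCC}}_{i\to j}$ is bounded by the summand. The proposition needs only $m\ge 1$, matching the theorem's hypothesis. It also uses finite fourth moments and the standing assumption $\EE[\Delta_{i\to j}]>0$; if $M_{4,c}=\infty$, the right-hand side is infinite and the bound is trivial.

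\textbf{Multiclass step.} Use the classwise union bound from Section~\ref{sec:setting}:
\[
\mathrm{err}^{\mathrm{NCC}}_{m,\mathcal C}(f)\le\frac{1}{C'}\sum_{i}\sum_{j\neq i}p^{\mathrm{NCC}}_{i\to j}.
\]
The event $\{\widehat y(z_i)=j\}$ is contained in $\{\Delta_{i\to j}\le 0\}$ regardless of tie-breaking, so each $p^{\mathrm{NCC}}_{i\to j}\le\Pr(\Delta_{i\to j}\le 0)$.

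Each pairwise quantity involves only classes $i$ and $j$: the support sets for those two classes and the test point from $D_i$. These have exactly the joint law assumed in the proposition, because support sets are independent across classes and independent of the test draw. Summing the pairwise bounds and splitting the numerator into the two displayed sums yields the theorem.

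\textbf{Main obstacle.} There is essentially none beyond bookkeeping. The one point needing care is that the proposition is stated for a generic pair with the decision axis $u_{ij}$ oriented from $i$ to $j$. The quantity $\tilde V_{ij}$ uses $\Sigma_i$, the covariance of the true class, so it is not symmetric in $(i,j)$. This matches the theorem's $\tilde V_{ij}$ for each ordered pair, so the sums over ordered pairs align exactly.
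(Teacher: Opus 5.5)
Your proposal is correct and is the same as the paper's proof. Both set $\lambda_T=\lambda_S=\lambda_Q=1$ in Proposition~\ref{thm:ncc-generic-detailed}, so that $a_T=12/m$, $a_S=3/m$, $a_Q=3/m^3$, then union bound over $j\neq i$ and average over $i$; your remarks on tie-breaking and on the standing assumption $\EE[\Delta_{i\to j}]>0$ simply make explicit what the paper leaves implicit.
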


\begin{proof}[Proof of Thm.~\ref{thm:eq-wts}]
Fix $i\neq j$ and apply Prop.~\ref{thm:ncc-generic-detailed} with
$\lambda_T=\lambda_S=\lambda_Q=1$, so $\kappa=3$ and thus
\[
a_T=\frac{4\kappa}{m\lambda_T}=\frac{12}{m},\qquad
a_S=\frac{\kappa}{m\lambda_S}=\frac{3}{m},\qquad
a_Q=\frac{\kappa}{m^3\lambda_Q}=\frac{3}{m^3}.
\]
This gives the stated pairwise bound with numerator
\[
4\tilde V_{ij}+\frac{12}{m}V_{ij}^2+\left(\frac{3}{m}+\frac{3}{m}\right)V_{ij}
+\frac{3}{m^3}\bigl(\Theta_{ij}+2(m-1)V_{ij}^2\bigr).
\]
For multiclass NCC error, use the union bound
$\Pr(\widehat y^{\mathrm{NCC}}(z_i)\neq i)\le\sum_{j\neq i}\Pr(\Delta_{i\to j}\le 0)$,
then average over $i$ to obtain the theorem.
\end{proof}

\optimized*

\begin{proof}[Proof of Thm.~\ref{thm:ncc-full-optimized}]
Fix $i\neq j$ and apply Prop.~\ref{thm:ncc-generic-detailed}.
Group the $\lambda_T$-controlled terms as
\[
a_TV_{ij}^2+\frac{a_T}{4}V_{ij}
=\kappa\cdot \frac{4}{m\lambda_T}\Bigl(V_{ij}^2+\frac14 V_{ij}\Bigr)
=\kappa\cdot \frac{E^1_{ij}}{\lambda_T},
\]
and similarly
\[
a_SV_{ij}=\kappa\cdot \frac{1}{m\lambda_S}V_{ij}=\kappa\cdot \frac{E^2_{ij}}{\lambda_S},
\qquad
a_Q(\Theta_{ij}+2(m-1)V_{ij}^2)=\kappa\cdot \frac{E^3_{ij}}{\lambda_Q},
\]
where
\[
E^1_{ij}:=\frac{4}{m}\Bigl(V_{ij}^2+\tfrac14 V_{ij}\Bigr),\quad
E^2_{ij}:=\frac{1}{m}V_{ij},\quad
E^3_{ij}:=\frac{1}{m^3}\Bigl(\Theta_{ij}+2(m-1)V_{ij}^2\Bigr).
\]
Hence Prop.~\ref{thm:ncc-generic-detailed} yields
\[
\Pr(\Delta_{i\to j}\le 0)\le
\frac{4\tilde V_{ij}+\kappa\Bigl(\frac{E^1_{ij}}{\lambda_T}+\frac{E^2_{ij}}{\lambda_S}+\frac{E^3_{ij}}{\lambda_Q}\Bigr)}
{\Denij}.
\]
By Cauchy--Schwarz in $\mathbb{R}^3$ (take $a_k=\sqrt{\lambda_k}$ and $b_k=\sqrt{E^k_{ij}/\lambda_k}$),
\[
\Bigl(\lambda_T+\lambda_S+\lambda_Q\Bigr)\Bigl(\frac{E^1_{ij}}{\lambda_T}+\frac{E^2_{ij}}{\lambda_S}+\frac{E^3_{ij}}{\lambda_Q}\Bigr)
\ge \Bigl(\sqrt{E^1_{ij}}+\sqrt{E^2_{ij}}+\sqrt{E^3_{ij}}\Bigr)^2,
\]
with equality for $\lambda_T:\lambda_S:\lambda_Q=\sqrt{E^1_{ij}}:\sqrt{E^2_{ij}}:\sqrt{E^3_{ij}}$.
Choosing $\lambda_T=\sqrt{E^1_{ij}}$, $\lambda_S=\sqrt{E^2_{ij}}$, $\lambda_Q=\sqrt{E^3_{ij}}$
gives
\[
\Pr(\Delta_{i\to j}\le 0)\le
\frac{4\tilde V_{ij}+\bigl(\sqrt{E^1_{ij}}+\sqrt{E^2_{ij}}+\sqrt{E^3_{ij}}\bigr)^2}{\Denij}.
\]
Finish by the same union bound over $j\neq i$ and averaging over $i$.
\end{proof}

\orthmulti*

\begin{proof}
Write
$u:=u^{(1)}_{aa'}$, $v:=u^{(2)}_{bb'}$, $d_1:=d^{(1)}_{aa'}$, and $d_2:=d^{(2)}_{bb'}$.
For $c\in[K_1]$ and $t\in[K_2]$, define the joint conditional means
$\mu_{c,t}:=\EE[z\mid y^{(1)}=c,\;y^{(2)}=t]$.
For the fixed pair $(b,b')$ in task $2$, define
$\Gamma_c:=\mu_{c,b}-\mu_{c,b'}$ for each $c\in[K_1]$.

We first bound $u^\top \Gamma_c$ using the directional variance of task $1$.
Fix $c\in[K_1]$. By the law of total variance,
\[
u^\top \Sigma^{(1)}_{c}u
=
\Var(u^\top z\mid y^{(1)}=c)
\ge
\Var \big(\EE[u^\top z\mid y^{(1)}=c,\;y^{(2)}]\mid y^{(1)}=c\big).
\]
Since $y^{(1)}$ and $y^{(2)}$ are independent and $y^{(2)}$ is balanced, conditioned on $y^{(1)}=c$ the variable $y^{(2)}$ is uniform on $[K_2]$. Thus, if we set $m_{c,t}:=u^\top \mu_{c,t}$ and $\bar m_c:=\frac1{K_2}\sum_{t=1}^{K_2} m_{c,t}$, then
\[
\Var \big(\EE[u^\top z\mid y^{(1)}=c,\;y^{(2)}]\mid y^{(1)}=c\big)
=
\frac1{K_2}\sum_{t=1}^{K_2}(m_{c,t}-\bar m_c)^2.
\]
Now, for the two selected classes $b,b'$, we have
\[
|u^\top \Gamma_c|^2
=
|m_{c,b}-m_{c,b'}|^2
\le
2\big((m_{c,b}-\bar m_c)^2+(m_{c,b'}-\bar m_c)^2\big)
\le
2K_2\cdot \frac1{K_2}\sum_{t=1}^{K_2}(m_{c,t}-\bar m_c)^2.
\]
Combining the last three displays gives
\[
|u^\top \Gamma_c|
\le
\sqrt{2K_2\,u^\top \Sigma^{(1)}_c u}
=
d_1\sqrt{2K_2\,\tilde V^{(1)}_{aa',c}}
\le
d_1\sqrt{2K_2\,\tilde V^{(1)}_{aa'}}.
\]

Next we express the task-$2$ pairwise mean gap using the joint means.
Independence and balancedness of $y^{(1)}$ imply that for each $t\in[K_2]$,
$\Pr(y^{(1)}=c\mid y^{(2)}=t)=1/K_1$, so
\[
\mu^{(2)}_{t}
=
\EE[z\mid y^{(2)}=t]
=
\frac1{K_1}\sum_{c=1}^{K_1}\mu_{c,t}.
\]
Therefore,
\[
\mu^{(2)}_{b}-\mu^{(2)}_{b'}
=
\frac1{K_1}\sum_{c=1}^{K_1}(\mu_{c,b}-\mu_{c,b'})
=
\frac1{K_1}\sum_{c=1}^{K_1}\Gamma_c.
\]
Taking inner products with $u$ and using $\mu^{(2)}_{b}-\mu^{(2)}_{b'}=d_2 v$, we get
\[
d_2\,u^\top v
=
u^\top(\mu^{(2)}_{b}-\mu^{(2)}_{b'})
=
\frac1{K_1}\sum_{c=1}^{K_1}u^\top \Gamma_c.
\]
Hence, by the triangle inequality and the bound above,
\[
|d_2\,u^\top v|
\le
\frac1{K_1}\sum_{c=1}^{K_1}|u^\top \Gamma_c|
\le
\frac1{K_1}\sum_{c=1}^{K_1} d_1\sqrt{2K_2\,\tilde V^{(1)}_{aa'}}
=
d_1\sqrt{2K_2\,\tilde V^{(1)}_{aa'}}.
\]
Dividing by $d_2$ yields
\[
|u^\top v|
\le
\frac{d_1}{d_2}\sqrt{2K_2\,\tilde V^{(1)}_{aa'}}.
\]

The second bound,
$|u^\top v|\le \frac{d_2}{d_1}\sqrt{2K_1\,\tilde V^{(2)}_{bb'}}$,
follows by the same argument after swapping the roles of the two tasks (and swapping the pairs $(a,a')$ and $(b,b')$). Taking the minimum of the two bounds proves the claim.
\end{proof}

\section{Optimality of the leading constant $4$}
\label{app:opt_const_4}

This section shows that the leading coefficient $4$ multiplying $\tilde V_{ij}$ in
Theorems~\ref{thm:eq-wts} and~\ref{thm:ncc-full-optimized} cannot be improved under only second-moment information.
The key point is that, in the known-centroid limit, pairwise NCC error is a one-sided tail probability of a
mean-zero scalar random variable with known variance. The sharp distribution-free bound is given by Cantelli's
(one-sided Chebyshev) inequality, and it is tight via a two-point construction. Therefore any bound that depends
only on $\EE[X]=0$ and $\Var(X)$ must incur the factor $4$ in the small-$\tilde V_{ij}$ regime, unless additional
tail assumptions are imposed.

\subsection{Reduction to a one-dimensional tail event}

Consider the idealized regime where the class centroids are known, equivalently $m\to\infty$ so that
$\widehat\mu_i=\mu_i$ and $\widehat\mu_j=\mu_j$.
For a test point $z_i\sim D_i$, define the axis projection
\[
X := (z_i-\mu_i)^\top u_{ij},
\qquad
\Var(X)=u_{ij}^\top\Sigma_i u_{ij}=d_{ij}^2\,\tilde V_{ij}.
\]
The pairwise NCC margin becomes
\[
\Delta_\infty
:= \|z_i-\mu_j\|_2^2-\|z_i-\mu_i\|_2^2
= d_{ij}^2 - 2d_{ij}X,
\]
so the pairwise error is exactly
\begin{equation}
\label{eq:pairwise_tail_event_app}
p^{\mathrm{NCC}}_{i\to j}
=\Pr(\Delta_\infty\le 0)
=\Pr(X\ge d_{ij}/2).
\end{equation}
Thus, bounding $p^{\mathrm{NCC}}_{i\to j}$ reduces to bounding a one-sided tail probability for a
mean-zero scalar random variable with a known variance.

\subsection{Cantelli's inequality and the sharp bound}

\begin{lemma}[Cantelli / one-sided Chebyshev]
\label{lem:cantelli_app}
Let $X$ be a real random variable with $\EE[X]=0$ and $\Var(X)=\sigma^2<\infty$.
Then for any $t>0$,
\[
\Pr(X\ge t)\ \le\ \frac{\sigma^2}{\sigma^2+t^2}.
\]
Moreover, this bound is tight: for any $\sigma^2>0$ and $t>0$, there exists a two-point distribution on
$\{-a,t\}$ (with a suitable $a>0$) that attains equality.
\end{lemma}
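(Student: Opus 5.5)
The inequality comes from the standard shift-and-square argument, and tightness from an explicit two-point law. Fix $t>0$ and a free parameter $s\ge 0$. On the event $\{X\ge t\}$ we have $X+s\ge t+s>0$, hence $\mathbf 1\{X\ge t\}\le (X+s)^2/(t+s)^2$ pointwise. Taking expectations and using $\EE[X]=0$ gives
\[
\Pr(X\ge t)\le \frac{\EE[(X+s)^2]}{(t+s)^2}=\frac{\sigma^2+s^2}{(t+s)^2}.
\]
Then I will minimize the right-hand side over $s\ge 0$. Setting the derivative to zero gives $s(t+s)=\sigma^2+s^2$, i.e.\ $s^\ast=\sigma^2/t>0$. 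Substituting,
\[
\frac{\sigma^2+\sigma^4/t^2}{(t+\sigma^2/t)^2}=\frac{\sigma^2}{\sigma^2+t^2}.
\]
This proves the bound. If $\sigma^2=0$, then $X=0$ almost surely and both sides are $0$.

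For tightness I will read the extremizer off the equality conditions. Equality in the pointwise step requires $(X+s^\ast)^2/(t+s^\ast)^2$ to equal the indicator almost surely. So $X$ must be supported on $\{t,-s^\ast\}$, with $X=-s^\ast$ off the event.

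Take $a:=\sigma^2/t$ and let $X=t$ with probability $p$, $X=-a$ with probability $1-p$. The mean-zero constraint $pt=(1-p)a$ forces
\[
p=\frac{a}{a+t}=\frac{\sigma^2}{\sigma^2+t^2}.
\]
The variance is $pt^2+(1-p)a^2=ta$, since $pt=(1-p)a$ makes it $(1-p)a\,t+pt\,a$. This equals $\sigma^2$, and $\Pr(X\ge t)=p$ attains the bound.

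The only step needing care is choosing $s$ optimally and checking the constructed law has exactly the prescribed mean and variance. Both are one-line computations, so I expect no real obstacle.

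In the paper's application, $t=d_{ij}/2$ and $\sigma^2=d_{ij}^2\tilde V_{ij}$. This yields $p^{\mathrm{NCC}}_{i\to j}\le 4\tilde V_{ij}/(1+4\tilde V_{ij})$. Since the bound is attained by a two-point law, the leading constant $4$ cannot be improved.
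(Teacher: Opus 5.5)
Your proof is correct and takes essentially the paper's route. The paper invokes Cantelli's inequality as a standard fact, and your shift-and-square argument with $s=\sigma^2/t$ is the standard derivation it takes for granted. Your extremal law ($X=t$ with probability $p=\sigma^2/(\sigma^2+t^2)$, and $X=-a$ with $a=\sigma^2/t$ otherwise) is exactly the construction in the proof of Proposition~\ref{prop:sharp_4_app}.
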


Applying Lem.~\ref{lem:cantelli_app} to \eqref{eq:pairwise_tail_event_app} with $t=d_{ij}/2$ and
$\sigma^2=d_{ij}^2\tilde V_{ij}$ yields
\begin{equation}
\label{eq:cantelli_bound_tildeV_app}
p^{\mathrm{NCC}}_{i\to j}
~\le~
\frac{d_{ij}^2\tilde V_{ij}}{d_{ij}^2\tilde V_{ij}+d_{ij}^2/4}
=
\frac{4\tilde V_{ij}}{1+4\tilde V_{ij}}
~\le~
4\tilde V_{ij}.
\end{equation}
The last inequality is the linearization useful for small $\tilde V_{ij}$.
Importantly, the fraction $\frac{4\tilde V_{ij}}{1+4\tilde V_{ij}}$ is the best possible distribution-free
upper bound given only $\EE[X]=0$ and $\Var(X)=d_{ij}^2\tilde V_{ij}$.

\subsection{Minimax tightness and the necessity of the factor $4$}

\begin{proposition}[Sharpness of the coefficient $4$ under second moments]
\label{prop:sharp_4_app}
Fix $d_{ij}>0$ and $\tilde V_{ij}>0$.
Among all real random variables $X$ satisfying $\EE[X]=0$ and $\Var(X)=d_{ij}^2\tilde V_{ij}$,
the maximal value of $\Pr(X\ge d_{ij}/2)$ equals $\frac{4\tilde V_{ij}}{1+4\tilde V_{ij}}$.
\end{proposition}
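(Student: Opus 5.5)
The statement is a sup over a moment class, so the plan has two directions: an upper bound valid for every admissible $X$, and an explicit $X$ that attains it. Throughout, write $t:=d_{ij}/2$ and $\sigma^2:=d_{ij}^2\tilde V_{ij}$. Then
\[
\frac{\sigma^2}{\sigma^2+t^2}=\frac{d_{ij}^2\tilde V_{ij}}{d_{ij}^2\tilde V_{ij}+d_{ij}^2/4}=\frac{4\tilde V_{ij}}{1+4\tilde V_{ij}}.
\]
So the claim is exactly that $\sup\Pr(X\ge t)=\sigma^2/(\sigma^2+t^2)$ over mean-zero $X$ with variance $\sigma^2$.

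\textbf{Upper bound.} This is the first half of Lem.~\ref{lem:cantelli_app}, applied as in \eqref{eq:cantelli_bound_tildeV_app}. For completeness I would prove it directly. For any $s\ge 0$ we have
\[
\Pr(X\ge t)\le\Pr\big((X+s)^2\ge (t+s)^2\big)\le\frac{\sigma^2+s^2}{(t+s)^2},
\]
using $\EE[X]=0$ to expand $\EE[(X+s)^2]=\sigma^2+s^2$, together with Markov's inequality. Choosing $s=\sigma^2/t$ minimizes the right-hand side, and a one-line simplification gives $\sigma^2/(\sigma^2+t^2)$.

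\textbf{Attainment.} I would use a two-point law on $\{-a,t\}$ with $\Pr(X=t)=p$ and $\Pr(X=-a)=1-p$.
\begin{itemize}
\item The mean-zero condition $pt=(1-p)a$ gives $a=pt/(1-p)$.
\item The variance is then $pt^2+(1-p)a^2=pt^2/(1-p)$.
\item Setting this equal to $\sigma^2$ and solving gives $p=\sigma^2/(\sigma^2+t^2)\in(0,1)$, and hence $a=\sigma^2/t>0$.
\end{itemize}
For this law, $\Pr(X\ge t)=p$, which is exactly the Cantelli value. So the supremum is attained, i.e.\ it is a maximum.

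\textbf{Realizing it in the NCC setting (optional).} To tie this back to NCC, I would take $z_i=\mu_i+Xu_{ij}$ with $X$ as above. This has the prescribed on-axis variance, and by \eqref{eq:pairwise_tail_event_app} its pairwise error equals $4\tilde V_{ij}/(1+4\tilde V_{ij})$.

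The only genuinely delicate point is the choice of the shift $s$ in the upper bound, together with checking that equality in Markov's inequality holds precisely for the two-point law. Everything else is algebra.
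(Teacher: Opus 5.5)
Your proposal is correct and matches the paper's proof. The upper bound is Cantelli's inequality with $t=d_{ij}/2$ and $\sigma^2=d_{ij}^2\tilde V_{ij}$, and attainment uses the same two-point law on $\{-a,t\}$ with $a=\sigma^2/t$ and $p=\sigma^2/(\sigma^2+t^2)$. The only addition is that you prove Cantelli yourself, via Markov's inequality with the optimal shift $s=\sigma^2/t$, whereas the paper just cites it.
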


\begin{proof}
The upper bound is exactly Cantelli's inequality in Lem.~\ref{lem:cantelli_app} applied with
$t=d_{ij}/2$ and $\sigma^2=d_{ij}^2\tilde V_{ij}$, giving
$\Pr(X\ge d_{ij}/2)\le \frac{4\tilde V_{ij}}{1+4\tilde V_{ij}}$.
Tightness follows from the extremal two-point construction in Lem.~\ref{lem:cantelli_app}:
let $X$ take values $t=d_{ij}/2$ and $-a$ with probabilities $p$ and $1-p$, where
$a=\sigma^2/t$ and $p=\sigma^2/(\sigma^2+t^2)$ with $\sigma^2=d_{ij}^2\tilde V_{ij}$.
This choice satisfies $\EE[X]=0$ and $\Var(X)=\sigma^2$, and yields
$\Pr(X\ge t)=p=\sigma^2/(\sigma^2+t^2)=\frac{4\tilde V_{ij}}{1+4\tilde V_{ij}}$.
\end{proof}

As $\tilde V_{ij}\downarrow 0$, the worst-case probability behaves as
$\frac{4\tilde V_{ij}}{1+4\tilde V_{ij}}=4\tilde V_{ij}+o(\tilde V_{ij})$,
so no uniform inequality of the form
$\Pr(X\ge d_{ij}/2)\le c\,\tilde V_{ij}$ can hold for all distributions with $c<4$.

\subsection{Connection to the finite-shot denominator $\Denij$}

In the finite-shot setting, $\EE[\Delta]=d_{ij}^2+\frac{v_j-v_i}{m}=d_{ij}^2(1+\alpha_{ij})$ where
$\alpha_{ij}:=\frac{v_j-v_i}{m d_{ij}^2}$, and our bounds normalize by $\Denij=(1+\alpha_{ij})^2$.
Heuristically, the effective threshold in the one-dimensional reduction is rescaled from $d_{ij}/2$ to
$\frac{d_{ij}}{2}(1+\alpha_{ij})$, so the second-moment-optimal leading coefficient scales as
\[
\frac{4}{(1+\alpha_{ij})^2}
=
\frac{4}{\Denij},
\]
matching the leading term in Theorems~\ref{thm:eq-wts} and~\ref{thm:ncc-full-optimized}.

\end{document}